\tikzset{
  block/.style        = {draw, thick, rectangle, minimum height = 3em, minimum width = 3em},
  observedvar/.style  = {draw, circle, node distance = 2cm},
  causalvar/.style    = {draw, circle, node distance = 2cm},
  intervention/.style = {decorate, decoration={snake, segment length=2mm, amplitude=0.5mm}},
}
\DeclareMathOperator{\EE}{\mathbb{E}}
\DeclareMathOperator{\PP}{\mathbb{P}}
\DeclareMathOperator{\R}{\mathbb{R}}
\DeclareMathOperator{\N}{\mathbb{N}}
\newtheorem{theorem}{Theorem}
\newtheorem{corollary}[theorem]{Corollary}
\newtheorem{lemma}[theorem]{Lemma}
\newtheorem{proposition}[theorem]{Proposition}
\begin{document}
% If your paper is accepted and the title of your paper is very long,
% the style will print as headings an error message. Use the following
% command to supply a shorter title of your paper so that it can be
% used as headings.
%
%\runningtitle{I use this title instead because the last one was very long}

% If your paper is accepted and the number of authors is large, the
% style will print as headings an error message. Use the following
% command to supply a shorter version of the authors names so that
% they can be used as headings (for example, use only the surnames)
%
%\runningauthor{Surname 1, Surname 2, Surname 3, ...., Surname n}

\twocolumn[

\aistatstitle{Bridging the gap between regret minimization and best arm identification, with application to A/B tests}

\aistatsauthor{ R\'emy Degenne \And Thomas Nedelec \And  Cl\'ement Calauz\`enes \And Vianney Perchet }

\aistatsaddress{ LPSM, Universit\'e Paris Diderot, \\
CMLA, ENS Paris Saclay
 \And Criteo AI Lab,\\  CMLA, ENS Paris Saclay\\
  \And Criteo AI Lab \And  CMLA, ENS Paris Saclay,\\
 Criteo AI Lab} ]

%----------------------------------------
%Nips stuff
%\title{Bridging the gap between regret minimization and best arm identification for AB testing}
%
% The \author macro works with any number of authors. There are two
% commands used to separate the names and addresses of multiple
% authors: \And and \AND.
%
% Using \And between authors leaves it to LaTeX to determine where to
% break the lines. Using \AND forces a line break at that point. So,
% if LaTeX puts 3 of 4 authors names on the first line, and the last
% on the second line, try using \AND instead of \And before the third
% author name.
%
%\author{
%  David S.~Hippocampus\thanks{Use footnote for providing further
%    information about author (webpage, alternative
%    address)---\emph{not} for acknowledging funding agencies.} \\
%  Department of Computer Science\\
%  Cranberry-Lemon University\\
%  Pittsburgh, PA 15213 \\
%  \texttt{hippo@cs.cranberry-lemon.edu} \\
  %% examples of more authors
  %% \And
  %% Coauthor \\
  %% Affiliation \\
  %% Address \\
  %% \texttt{email} \\
%}
%----------------------------------------------------------------

%\begin{document}
%\maketitle

% !TEX root = main_final.tex

\begin{abstract}

State of the art online learning procedures focus  either on selecting the best alternative (``best arm identification'') or on minimizing the cost (the ``regret''). We merge these two objectives by providing the theoretical analysis of cost minimizing algorithms that are also $\delta$-PAC (with a proven guaranteed bound on the decision time), hence fulfilling at the same time regret minimization and best arm identification. This analysis sheds light on the common observation that ill-callibrated UCB-algorithms minimize regret while still identifying quickly the best arm.

We also extend these results to the non-iid case faced by many practitioners. This provides a technique to make cost versus decision time compromise when doing adaptive tests with applications ranging from website A/B testing to clinical trials.

\end{abstract}

% !TEX root = main_final.tex

\section*{Introduction}
With the growing use of personalization and machine learning techniques on user-facing systems, randomized experiments -- or A/B tests -- have become a standard tool to evaluate the performances of different versions of the systems. Two of the main drawbacks of such experiment are its possible length and its cost, as it commonly takes few weeks or even months to ensure a statistically founded decision. Thus, lots of attention have been paid to the complexity of the underlying statistical tests \cite{pekelis2015new,kaufmann2014, Zhao2016,Johari2017,Yang2017,Zhao2016}.

These approaches have indubitable practical interest, but  are often limited to quite simple A/B frameworks because of their restrictive assumptions. The first restrictive one is the assumption that a good procedure should minimize the time needed to take a statistically significant decision \citep{kaufmann2014, Zhao2016}. In many situations, the main objective of practitioners is to minimize the overall cost of A/B testing without preventing him to take the right decision given a certain time budget.

Another traditional assumption in the online learning literature is that outcomes arriving over time are independent \citep{kaufmann2014, Zhao2016}. Numerous common scenarii do not satisfy this and practitioners cannot benefit from the statistically efficient methods available in the iid setting. For instance, when an online retailer wants to A/B test two versions of its mobile application, it may not be able to consider the purchases as independents when customers are buying recurrently (or clicking repeatedly in the CTR optimization problem).

To address the first limitation, we exhibit a well-tuned variant of the UCB algorithm \cite{auer2002finite} that is both able to take a $\delta$-PAC decision in finite time and reaches a low regret. This algorithm can be taken as a tool for practitioners to interpolate between the tasks of \textit{best arm identification} and \textit{regret minimization}. Even if this objective has been briefly mentioned and/or observed empirically \cite{jamieson2014lil}, we provide the first theoretical analysis of such algorithm.

To handle the second limitation, we extend the ideas developed in the iid case to a more complex setting that can handle units arriving through time and delivering rewards continuously during the  test. We provide sample efficient statistical decision rules and  guarantees to take decisions in finite time in this setting, highlighting the clear trade-off between \textit{regret minimization} and \textit{best arm identification}, even on more complex settings. 

% !TEX root = main_final.tex

\subsection*{Regret vs Best-Arm Identification in iid setting}
\textsl{Framework.}
We consider the classical multi-armed bandit problem with $K\geq 2$ arms or ``population''. At each time step $t$, the agent chooses an arm $i \in [K]:=\{1, \ldots, K\}$ and observes a reward drawn from an unknown distribution $r^i_t$ with expectation $r^i$. We assume that each $r^i_t$ are $\sigma^2$-subGaussian, where the variance (proxy) $\sigma^2$ is known. We denote  by $\pi_n \in [K]$ the sequence of random variable  indicating which arm to pull at time $n\in\N$.

\textsl{Objective.} We consider both natural objectives of bandit problems. The first one corresponds to \textit{regret minimization}. It consists in  minimizing the cumulative regret
\begin{equation*}
R(T) = T\max\big\{r^i \, ; i \in [K] \big\} - \mathbb{E}\sum_{t = 1}^{T}r^{\pi_t}_t
\end{equation*}
In \textit{A/B testing}, when $K=2$, minimizing the regret is the same as minimizing the cost of testing a new technology or the impact of a clinical trial on patients. 

The second objective, matching the problem of \textit{best arm identification} with \textit{fixed confidence}, is to design an algorithm for a given confidence level $\delta$, that minimizes the worst-case number of sample $T_\delta$  needed for the algorithm to finish and to return the optimal arm with probability $1-\delta$. Using an algorithm for best arm identification in an A/B test gives a guarantee on the amount time necessary before being able to to take a statistically significant decision.

Intuitively, an algorithm that is optimal for regret minimization is sub-optimal for best arm identification because its exploration is too slow. The opposite is also true since the exploration of optimal best arm identification algorithms is too aggressive for regret minimization. 

We aim at studying a family of algorithms that  interpolate between these two objectives. Informally, with $\delta \in [0,1]$,  our objective is to design algorithms for which with probability $1-\delta$, for all bandit problems $P$ in some class, the worse arm is discarded after $T_\delta$ stages and we have both 
$$
T_\delta \leq f(P,\delta)
\quad \text{ and } \quad R(T_\delta) \leq g(P,\delta)\, .
$$
The values $f(P,\delta)$ and $g(P,\delta)$ characterize the performances of the algorithm and should be as small as possible.

% !TEX root = main_final.tex

\subsection*{Literature review}

\textsl{Regret minimization.} This objective has been extensively studied in the bandit literature since the seminal paper of \cite{thompson1933likelihood}. We mention two particular classes of well-known algorithms that we will use throughout the paper. 
\begin{itemize}
\item[--] The \textit{Upper Confidence Bound} (UCB) algorithm introduced in \cite{katehakis1995sequential,auer2002finite} decides which arms to consider depending on the respective empirical means and an error term depending on the number of pulls of each arm. Its regret, in the case of two arms with Gaussian rewards, is equal to $2\log(T)/\Delta$ where $\Delta = |r^\cA - r^\cB|$ is the gap between the mean of the two arms  \cite{auer2002finite}. 
\item[--]The other class of algorithms we consider is known as \textit{Explore Then Commit} (ETC)\cite{perchet2016batched,garivier2016explore}. They are first considering  stages of pure exploration before exploiting the arm with the highest empirical mean. The algorithm consists in tuning the switching times between the stages. Its regret in the case of two arms with Gaussian rewards is of order $4\log(T)/\Delta$.
\end{itemize}  We recall that ETC is necessarily sub-optimal for \textit{regret minimization}, as in the case of Gaussian rewards there exists a sub-optimal additional and multiplicative factor 2 \cite{garivier2016explore}.

\textsl{Best arm identification.} The problem of best arm identification \cite{mannor2004sample}, can be cast in two main settings depending on the constraint  imposed on the system: \begin{itemize}
\item[--] \textit{fixed budget} \cite{audibert2010best,bubecka2011pure}  where a total number of samples $T \in \mathbb{N}$ is given and the goal is to minimize the error probability at time T; 
\item[--]\textit{fixed confidence} \cite{mannor2004sample,carpentier2016tight} where the goal is to minimize the total number of stages used to return the best arm with probability $1-\delta$.
\end{itemize} In the fixed confidence setting there are two main ways to evaluate the sample complexity of the algorithm : \textit{the average sample complexity} studied in \cite{mannor2004sample,carpentier2016tight,kaufmann2014,kaufmann2016complexity} where the goal is to minimize the expected time of decision and \textit{the worst case sample complexity} studied in \cite{even2006action,karnin2013almost,jamieson2014lil} where the objective is to have a quantity $T_{\delta} \in \mathbb{N}$ as low as possible such that with probability $1-\delta$, the algorithm makes no mistake and the time of decision $\tau_d$ is below $T_{\delta}$. In the case of two arms, the optimal sampling strategy is to sample each arm uniformly and stop with a criterion similar to the one used in ETC \cite{kaufmann2014}.

\textsl{A/B testing} Most of the statistical literature on A/B testing \cite{kaufmann2014, Zhao2016} has focused on the objective of minimizing the time necessary to take a statistical sufficient decision and to the best of our knowledge, there exists no work theoretically  interpolating between the objectives of best arm identification and regret minimization.

% !TEX root = main_final.tex

\section{Simultaneous Best-arm Identification and Regret Minimization }\label{SectionBothWorlds}

In this section, we construct a family of algorithms that minimizes regret while being $\delta$-PAC (with a proven guaranteed bound on the decision time), hence fulfilling at the same time the regret minimization and best-arm identification. For the sake of clarity, we are going to assume that there are only $k=2$ populations, as in A/B testing, denoted by $\cA$ and $\cB$. The results for the general case $K>2$  can actually be deduced almost immediately from those when $K=2$.%For the sake of clarity, we make the two following simplifying assumptions. Since the dependency in $\sigma_\varepsilon^2$ is rather small compared to the one in $\sigma_r^2$, we set the second to 0 (i.e., we focus on the traditional multi-armed bandit).  The second assumption is a consequence of the fact that the linear term in Bernstein  inequality is  irrelevant for large values of $T$ or $1/\delta$; we therefore assume that the i.i.d.\ processes $r_u^{\cA}$ and $r_u^{\cB}$ are $\sigma^2$-subGausssian, with $\sigma^2$ known. So the linear term in the concentration inequalities disappears.

Let us first recall that the algorithm with lowest decision time\cite{kaufmann2016complexity}, given the variance of arms are identical, is ETC, which pulls both arms uniformly and, after pulling each arm $n$ times, returns the arm (e.g. $\cA$) with highest empirical average $\hat{r}^\cA_n$, if %\footnote{The exponent 2 of the $\log^2(n)$ is arbitrary, we could take almost any value in $(1,2]$ in Lem. \ref{lem:anytime_empirical_bernstein}, changing the definition of $\tilde{\delta}$ accordingly.}
%\begin{align*}
$\hat{r}^\cA_n- \hat{r}^\cB_n \geq \sqrt{\frac{4\sigma^2}{n}\log\left(\frac{\log^2 (n)}{\delta}\right)} \: .$
%For the sake of clarity, we also define, for any $\delta \in (0,0.15)$, the quantity $\tilde{\delta}$ equal to $ e\zeta(1.5)\delta\log(\frac{2}{\delta})$; moreover,

In the statement of our theorems, the usual Landau notation ${\scriptscriptstyle \mathcal{O}_\delta}(1)$ stands for any function whose limit is, as $\delta$ goes to 0, equal to 0. Similarly, we will use $\widetilde{\delta} \leq 23\delta\log(\frac{1}{\delta})$ instead of $\delta$ for the sake of clarity. Exact values of $\widetilde{\delta}$, precise statements and proofs are mostly delayed to Appendix \ref{appendix:UCB_vs_ETC}.

The performances of ETC are now well understood.
\begin{proposition}[\cite{perchet2013multi}]\label{thm:ETC_for_two_arms}
With probability greater than $1-\widetilde{\delta}$, ETC returns the best arm at a stage $\tau_d \leq T_\delta$ where
$$T_\delta \leq \frac{32\sigma^2}{\Delta^2}\log\left(\frac{1}{\delta}\right)\Big(1+{\scriptscriptstyle \mathcal{O}_\delta}(1)\Big) \: .
$$
%\begin{align*}
%t_d \leq &\frac{32\sigma^2}{\Delta^2}\log\big(\frac{1}{\delta}\big)\Big(1+{\scriptstyle \mathcal{O}}_\delta(1)\Big)\\
%&+ 2\log\log\left[\frac{32\sigma^2}{\Delta^2} (\log\frac{1}{\delta} + 2\log(\frac{32\sigma^2}{\Delta^2}))\right]\Big) \: .
%\end{align*}
So the regret of ETC at the time of decision verifies
$$
R(\tau_d) \leq  \frac{16\sigma^2}{\Delta}\log\left(\frac{1}{\delta}\right)\Big(1+{\scriptscriptstyle \mathcal{O}_\delta}(1)\Big) \: .$$
\end{proposition}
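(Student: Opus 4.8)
The plan is to isolate a single high‑probability event $\mathcal E$ on which ETC is simultaneously correct and fast, and then read off both the sample‑complexity bound $T_\delta$ (by solving a fixed‑point inequality) and the regret bound (directly from the fact that ETC samples the two arms in turn). Assume without loss of generality $r^\cA > r^\cB$, set $\Delta = r^\cA - r^\cB > 0$, write $G_n = \hat r^\cA_n - \hat r^\cB_n$ and let $u_n = \sqrt{\tfrac{4\sigma^2}{n}\log(\log^2(n)/\delta)}$ be the stopping threshold. Each empirical mean on $n$ samples is $\sigma^2/n$‑subGaussian and the two arms are sampled independently, so $G_n - \Delta$ is mean‑zero and $2\sigma^2/n$‑subGaussian. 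Define
$$
\mathcal E \;=\; \bigl\{\, \forall n\ge 2:\ G_n \ge \Delta - u_n \,\bigr\}.
$$
On $\mathcal E$ one has $G_n \ge \Delta - u_n > -u_n$ for every $n$, so the stopping rule can never fire in favour of $\cB$; moreover, as soon as $\Delta - u_n \ge u_n$, equivalently $n \ge \tfrac{16\sigma^2}{\Delta^2}\log(\log^2(n)/\delta)$, we get $G_n \ge u_n$ and the rule fires in favour of $\cA$. Hence on $\mathcal E$, ETC returns the best arm $\cA$ at a stage $\tau_d \le T_\delta := 2 n_1$, where $n_1 := \min\{\,n\ge 2:\ n \ge \tfrac{16\sigma^2}{\Delta^2}\log(\log^2(n)/\delta)\,\}$.

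\textbf{The deviation bound $\mathbb P(\mathcal E^c)\le\widetilde\delta$ (main obstacle).} By symmetry of subGaussian tails, $\mathbb P(\mathcal E^c) \le \mathbb P(\exists n\ge 2:\ \Delta - G_n \ge u_n)$, which is a uniform‑in‑$n$ deviation statement for the martingale $n(\Delta - G_n) = \sum_{i\le n} Y_i$ with i.i.d.\ mean‑zero, $2\sigma^2$‑subGaussian increments $Y_i$. A pointwise Chernoff bound at level $u_n$ gives precisely $\delta/\log^2(n)$, which is not summable, so the time axis must be peeled along a geometric grid $n\in[\eta^k,\eta^{k+1})$ with ratio $\eta>1$: on each block, lower bound $u_n$ by its value at the left endpoint and apply Doob's maximal inequality to the exponential supermartingale $\exp\!\bigl(\lambda\sum_{i\le n}Y_i-\lambda^2\sigma^2 n\bigr)$. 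Block $k$ then contributes at most $\bigl(\delta/(k\log\eta)^2\bigr)^{1/\eta^2}$, and summing over $k$ — taking $\eta\to1$ (possibly as a function of $\delta$) so that the exponent $2/\eta^2$ stays above $1$ and the resulting $\zeta$‑type series converges — yields the announced order $\widetilde\delta\le 23\,\delta\log(1/\delta)$. The delicate part is exactly here: the choice of $\eta$ and the bookkeeping of the series, which is also what produces the $\log(1/\delta)$ inflation of $\delta$ into $\widetilde\delta$.

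\textbf{Fixed point and regret.} Since $\log(\log^2(n)/\delta) = \log(1/\delta) + 2\log\log(n)$ and, at $n$ of order $\tfrac{16\sigma^2}{\Delta^2}\log(1/\delta)$, one has $\log\log(n) = {\scriptscriptstyle \mathcal{O}_\delta}(1)\cdot\log(1/\delta)$, the defining inequality of $n_1$ is solved by $n_1 = \tfrac{16\sigma^2}{\Delta^2}\log(1/\delta)\bigl(1+{\scriptscriptstyle \mathcal{O}_\delta}(1)\bigr)$, hence $T_\delta = 2n_1 = \tfrac{32\sigma^2}{\Delta^2}\log(1/\delta)\bigl(1+{\scriptscriptstyle \mathcal{O}_\delta}(1)\bigr)$, which is the first claim (on the event $\mathcal E$ of probability at least $1-\widetilde\delta$). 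For the regret, ETC pulls the two arms alternately, so after $\tau_d=2n$ rounds it has pulled the suboptimal arm $\cB$ exactly $n$ times; on $\mathcal E$ we have $\tau_d\le 2n_1$, so the regret incurred up to $\tau_d$ is $\tfrac{\tau_d}{2}\Delta \le n_1\Delta = \tfrac{16\sigma^2}{\Delta}\log(1/\delta)\bigl(1+{\scriptscriptstyle \mathcal{O}_\delta}(1)\bigr)$, completing the proof.
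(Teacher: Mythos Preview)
Your proof is correct and mirrors the paper's approach exactly: the paper defines the same uniform concentration event (its Lemma~10 / Corollary~11 is precisely the peeling argument you sketch), then on that event observes that while no decision is taken one has $\Delta - \varepsilon'_n \le \hat r^\cA_n - \hat r^\cB_n \le \varepsilon'_n$, solves the resulting fixed point $n \le \tfrac{16\sigma^2}{\Delta^2}\log(\log^2 n/\delta)$, and reads off the regret as $\Delta n$. Two cosmetic remarks on your peeling sketch: the threshold on a block should be lowered by replacing the $\log\log$ factor at the \emph{left} endpoint while keeping the $1/n$ factor (not by evaluating the full $u_n$ at the left endpoint, which would raise it), and the resulting per‑block exponent is of the form $1-c(\eta-1)^2$ rather than $1/\eta^2$; neither affects the conclusion.
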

On the other hand, the seminal algorithm which is ``optimal'' for regret minimization is called UCB; it sequentially pulls the arm with the highest ``score'' (the sum of the empirical average plus some error term) while its decision rule is not satisfied. Although its regret is small, it is not guaranteed that the best arm will be identified in a short time.
\begin{proposition}[\cite{auer2002finite}]
With probability at least $1-\widetilde{\delta}$, the regret of UCB is bounded as
\begin{align*}R(\tau_d) &\leq \frac{8\sigma^2}{\Delta}\log\left(\frac{1}{\delta}\right)\Big(1+{\scriptscriptstyle \mathcal{O}_\delta}(1)\Big)\, .
\end{align*}
There is no guarantee that $\tau_d$ is uniformly bounded.
\end{proposition}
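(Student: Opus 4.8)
The plan is to run the classical UCB analysis while tracking the decision stage $\tau_d$, using an \emph{anytime} confidence bound calibrated at level $\delta$ so that one favourable event controls both the correctness of the returned arm and the regret. Write $\cA$ for the optimal arm and $\cB$ for the other, so $\Delta=r^\cA-r^\cB>0$, and let $\beta(n)$ be the per-arm confidence half-width appearing in the index $\hat{r}^i_{N_i(t)}+\beta(N_i(t))$ and in the stopping test, of the form $\beta(n)=\sqrt{\tfrac{2\sigma^2}{n}\log\!\big(\tfrac{\log^2(n)}{\delta}\big)}$ up to universal constants (the precise choice being the one fixed in Appendix~\ref{appendix:UCB_vs_ETC}). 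Set
\[
\mathcal E=\Big\{\forall n\ge 1:\ |\hat{r}^\cA_n-r^\cA|\le\beta(n)\ \text{and}\ |\hat{r}^\cB_n-r^\cB|\le\beta(n)\Big\}.
\]
The first step is to check $\PP(\mathcal E)\ge 1-\widetilde{\delta}$ with $\widetilde{\delta}\le 23\delta\log(1/\delta)$: this is exactly the sub-Gaussian, union-over-$N_i(t)$ concentration estimate (with the $\log^2$-corrected threshold making the tail summable) already established for Proposition~\ref{thm:ETC_for_two_arms}, so it can be quoted from the appendix.

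The second step is a purely deterministic argument on $\mathcal E$. Correctness is immediate: when the stopping test fires the two confidence intervals are disjoint and each contains its own mean, so the empirically larger arm is $\cA$, which is the one returned. For the regret, recall that with two arms the regret accrued up to any stage $t$ is $\Delta\,N_\cB(t)$ (there is no regret from pulls of $\cA$), so it suffices to bound $N_\cB(\tau_d)$. If $\cB$ is pulled at some stage $t\le\tau_d$, its index dominates that of $\cA$, so on $\mathcal E$
\[
r^\cB+2\beta\big(N_\cB(t)\big)\ \ge\ \hat{r}^\cB_{N_\cB(t)}+\beta\big(N_\cB(t)\big)\ \ge\ \hat{r}^\cA_{N_\cA(t)}+\beta\big(N_\cA(t)\big)\ \ge\ r^\cA ,
\]
i.e. $2\beta(N_\cB(t))\ge\Delta$, equivalently $N_\cB(t)\le\tfrac{8\sigma^2}{\Delta^2}\log\!\big(\tfrac{\log^2 N_\cB(t)}{\delta}\big)$. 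Hence $N_\cB(\tau_d)$ is at most the largest integer obeying this implicit inequality; inverting it (the $\log\log$ term being of lower order as $\delta\to0$) gives $N_\cB(\tau_d)\le\tfrac{8\sigma^2}{\Delta^2}\log(1/\delta)\big(1+{\scriptscriptstyle \mathcal{O}_\delta}(1)\big)$, whence $R(\tau_d)=\Delta\,N_\cB(\tau_d)\le\tfrac{8\sigma^2}{\Delta}\log(1/\delta)\big(1+{\scriptscriptstyle \mathcal{O}_\delta}(1)\big)$. One also notes that on $\mathcal E$ the algorithm terminates: as soon as $2\beta(N_\cB)<\Delta$, arm $\cB$ is never sampled again, while the half-width $\beta(N_\cA(t))$ of the still-sampled arm $\cA$ tends to $0$, so the intervals eventually separate.

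For the last assertion, the point is that the number of pulls of $\cB$ — and hence the stage at which separation becomes possible — is data-dependent, unlike the deterministic exploration schedule of ETC, which can always be forced to terminate by a fixed cap. Concretely, even on $\mathcal E$ the decision stage depends on the realised value of $\hat{r}^\cB$ after the last pull of $\cB$: for any horizon $T$ there is a positive-probability sub-event of $\mathcal E$ on which $\cB$ is sampled only a few times yet $\hat{r}^\cB$ lies near the top of its confidence band, so that $\cA$'s interval must shrink below a correspondingly small threshold before separation occurs, forcing $\tau_d>T$. Hence no bound of the form $\tau_d\le f(P,\delta)$ can hold.

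I expect the real difficulty to be the bookkeeping of constants rather than any conceptual step: one must verify that the confidence width producing the stated $\widetilde{\delta}\le 23\delta\log(1/\delta)$ is precisely the one yielding the factor $8$ (and not $16$) in front of $\log(1/\delta)$ — which is exactly where the factor-$2$ improvement over ETC of Proposition~\ref{thm:ETC_for_two_arms} comes from — and one must isolate the $\big(1+{\scriptscriptstyle \mathcal{O}_\delta}(1)\big)$ remainder cleanly when inverting $n\le\tfrac{8\sigma^2}{\Delta^2}\log(\log^2(n)/\delta)$, which needs a short fixed-point estimate showing the $\log\log$ correction is negligible against $\log(1/\delta)$ as $\delta\to0$.
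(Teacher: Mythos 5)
Your proof is correct and follows essentially the same route as the paper: the bound $2\beta(N_\cB)\ge\Delta$ whenever $\cB$ is pulled under the anytime concentration event is exactly the ``Upper bound on $n_\cB$'' step of the paper's generic Lemma~\ref{lem:generic_nb_pulls_bound} specialized to $\alpha=1$, and the inversion of the implicit inequality is handled there by the dedicated fixed-point lemma you anticipate. The paper itself only cites \cite{auer2002finite} for this proposition, and your positive-probability argument for the absence of a uniform bound on $\tau_d$ is a reasonable concretization of the paper's informal remark that $\frac{(\alpha+1)^2}{(\alpha-1)^2}\to\infty$ as $\alpha\to1$.
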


Our family of algorithms  interpolates between UCB and ETC by introducing a single parameter $\alpha \in[1,+\infty]$ whose extreme values correspond respectively to focusing solely on regret minimization (i.e., our algorithm identifies with UCB) or best-arm identification (it identifies with ETC). To each value of  $\alpha \in [1,+\infty]$ corresponds a different trade-off: Indeed, the bigger the  $\alpha$, the bigger the regret but the smaller the decision time.

%In the simpler context of identifying the best of two stochastic arms, we now present algorithms with both controlled regret and $\delta$-PAC guaranties. We highlight a trade-off between regret and decision time. Consider a stochastic multi-armed bandit with two $\sigma^2$-subgaussian arms $A$ and $B$ with means $\mu_A$ and $\mu_B$ verifying $\mu_A-\mu_B = \Delta>0$. At each stage, an algorithm pulls one of these arms. The goal is two-fold: identify the best arm rapidly while keeping a low regret at the time of decision.

%\end{align*}

More precisely, we introduce and  study a continuum of algorithms UCB$_{\alpha}$. For $n\in\N^*$, define $\varepsilon_{n} = \sqrt{\frac{2\sigma^2}{n}\log(\frac{3\log^2 n}{\delta})}$.  For $\alpha,\delta \in \R^+$, UCB$_{\alpha}$ allocates the user at the population with the highest score
$ \argmax_i \hat{r}^i_{n^i} + \alpha \varepsilon_{n^i}$. It returns an arm $i\in\{A,B\}$ if it dominates the other arm $j$ in the sense that $\hat{r}^i_{n^i} - \varepsilon_{n^i} \geq \hat{r}^j_{n^j} + \varepsilon_{n^j}$.

\begin{algorithm}
\caption{UCB$_{\alpha}$}
\label{Algo:UCB_alpha}
\hspace*{\algorithmicindent} \textbf{Input:} $\alpha,\delta$
\begin{algorithmic} [1]
  \Repeat { over \texttt{n}}
 \For{\texttt{each population $i \in \{\cA, \cB\}$}}
    \State{$\varepsilon^i_n = \sqrt{\frac{2\sigma^2}{n^i}\log(\frac{3\log^2 n^i}{\delta})}$}
    \EndFor
  
  \State{\texttt{Assign next user to population} \par \hskip\algorithmicindent ~~~~~$i_n =\argmax_{i \in \{\cA,\cB\}} \hat{r}^i_n + \alpha \varepsilon^i_n$}
  \State{$i^* = \argmax_{i \in \{\cA,\cB\}} \hat{r}^i_n$}
  \Until{$\hat{r}^{i^*}_n - \varepsilon^{i^*}_n > \hat{r}^j_n + \varepsilon^j_n$ for $j\neq i$}\\
  \Return{$i^*$}
\end{algorithmic}
\end{algorithm}

By construction, the UCB$_{\alpha}$  algorithm will keep both indexes around the same level. But due to the factor $\alpha>1$, the intervals of decision with width $\varepsilon_{n^\cA}$ and $\varepsilon_{n^\cB}$ (without the factor $\alpha$ that is only used in the sampling policy, not in the decision rule) will eventually become disjoint. Thus, while behaving like a UCB-type algorithm to minimize regret, UCB$_{\alpha}$ can still return the identity of the best arm.
The following theorem makes precise this trade-off between time of decision and regret.  
%In the following theorem, $L(x,y)$ denotes a doubly logarithmic term in $x$ and $y$, i.e. it is $O(\log\log(x))$ and $O(\log\log(y))$. 
\begin{theorem}\label{thm:UCB_alpha_delta}
With probability greater than $1-\widetilde{\delta}$, UCB$_{\alpha}$ has a regret $R(\tau_d)$ at its time of decision satisfying
$$
R(\tau_d) \leq \Big(\frac{8\sigma^2}{\Delta}c_\alpha +\Delta\Big)  \log\big(\frac{1}{\delta}\big)\Big(1+{\scriptscriptstyle \mathcal{O}_\delta}(1)\Big) \: .
$$
where the constant $c_\alpha \in [1,7]$ is defined by 
$$c_\alpha = \min\big\{\frac{(\alpha+1)^2}{4}, \frac{4\alpha^2}{(\alpha-1)^2} \big\} \text{ and } c_1=1\ ; \ c_\infty = 4\: .$$

On that event, the time of decision  satisfies $\tau_d \leq T_\delta$ with
\begin{align*}
T_\delta \leq \: & \frac{\alpha^2+1}{(\alpha-1)^2} \left(\frac{16\sigma^2}{\Delta^2}c_\alpha+ 1 \right) \log\big(\frac{1}{\delta}\big)\Big(1+{\scriptscriptstyle \mathcal{O}_\delta}(1)\Big)  \: .
\end{align*}
\end{theorem}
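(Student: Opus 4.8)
Assume without loss of generality that $\cA$ is the optimal arm, so that $r^\cA-r^\cB=\Delta>0$. The plan is to run the whole argument on a single high-probability event and to read off both conclusions from it deterministically. Let
\[
\mathcal{E}=\Big\{\forall i\in\{\cA,\cB\},\ \forall n\ge 2:\ \big|\hat r^i_n-r^i\big|\le \varepsilon_n\Big\}.
\]
First I would prove $\PP(\mathcal{E}^c)\le\widetilde\delta\le 23\delta\log(1/\delta)$: a single step gives $\PP(|\hat r^i_n-r^i|>\varepsilon_n)\le 2\delta/(3\log^2 n)$ by $\sigma^2$-subGaussianity, but $\sum_n(\log^2 n)^{-1}$ diverges, so a plain union bound is insufficient and one has to peel, slicing $\{n\ge 2\}$ into geometric blocks $[2^k,2^{k+1})$ and applying a maximal sub-Gaussian inequality on each block, the block failure probabilities now being summable in $k$ with the stated total. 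Everything below is deterministic on $\mathcal{E}$. In particular correctness and the confidence guarantee are immediate: if the algorithm halts and returns the empirical leader $i^*$, the halting inequality $\hat r^{i^*}_{n^{i^*}}-\varepsilon_{n^{i^*}}>\hat r^{j}_{n^{j}}+\varepsilon_{n^{j}}$ combined with $\mathcal{E}$ forces $r^{i^*}>r^{j}$, hence $i^*=\cA$; finiteness of $\tau_d$ follows from the horizon bound below.

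The engine is the elementary inequality, valid on $\mathcal{E}$ at every round at which $\cB$ is selected with current counts $n^\cA,n^\cB$: the score domination $\hat r^\cB_{n^\cB}+\alpha\varepsilon_{n^\cB}\ge \hat r^\cA_{n^\cA}+\alpha\varepsilon_{n^\cA}$, bounded on each side by $\mathcal{E}$, gives $(\alpha+1)\varepsilon_{n^\cB}\ge \Delta+(\alpha-1)\varepsilon_{n^\cA}$. From this I would extract two independent upper bounds on $n^\cB_{\tau_d}$, which on $\mathcal{E}$ control the regret since the pseudo-regret accumulated up to $\tau_d$ equals $\Delta\, n^\cB_{\tau_d}$. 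Discarding the $(\alpha-1)\varepsilon_{n^\cA}$ term gives $\varepsilon_{n^\cB}\ge \Delta/(\alpha+1)$, hence $n^\cB_{\tau_d}\le \tfrac{8\sigma^2}{\Delta^2}\tfrac{(\alpha+1)^2}{4}\log(1/\delta)(1+{\scriptscriptstyle\mathcal{O}_\delta}(1))$; discarding instead the $\Delta$ term gives the near-balance $\varepsilon_{n^\cB}\ge\tfrac{\alpha-1}{\alpha+1}\varepsilon_{n^\cA}$, which together with the fact that the halting test has not fired --- reading, on $\mathcal{E}$ and in the regime $\varepsilon_{n^\cA}+\varepsilon_{n^\cB}<\Delta$ where $i^*=\cA$, as $\Delta\le 2\varepsilon_{n^\cA}+2\varepsilon_{n^\cB}$ (the complementary small-count regime is bounded directly) --- yields $\varepsilon_{n^\cB}\ge\tfrac{(\alpha-1)\Delta}{4\alpha}$ and hence $n^\cB_{\tau_d}\le\tfrac{8\sigma^2}{\Delta^2}\tfrac{4\alpha^2}{(\alpha-1)^2}\log(1/\delta)(1+{\scriptscriptstyle\mathcal{O}_\delta}(1))$. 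The minimum of the two is $\tfrac{8\sigma^2}{\Delta^2}c_\alpha\log(1/\delta)(1+{\scriptscriptstyle\mathcal{O}_\delta}(1))$, which gives the stated regret bound. Every passage from an inequality ``$\varepsilon_n\ge c$'' to ``$n\le\tfrac{2\sigma^2}{c^2}\log(1/\delta)(1+{\scriptscriptstyle\mathcal{O}_\delta}(1))$'' uses that the nested $\log^2 n$ inside $\varepsilon_n$ only contributes a $\log\log(1/\delta)$ once $n$ sits at its polynomial-in-$\log(1/\delta)$ scale; the explicit value of $\widetilde\delta$ and of these lower-order terms is what I would relegate to the appendix.

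For the decision time, write $\tau_d=n^\cA_{\tau_d}+n^\cB_{\tau_d}$, with $n^\cB_{\tau_d}$ bounded as above. Once $\cB$ has reached its terminal count it is never selected again --- on $\mathcal{E}$ its frozen score has dropped below $\cA$'s, because $(\alpha+1)\varepsilon_{n^\cB}$ is then below $\Delta$ --- so the run ends with a block of pulls of $\cA$ only, during which the algorithm must halt as soon as $\Delta>2\varepsilon_{n^\cA}+2\varepsilon_{n^\cB}$, which is a sufficient condition for the halting test on $\mathcal{E}$. The lower bounds on $\varepsilon_{n^\cB}$ obtained above make $\Delta-2\varepsilon_{n^\cB}$ at least a fixed fraction of $\Delta$ (the fraction depending on $\alpha$), which pins $\varepsilon_{n^\cA}$ to the same scale and hence $n^\cA_{\tau_d}$ to order $\tfrac{\sigma^2}{\Delta^2}\log(1/\delta)$ with the appropriate $\alpha$-dependent constant; summing the two counts and simplifying the $\alpha$-dependence produces the claimed $T_\delta$ with its $\tfrac{\alpha^2+1}{(\alpha-1)^2}$ prefactor. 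I expect the two main obstacles to be the peeling step that yields the sharp $\widetilde\delta$ (a naive union bound being insufficient), and the bookkeeping around the interplay of the sampling and halting rules --- in particular the observation that for large $\alpha$ the sampling-only bound on $n^\cB$ degenerates and must be replaced by the balance-plus-halting bound, which is exactly what keeps $c_\alpha$ and the decision-time prefactor finite as $\alpha\to\infty$.
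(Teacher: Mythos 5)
Your treatment of the regret bound is essentially the paper's own argument: the same peeled concentration event, the same two complementary lower bounds on $\varepsilon_{n^\cB}$ at $\cB$-pull times (one from discarding the $(\alpha-1)\varepsilon_{n^\cA}$ term, one from combining near-balance with the not-yet-halted condition $\Delta\le 2\varepsilon_{n^\cA}+2\varepsilon_{n^\cB}$), and the minimum of the two giving $c_\alpha$ and hence $R(\tau_d)=\Delta\, n^\cB_{\tau_d}\le(\tfrac{8\sigma^2}{\Delta}c_\alpha+\Delta)\log(\tfrac1\delta)(1+{\scriptscriptstyle\mathcal{O}_\delta}(1))$. That half is sound.

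The decision-time half has a genuine gap. Your bound on $n^\cA_{\tau_d}$ rests on the claim that at the terminal count of $\cB$ one has $\Delta-2\varepsilon_{n^\cB}\ge c\,\Delta$, so that the halting test must fire during a final block of $\cA$-pulls. That requires an \emph{upper} bound on $\varepsilon_{n^\cB}$ (equivalently a \emph{lower} bound on the terminal $n^\cB$), but everything you derived goes the other way: your inequalities lower-bound $\varepsilon_{n^\cB}$ at pull times, i.e.\ upper-bound $n^\cB$. Nothing in your argument excludes a terminal count with $\varepsilon_{n^\cB}\ge\Delta/2$, in which case $\Delta>2\varepsilon_{n^\cA}+2\varepsilon_{n^\cB}$ never holds and your mechanism yields no bound on $n^\cA$ at all. (Also, your parenthetical that the terminal count satisfies $(\alpha+1)\varepsilon_{n^\cB}<\Delta$ is the converse of what the pull condition gives you.) The missing ingredient is the relation obtained by combining ``$\cA$ is pulled'' with ``no decision yet'' --- subtracting $\hat r^\cA_{n^\cA}-\varepsilon_{n^\cA}\le\hat r^\cB_{n^\cB}+\varepsilon_{n^\cB}$ from $\hat r^\cA_{n^\cA}+\alpha\varepsilon_{n^\cA}\ge\hat r^\cB_{n^\cB}+\alpha\varepsilon_{n^\cB}$ gives $(\alpha+1)\varepsilon_{n^\cA}\ge(\alpha-1)\varepsilon_{n^\cB}$, hence $1/\varepsilon_{n^\cA}^2\le\tfrac{(\alpha+1)^2}{(\alpha-1)^2}\,1/\varepsilon_{n^\cB}^2$ (up to a per-pull increment) at \emph{all} times prior to decision, needing no concentration. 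This is the symmetric counterpart of the near-balance inequality you already use at $\cB$-pull times, and it is exactly what bounds $n^\cA$ by $\tfrac{(\alpha+1)^2}{(\alpha-1)^2}$ times the bound on $n^\cB$; summing $n^\cA+n^\cB$ with $1+\tfrac{(\alpha+1)^2}{(\alpha-1)^2}=\tfrac{2(\alpha^2+1)}{(\alpha-1)^2}$ is where the $\tfrac{\alpha^2+1}{(\alpha-1)^2}$ prefactor of $T_\delta$ comes from; your route would not produce it.
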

When $\alpha$ goes to 1, the leading term of the regret goes to $8\sigma^2\log(\frac{1}{\delta})/\Delta$ but the time of decision becomes infinite. When $\alpha\to\infty$, the leading term becomes $32\sigma^2\log(\frac{1}{\delta})/\Delta$ and the time of decision is of order $64\sigma^2\log(\frac{1}{\delta})/\Delta^2$. The fact that the constant $c_\alpha$ is not monotonic  in $\alpha$, is an artifact of the proof, i.e., a byproduct of two different analyses of the same problem (for either small or large values of $\alpha$).  Indeed, Figure \ref{fig:2_arms_UCB_vs_ETC} actually indicates that regret of UCB$_{\alpha}$ is certainly monotonic with respect to $\alpha$ as expected.

ETC has a regret bounded by an expression of order $\frac{16\sigma^2}{\Delta}\log\frac{1}{\delta}$. This is twice bigger than the regret of UCB$_{\alpha}$ for small $\alpha$. These are only one-sided bounds and do not allow to conclude on which algorithm gets a lower regret, but experiments show that UCB$_{\alpha}$ for small $\alpha$ indeed presents an advantage in terms of regret versus ETC, at the cost of a higher decision time. See Figure \ref{fig:2_arms_UCB_vs_ETC}.

When $\alpha$ goes to infinity, the exploration term is dominant for UCB$_{\alpha}$, and it will always pull the least pulled arm. As a consequence, it becomes  a variant of ETC with a sub-optimal decision criterion. We denote it by ETC'. It allocates to the  populations $\cA$ and $\cB$ uniformly and selects  $\cA$ if
$\hat{r}^\cA_n{-} \hat{r}^\cB_n \geq \sqrt{\frac{8\sigma^2}{n}\log\left(\frac{3\log^2 (n)}{\delta}\right)}$. Its confidence interval width is $\sqrt{2}$ times larger than the one of ETC because of the different concentration arguments used in designing the algorithms: ETC uses a concentration lemma on the difference $\hat{r}^\cA_n - \hat{r}^\cB_n$, while UCB$_{\alpha}$ (and its limit ETC') deals separately with arm $\cA$ and $\cB$ since their number of samples might be different. Note that the difference between ETC and ETC' is a specificity of the two-armed bandit case, as the generalization of ETC to more than two arms \cite{perchet2013multi} considers per-arm confidence intervals.

\begin{figure}[ht]
\begin{center}\includegraphics[scale=0.15]{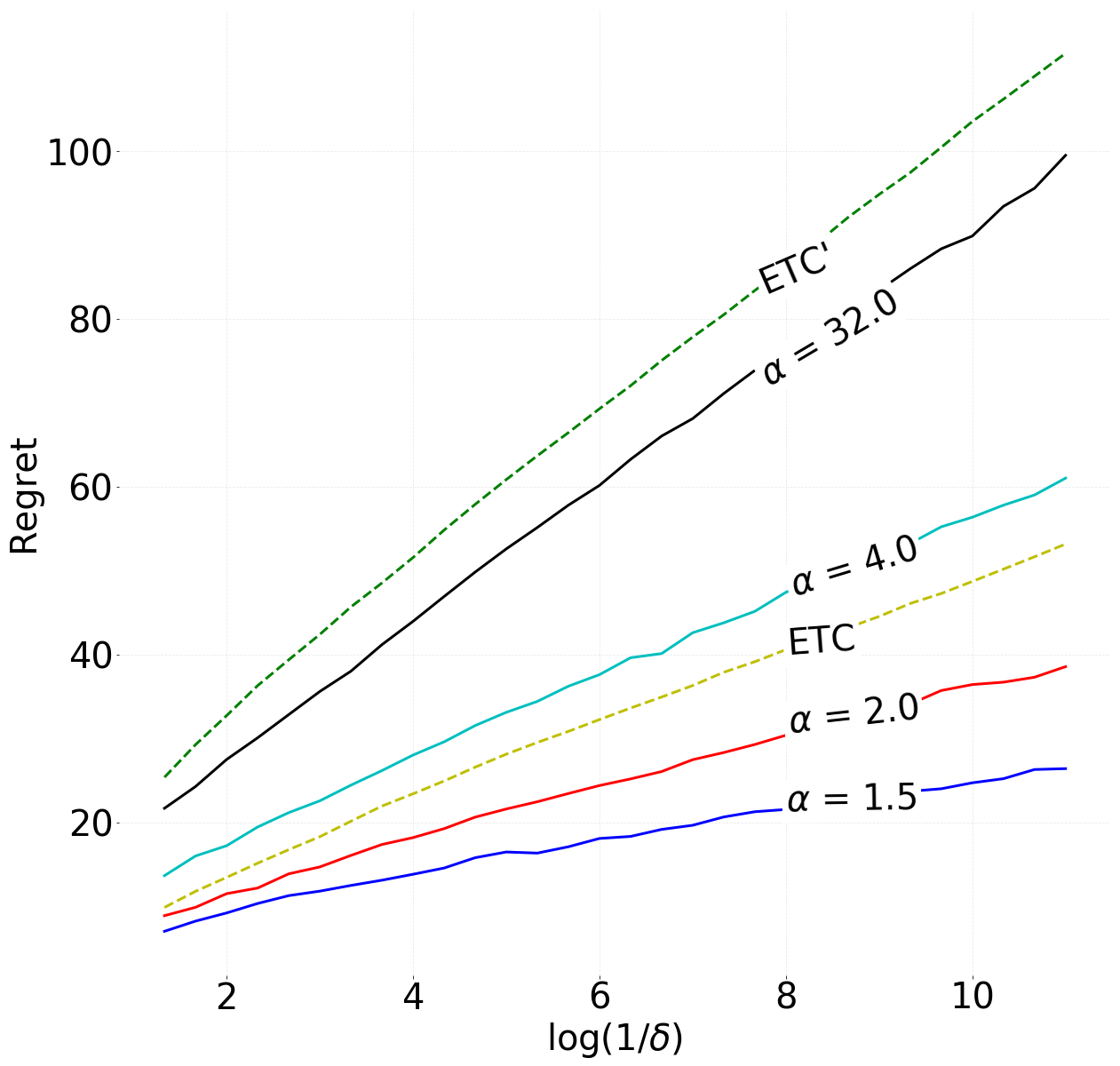} \\
\includegraphics[scale=0.15]{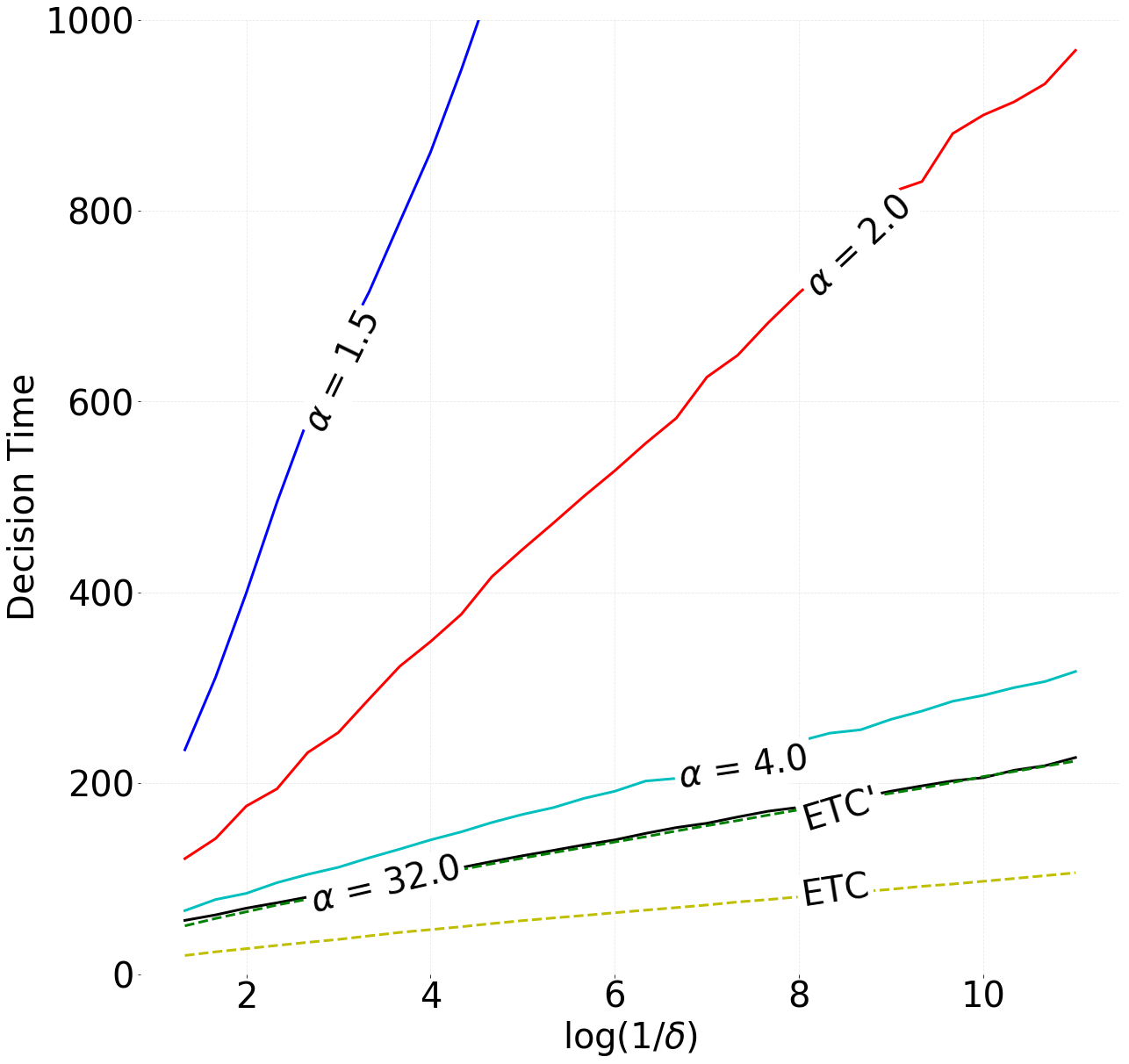}
\end{center}
\caption{\textbf{Comparison of ETC and UCB$_{\alpha}$. Left: regret at selection. Right: time of selection.} The curves average 1000 experiments with two Gaussian arms of means 0 and 1 and variance 1.}
\label{fig:2_arms_UCB_vs_ETC}
\end{figure}

On the other hand, UCB$_{\alpha}$ generalizes immediately when the number of populations is greater than $2$. In that case, we assume that the index of the optimal population is  $1$ and we denote by $\Delta_k=r^1 -r^k$ the gap associated to the subobtimal population $k$. Then we can derive the following corollary from Theorem \ref{thm:UCB_alpha_delta}.
\begin{corollary}
With $K$ different arms, and  with probability greater than $1-\widetilde{\delta}$, UCB$_{\alpha}$ has a decision time smaller than
\begin{align*}
\left( \frac{(\alpha+1)^2}{(\alpha-1)^2} \left(\frac{8\sigma^2}{\Delta^2_{\min
}}c_\alpha+ 1 \right) + \sum_{k=2}^K \frac{8\sigma^2}{\Delta^2_k}c_\alpha +  K \right) \\
\times\log\big(\frac{K}{\delta}\big)\Big(1+{\scriptscriptstyle \mathcal{O}_\delta}(1)\Big)   \end{align*}
and its regret $R(\tau_d)$  satisfies
$$
R(\tau_d) \leq \sum_{k=2}^K\Big(\frac{8\sigma^2}{\Delta_k}c_\alpha +\Delta\Big)  \log\big(\frac{K}{\delta}\big)\Big(1+{\scriptscriptstyle \mathcal{O}_\delta}(1)\Big) \: .
$$
\end{corollary}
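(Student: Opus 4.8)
The plan is to lift the two-armed analysis of Theorem~\ref{thm:UCB_alpha_delta} to $K$ arms by a union bound and a per-arm count of the pulls. First I would introduce the favourable event $\mathcal{E}$ on which $|\hat r^i_{n^i}-r^i|\le\varepsilon_{n^i}$ holds simultaneously for every arm $i\in[K]$ and every value of its pull count $n^i$: running UCB$_{\alpha}$ at level $\delta/K$ and taking a union bound over the $K$ arms, the peeling arguments already used for $K=2$ give $\PP(\mathcal{E}^c)\le\widetilde{\delta}$, at the only cost of replacing $\log(1/\delta)$ by $\log(K/\delta)=\log(1/\delta)(1+{\scriptscriptstyle \mathcal{O}_\delta}(1))$ inside $\varepsilon_n$. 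On $\mathcal{E}$ the output, whenever the stopping rule fires, is correct (if the LCB of $i^*$ exceeds every other UCB and all means lie in their intervals then $r^{i^*}>r^j$ for $j\neq i^*$), so it suffices to bound $\tau_d$ and $R(\tau_d)$ on $\mathcal{E}$.

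Second, I would bound the number of pulls $n^k$ of each suboptimal arm $k\ge 2$ exactly as in the proof of Theorem~\ref{thm:UCB_alpha_delta}. The decisive observation carries over verbatim: on $\mathcal{E}$ the sampling index of arm $1$ satisfies $\hat r^1_{n^1}+\alpha\varepsilon_{n^1}\ge\hat r^1_{n^1}+\varepsilon_{n^1}\ge r^1$, so each time arm $k$ is selected its sampling index dominates arm $1$'s, i.e. $\hat r^k_{n^k}+\alpha\varepsilon_{n^k}\ge r^1$; with $\hat r^k_{n^k}\le r^k+\varepsilon_{n^k}$ this forces $(\alpha+1)\varepsilon_{n^k}\ge\Delta_k$, and resolving the implicit inequality in $n^k$ (the residual $\log(3\log^2\tau_d/\delta)$ only adds a $(1+{\scriptscriptstyle \mathcal{O}_\delta}(1))$ factor once $\tau_d$ is known to be finite) gives $n^k\le(\tfrac{8\sigma^2}{\Delta_k^2}c_\alpha+1)\log(\tfrac K\delta)(1+{\scriptscriptstyle \mathcal{O}_\delta}(1))$; the branch $c_\alpha=4\alpha^2/(\alpha-1)^2$, sharper for large $\alpha$, comes from the alternative argument of Theorem~\ref{thm:UCB_alpha_delta}. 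Since regret is paid only on suboptimal pulls, $R(\tau_d)=\sum_{k=2}^K\Delta_k\,n^k$, this already yields $R(\tau_d)\le\sum_{k\ge2}(\tfrac{8\sigma^2}{\Delta_k}c_\alpha+\Delta_k)\log(\tfrac K\delta)(1+{\scriptscriptstyle \mathcal{O}_\delta}(1))$, i.e. the announced regret bound (reading the $\Delta$ of the statement as $\Delta_k$, equivalently bounding $\Delta_k\le\max_k\Delta_k$).

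Third, and this is the main obstacle, I would bound $n^1$. At stage $\tau_d-1$ the stopping rule still fails for some competitor $k_0$, hence on $\mathcal{E}$ one has $\Delta_{k_0}\le2\varepsilon_{n^1}+2\varepsilon_{n^{k_0}}$, and the issue is to convert this into an upper bound on $n^1$, which requires an upper bound on $\varepsilon_{n^{k_0}}$. This again comes from the sampling rule: either $k_0$ is the arm played at stage $\tau_d-1$, so its sampling index is maximal and $(\alpha+1)\varepsilon_{n^{k_0}}\ge\Delta_{k_0}$ pins down $\varepsilon_{n^{k_0}}$; or arm $1$ keeps being played, and each such play, through $\hat r^1_{n^1}+\alpha\varepsilon_{n^1}\ge\hat r^{k_0}_{n^{k_0}}+\alpha\varepsilon_{n^{k_0}}$, forces $(\alpha-1)\varepsilon_{n^{k_0}}\le\Delta_{k_0}+(\alpha+1)\varepsilon_{n^1}$ so that $\varepsilon_{n^{k_0}}$ tracks $\varepsilon_{n^1}$. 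Plugging back and using $\Delta_{k_0}\ge\Delta_{\min}$ bounds $\Delta_{\min}$ below by a constant depending only on $\alpha$ times $\varepsilon_{n^1}$, which after resolving the implicit inequality gives $n^1\le\tfrac{(\alpha+1)^2}{(\alpha-1)^2}(\tfrac{8\sigma^2}{\Delta^2_{\min}}c_\alpha+1)\log(\tfrac K\delta)(1+{\scriptscriptstyle \mathcal{O}_\delta}(1))$ — the $K$-arm analogue of the factor $\tfrac{\alpha^2+1}{(\alpha-1)^2}$ of Theorem~\ref{thm:UCB_alpha_delta}, requiring, as there, the same care with the two regimes of $\alpha$ and with the off-by-one between $\varepsilon_{n^i}$ and $\varepsilon_{n^i-1}$.

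Finally I would sum: $\tau_d=n^1+\sum_{k=2}^K n^k$, and combining the bounds of the last two steps (bounding the $K-1$ additive ones by $K$) gives the stated decision-time bound. The only genuinely delicate point is the third step; everything else is a union bound together with the per-arm version of arguments already present in the proof of Theorem~\ref{thm:UCB_alpha_delta}, the subtlety there being to argue that among the $K-1$ competitors it is only the one with the smallest gap that governs $n^1$, the remaining arms not inflating it.
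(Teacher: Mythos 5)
Your overall strategy --- a union bound at level $\delta/K$, a per-arm bound on the pulls of each suboptimal arm obtained exactly as in the two-armed analysis, and a bound on $n^1$ driven by the competitor with gap $\Delta_{\min}$, summed to get $\tau_d$ and weighted by the gaps to get the regret --- is precisely the deduction the paper intends (it gives no separate proof of the corollary), and your steps 1 and 2 are correct, including the reading of the additive $\Delta$ in the regret bound as $\Delta_k$.

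The gap is in step 3. The two inequalities you combine there, $\Delta_{k_0}\le 2\varepsilon_{n^1}+2\varepsilon_{n^{k_0}}$ and $(\alpha-1)\varepsilon_{n^{k_0}}\le\Delta_{k_0}+(\alpha+1)\varepsilon_{n^1}$, yield $\Delta_{k_0}(\alpha-3)\le 4\alpha\,\varepsilon_{n^1}$: this is vacuous for every $\alpha\le 3$, and even for $\alpha>3$ it produces a constant of order $16\alpha^2/(\alpha-3)^2$ in place of the claimed $\frac{(\alpha+1)^2}{(\alpha-1)^2}\min\bigl\{(\alpha+1)^2,\tfrac{16\alpha^2}{(\alpha-1)^2}\bigr\}$. (The first branch of your dichotomy, where $k_0$ itself is played at stage $\tau_d-1$, is moot for bounding $n^1$, since $n^1$ does not increase at such a stage.) The correct argument is the one already used for $n_\cA$ in Lemma~\ref{lem:generic_nb_pulls_bound}, and it uses neither concentration nor the gap: at any stage where arm $1$ is pulled and the test has not stopped, there is some $j\neq 1$ with $\hat r^1_{n^1}-\varepsilon_{n^1}\le \hat r^j_{n^j}+\varepsilon_{n^j}$ (take $j$ to be a blocking arm if $i^*=1$, and $j=i^*$ otherwise, since then $\hat r^1_{n^1}\le\hat r^{i^*}_{n^{i^*}}$), while the sampling rule gives $\hat r^1_{n^1}+\alpha\varepsilon_{n^1}\ge \hat r^j_{n^j}+\alpha\varepsilon_{n^j}$; subtracting the first from the second gives $(\alpha+1)\varepsilon_{n^1}\ge(\alpha-1)\varepsilon_{n^j}$, hence $\varepsilon_{n^1}^{-2}\le\frac{(\alpha+1)^2}{(\alpha-1)^2}\max_{j\neq1}\varepsilon_{n^j}^{-2}$ up to the increment $C$, and the maximum is controlled by your step-2 bounds with $\Delta_j\ge\Delta_{\min}$. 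With that substitution the rest of your argument goes through and yields the stated constants for all $\alpha>1$.
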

With $K>2$ population, ETC would stop sampling population $k$ as soon as there exists another population $i$ such that
$\hat{r}^i_n{-} \hat{r}^k_n \geq 4\sqrt{\frac{\sigma^2}{n}\log\left(\frac{3K\log^2 (n)}{\delta}\right)}$. As a consequence, its decision time will be upper-bounded by
$$
\Big(\frac{32\sigma^2}{\Delta^2_{\min}}+\sum_{k=2}^K \frac{32\sigma^2}{\Delta^2_k}\Big)\log(\frac{K}{\delta})(1+{\scriptscriptstyle \mathcal{O}_\delta}(1))
$$
and its regret smaller than
$$
\
\Big(\sum_{k=2}^K \frac{32\sigma^2}{\Delta^2}\log(\frac{K}{\delta})\Big)(1+{\scriptscriptstyle \mathcal{O}_\delta}(1))\ .
$$
As a consequence, even if UCB$_\alpha$ interpolates between UCB and ETC in term of regret, it actually outperforms ETC in terms of decision time when $\alpha \to \infty$.

\subsection{How does inflating the exploration term lead to a finite decision time? A proof sketch}\label{sec:proof_sketch}

We consider an algorithm which pulls $\arg\max_{i\in\{\cA,\cB\}} \hat{r}^i_{n^i} + \alpha \varepsilon_{n^i}$ with $\alpha>1$, and stops if $|r^\cA_{n^\cA} - r^\cB_{n^\cB}| > \varepsilon_{n^\cA} + \varepsilon_{n^\cB}$ and returns the arm with highest mean at that point.

Recall that $\varepsilon_n$ is a quantity  close to $1/\sqrt{n}$, up to logarithmic terms in $n$ and multiplicative constants, hence $1/\varepsilon^2_{n} \approx c n$ for some constant $c$. If we can prove that as long as no decision is taken, $1/\varepsilon^2_{n^\cA}$ and $1/\varepsilon^2_{n^\cB}$ are bounded from above, then we obtain an upper bound on the time $t=n_\cA+n_\cB$ before a decision is taken.

Suppose that the best arm is $\cA$. Bounding $n_\cB$ is done through classical bandit arguments: since $\cB$ is the worse arm, a UCB-type algorithm does not pull it often. The challenge is to show that our algorithm also controls $n_\cA$.

 We can make use of a concentration result of the form: with probability $1-\tilde{\delta}$, $\hat{r}^\cA(n^\cA) + \varepsilon_{n^\cA} \geq r^\cA$ and $\hat{r}^\cB(n^\cB) - \varepsilon_{n^\cB} \leq r^\cB$. The decision criterion ensures that if a decision is taken and these concentration inequalities hold, then the arm returned is the correct one. Indeed under this concentration event, $\hat{r}^\cB(n^\cB) - \hat{r}^\cA(n^\cA) \leq r^\cB - r^\cA + \varepsilon_{n^\cB} + \varepsilon_{n^\cA}$ , which is strictly smaller than $\varepsilon_{n^\cB} + \varepsilon_{n^\cA}$ . Hence $\cB$ cannot be returned: if the algorithm stops, it is correct.

The algorithm will keep both indexes roughly equal, hence $\hat{r}^\cA_{n^\cA} + \alpha \varepsilon_{n^\cA} \approx \hat{r}^\cB_{n^\cB} + \alpha \varepsilon_{n^\cB}$ and the ''pulling'' confidence intervals with width $\alpha\varepsilon_n$ will never get disjoint. But as $\varepsilon_{n^\cA}$ and $\varepsilon_{n^\cB}$ get small, the smaller ''decision'' confidence intervals with width $\varepsilon_n$ will eventually separate, as seen in Figure~\ref{fig:confidence_intervals}.

More formally, if $\cA$ is pulled and no decision was taken yet, then the index of $\cA$ is big, $\hat{r}^\cA(n^\cA) + \alpha\varepsilon_{n^\cA} > \hat{r}^\cB(n^\cB) + \alpha\varepsilon_{n^\cB}$, but not so big that the algorithm stops, i.e. $\hat{r}^\cA(n^\cA) - \varepsilon_{n^\cA} \leq \hat{r}^\cB(n^\cB) + \varepsilon_{n^\cB}$. By combining the two, we obtain the relation $n_\cA \propto \frac{1}{\varepsilon^2_{n^\cA}} \leq \frac{(\alpha+1)^2}{(\alpha-1)^2}\frac{1}{\varepsilon^2_{n^B}} \propto \frac{(\alpha+1)^2}{(\alpha-1)^2} n^\cB$ , where $\propto$ is to be read as the informal statement that the quantities are roughly proportional.

To sum-up the idea of the proof: $\cB$ will not be pulled much since it is the worst arm and we employ an UCB-type algorithm. $\cA$ will be pulled less than a factor depending on $\alpha$ times $\cB$. Thus as long as no decision is taken, $t = n^\cA+n^\cB$ is bounded, by a quantity $T_\delta$. We conclude that the decision time is smaller than $T_\delta$. 

\begin{figure}[ht]
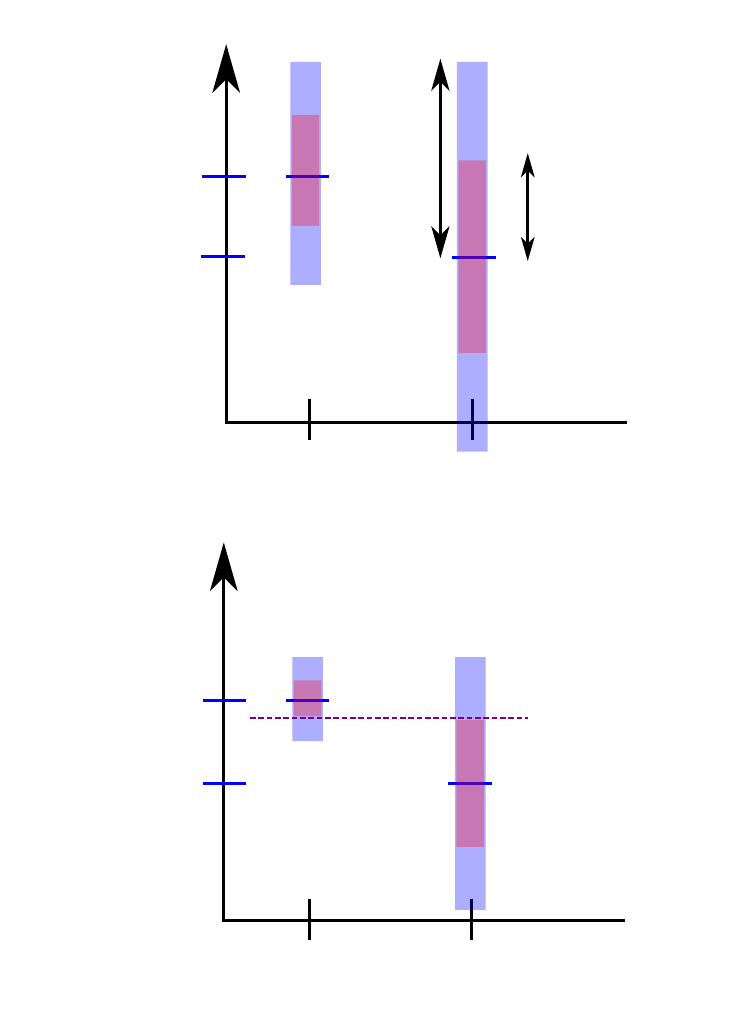
%\begin{center}
%\includegraphics[scale=0.4]{}
%\end{center}
\caption{\textbf{Confidence intervals used in UCB$_{\alpha}$}. Top: Before the time of decision, the indexes $\hat{r}(n) + \alpha\varepsilon_n$ are aligned, the tighter $\varepsilon_n$ intervals also overlap. Bottom: time of selection: the tight $\varepsilon_n$ confidence intervals become disjoint.}
\label{fig:confidence_intervals}
\end{figure}

% !TEX root = main_final.tex

\section{Extensions to non-iid settings}
\label{extension_Non_IID}
When an internet platform wants to A/B test two versions of its website, purchases from customers that are buying recurrently can not be considered as independent. When a technical A/B test is run, it is also usual to split servers in two populations to A/B test the business impact of the latency of a new code version. In this case, observations from the same server are not independent. This setting is not restricted to online marketing. Clinical trials classically measure the survival time or quality of life of patients over time and adaptive testing is a key challenge in this context, however it is so far either heuristic \cite{Press2009} or under the restrictive assumption to observe the rewards before the next decision \cite{Berry1995, Hu_Rosenberger_2006}. More recently, \cite{Negoescu2018} studied a similar setting of reward arriving over time, however with a different objective, namely finding an adaptive way to stop the test for a patient taking into account a cost of testing.

The setting of multi-armed bandit presented in the previous section has been applied to A/B tests \cite{kaufmann2014, Yang2017, Zhao2016} with independent rewards. The goal of this section is to show that we can extend algorithms presented in the first section in a more complex setting and provide a framework to practitioners for interpolating between best arm identification and regret minimization in other settings than the iid case.

To model theses aspects, we show that decisions of the multi-armed bandit algorithm can be taken at a unit level (e.g. users, patients, servers...). Once allocated to a population $\cA$ or $\cB$, a unit $u$  interacts with the system during the whole A/B test . When time increases, the system gathers more and more signal on a unit arrived early in the A/B test. We will also assume in order to be able to take a causal decision at the end of the A/B test that units already exposed to one treatment can not be switched from population. A unit stays in the same population during the whole A/B test. Intuitively, the system will estimate the performance of one technology by averaging its performance on the different units.

\subsection{Notations}

We need to differentiate the units (e.g. users) randomly assigned to populations $\cA$ and $\cB$ from their associated rewards. In the iid setting, the reward $r^\cA_u$ associated to a unit $u$ in population $\cA$ was assumed to be observed instantly. Now, we assume that, for each unit $u$ we have been seeing so far, we observe noisy version of this reward over time and $r^\cA_u$ is only the unknown expectation of this process. Population-specific notation is symmetric, thus, for the sake of readability, we only detail notations for the control $\cA$ and assume the corresponding one for the treatment $\cB$.

More formally, we assume the units $u$ are i.i.d. samples from an unknown distribution and arriving in the test dynamically over time. To each unit $u$ is associated an unknown reward $r^\cA_u$ which is an i.i.d. sub-gaussian r.v. with expected value $r^\cA$ and variance $\sigma_r^2$, as well as an arrival time $\mathcal{T}_u$. $\cA(t)$ denotes the set of units in $\cA$ of cardinality $n^\cA_t$ at time $t$. Then, given a unit $u$ and starting at time $\mathcal{T}_u$, we observe over time random outcomes $r^\cA_{u,t} = r^\cA_u + \varepsilon_{u,t}$ where $\varepsilon_{u,t}$ is a zero-mean sub-gaussian variable with variance $\sigma_\varepsilon^2$. At time $t$, the unit $u$ has generated $t - \mathcal{T}_u + 1$ outcomes $r^\cA_{u,t_u} \cdots r^\cA_{u,t}$.

At each time step, the algorithm has access to all the rewards generated by all the users already present in the A/B test and the precision on $r^\cA_u$ will increase with time as more samples $r^\cA_{u,t}$ are gathered. 

\begin{figure}[t]
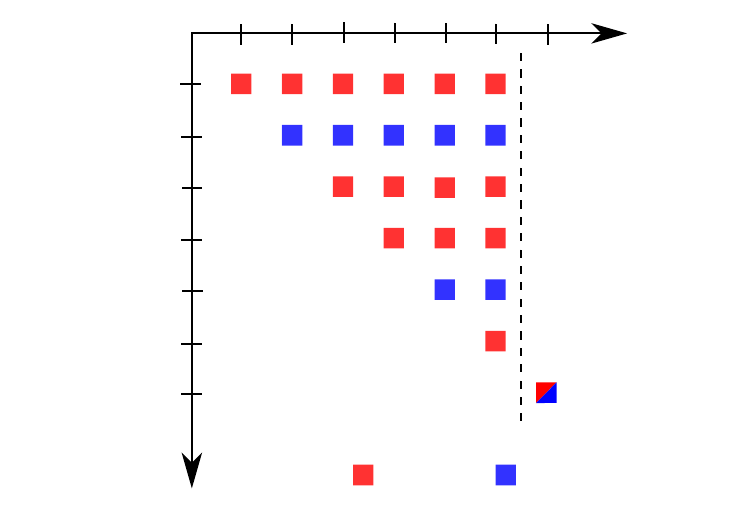
%\begin{center}
%\includegraphics[scale=0.25]{}
%\end{center}
\caption{\textbf{The unit allocation problem}. At each stage $t\in\N$, a new unit is allocated to either population $\cA$ or $\cB$ and a sample from every unit arrived before $t$ is observed.}
\label{fig:units}
\end{figure}

In this setting, a natural estimator to consider is 
\begin{equation*}
\hat{r}_t^{\cA}:= \frac{1}{n_t^{\cA}} \sum_{u \in \cA(t)} \frac{1}{t-\mathcal{T}_u+1} \sum_{s=\mathcal{T}_u}^{t} r_{u,s}^{\cA}
\end{equation*}
We call this estimator the \textit{Mean of means} estimator. With this estimator, we can design algorithms that can reach a trade-off between regret minimization and best arm identification and see how this regret depend on $\sigma_r^2$ and $\sigma_\varepsilon^2$.
We could have considered other estimators such that the \textit{Total Mean} estimator defined as 
\begin{equation*}
\hat{R}_t^{\cA}:= \frac{1}{ \sum_{u \in \cA(t)} t-\mathcal{T}_u+1} \sum_{u \in \cA(t)}\sum_{s=\mathcal{T}_u}^{t} r_{u,s}^{\cA}
\end{equation*}
which is also an unbiased estimator of $r$. Yet unfortunately, this estimator puts more weights on older units than on the more recent one. In the case where all units have more or less the same noise, this is not really an issue (but this is more or less the only case where the total mean estimator has good behavior). Unfortunately, with adaptive sampling algorithm (such as UCB), then it could be the case that a new unit is allocated to a population after an exponential long time. In that case, the different weights put on different units can be of different order of magnitude, preventing fast convergence of the estimate to the estimated mean.

A second motivation for choosing the \textit{Mean of means} estimator is because it opens doors for generalizing to more complex models on the stochastic processes underlying the units behavior. Indeed, it can be seen as an average over the units of the per-unit stochastic process expected values. Here, we assume the random outcomes $r^i_{u,t}$ are i.i.d. with expectation $r^i_u$ (technically our results hold for martingale difference sequence) . Then the technical part shows how to combine concentration results on each of the units to derive bandit algorithms with good properties. With a different model on the per-user random outcomes (as. mean reverting process or cyclic process), our proof techniques could be used as long as it is possible to construct an estimator of the expectation that concentrate well enough.

Precise statements and proofs of results presented in this section can be found in Appendix \ref{appendix:UCBMM_vs_ETCMM}.

\subsection{ETC}

%The ETC algorithm allocates units alternatively to $\cA$ and $\cB$. Suppose w.l.o.g. than the first unit was allocated to $\cA$, then after the rewards of an even stage $t\in\N$ are observed, population $\cA$ has $t/2$ units with numbers of observations $(t, t-2, \ldots, 2)$ and population $\cB$ has $t/2$ units with numbers of observations $(t-1,t-3, \ldots, 1)$. The algorithm then decide whether to stop or continue. In order to simplify the analysis, we study the algorithm which can stop only after such even stages and in the decision criterion we drop the last observation of each unit in $\cA$, such that both $\cA$ and $\cB$ have $t/2$ units with numbers of observations $(t-1,t-3,\ldots,3,1)$. \textcolor{blue}{Droping the info isn't even needed since the concentration inequality is always valid?}

The ETC algorithm allocates units alternatively to $\cA$ and $\cB$, choosing possibly the first population at random. To simplify the analysis, we are actually going to assume that 2 units arrive at each stage, one of each being allocated to each population. So that if  ETC stops  at stage $t\in\N$, then both populations have $n=t$ units, and regret is $n\Delta$. 

The stopping rule criterion of ETC is simply the following:
$$|\hat{r}^\cA_{n} - \hat{r}^\cB_{n}| {>} \sqrt{\frac{4\Big(\sigma_r^2 {+} \frac{\sigma_\varepsilon^2\log(en)}{n}\Big)}{n}\log\Big(\frac{\pi^2n^2}{3\delta}\Big)}$$

\begin{theorem}
With probability at least $1-\delta$, the ETC algorithm outputs the best population and stops after having allocating a total of at most $\tau_d \leq T_\delta$ units, where.
\begin{align*}
T_\delta = \frac{32\sigma_r^2}{\Delta^2}\log(\frac{1}{\delta})(1 {+} {\scriptscriptstyle \mathcal{O}_\delta}(1)) {+} \frac{\sigma^2_\varepsilon}{\sigma^2_r} \log\log(\frac{1}{\delta})(1{+}{\scriptscriptstyle \mathcal{O}_\delta}(1))\: .
\end{align*}

Its regret at $\tau_\delta$ is equal to $\frac{\Delta}{2}\tau_\delta$.
\end{theorem}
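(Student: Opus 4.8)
The idea is to build a single concentration event whose radius coincides \emph{exactly} with the right-hand side of the ETC stopping criterion, and then read off both correctness and the sample-complexity bound from it. First I would rewrite the \emph{Mean of means} estimator at stage $n$ (at which point both populations hold precisely the units with arrival times $1,\dots,n$, the one arrived at time $j$ having emitted $n-j+1$ outcomes) as
\begin{equation*}
\hat r^\cA_n - r^\cA \;=\; \frac1n\sum_{j=1}^n\big(r^\cA_{u_j}-r^\cA\big)\;+\;\frac1n\sum_{j=1}^n\frac{1}{n-j+1}\sum_{s=j}^n\varepsilon_{u_j,s},
\end{equation*}
which exhibits it as a linear form in the mutually independent zero-mean sub-Gaussian variables $\{r^\cA_{u_j}-r^\cA\}$ (proxy $\sigma_r^2$) and $\{\varepsilon_{u_j,s}\}_{j\le s\le n}$ (proxy $\sigma_\varepsilon^2$). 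Summing squared coefficients, the unit part contributes $\sigma_r^2/n$ and the noise part contributes $\frac{\sigma_\varepsilon^2}{n^2}\sum_{j=1}^n\frac{n-j+1}{(n-j+1)^2}=\frac{\sigma_\varepsilon^2}{n^2}\sum_{k=1}^n\frac1k\le\frac{\sigma_\varepsilon^2\log(en)}{n^2}$; hence $\hat r^\cA_n-r^\cA$ is $\frac{V_n}{n}$-sub-Gaussian with $V_n=\sigma_r^2+\frac{\sigma_\varepsilon^2\log(en)}{n}$, and — $\cA$ and $\cB$ using disjoint randomness — $D_n:=(\hat r^\cA_n-\hat r^\cB_n)-\Delta$ is $\frac{2V_n}{n}$-sub-Gaussian.

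Next I would apply the sub-Gaussian tail bound at each $n$ with failure budget $\frac{6\delta}{\pi^2 n^2}$: this gives $\PP\big(|D_n|>\sqrt{\tfrac{4V_n}{n}\log\tfrac{\pi^2n^2}{3\delta}}\big)\le\frac{6\delta}{\pi^2 n^2}$, so a union bound over $n\ge1$ (using $\sum_n n^{-2}=\pi^2/6$) produces an event $\mathcal E$ with $\PP(\mathcal E)\ge1-\delta$ on which $|D_n|$ stays below that radius for \emph{every} $n$. Since the radius is exactly the threshold of the stopping rule, two facts follow on $\mathcal E$. Correctness: if the rule fires at stage $n$ then $|\hat r^\cA_n-\hat r^\cB_n|$ exceeds the radius whereas $|D_n|$ does not, so $\hat r^\cA_n-\hat r^\cB_n$ and $\Delta$ cannot have opposite signs (else $|D_n|\ge|\hat r^\cA_n-\hat r^\cB_n|$), and the empirically-best arm is the truly best one. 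No late stopping either: on $\mathcal E$, $|\hat r^\cA_n-\hat r^\cB_n|\ge|\Delta|-\sqrt{\tfrac{4V_n}{n}\log\tfrac{\pi^2n^2}{3\delta}}$ by the reverse triangle inequality, so the rule must fire as soon as $|\Delta|>2\sqrt{\tfrac{4V_n}{n}\log\tfrac{\pi^2n^2}{3\delta}}$, i.e. as soon as
\begin{equation*}
n\Delta^2 \;>\; 16\,V_n\log\!\Big(\tfrac{\pi^2n^2}{3\delta}\Big)\;=\;16\sigma_r^2\log\!\Big(\tfrac{\pi^2n^2}{3\delta}\Big)+\frac{16\sigma_\varepsilon^2\log(en)}{n}\log\!\Big(\tfrac{\pi^2n^2}{3\delta}\Big).
\end{equation*}

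Finally I would solve this implicit inequality for its first solution $n^\star$. Bootstrapping — any solution is $O\big(\tfrac{\sigma_r^2}{\Delta^2}\log\tfrac1\delta\big)$, whence $\log n^\star=\log\log\tfrac1\delta+O(1)$, $V_{n^\star}=\sigma_r^2(1+o_\delta(1))$, and $\log\tfrac{\pi^2(n^\star)^2}{3\delta}=\log\tfrac1\delta(1+o_\delta(1))$ — and feeding these back gives $n^\star\le\frac{16\sigma_r^2}{\Delta^2}\log\tfrac1\delta(1+o_\delta(1))$ plus a correction of order $\frac{\sigma_\varepsilon^2}{\sigma_r^2}\log\log\tfrac1\delta$. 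Since each stage admits two units, $\tau_d=2n^\star\le T_\delta$ with $T_\delta$ as claimed, and the sub-optimal population having been pulled exactly $n^\star$ times, the regret is $n^\star\Delta=\tfrac\Delta2\tau_d$. The delicate part is the variance-proxy bookkeeping: it is precisely the $\tfrac1{n-j+1}$ weights collapsing to a harmonic sum that both forces and justifies the $\tfrac{\sigma_\varepsilon^2\log(en)}{n}$ refinement baked into $V_n$, and hence into the stopping rule; a secondary subtlety is the careful inversion of the implicit inequality needed to isolate the lower-order $\log\log\tfrac1\delta$ term. The probabilistic step itself is routine once the estimator is recognized as a linear form in independent sub-Gaussians.
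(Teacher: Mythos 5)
Your proof is correct and follows essentially the same route the paper intends: the variance-proxy computation $V_n=\sigma_r^2+\sigma_\varepsilon^2\log(en)/n$ via the harmonic sum, the per-$n$ union bound with budget $6\delta/(\pi^2 n^2)$ that reproduces exactly the $\log(\pi^2 n^2/(3\delta))$ threshold of the stopping rule, and the two-scale inversion of the implicit inequality are precisely what the paper's appendix lemma (applied with $\gamma=8$) encodes. If anything, you make explicit the concentration step for the deterministic ETC allocation, which the appendix only spells out for the harder adaptive-allocation case used by UCB-MM.
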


\subsection{UCB-MM}

The variant of UCB we consider is defined with respect to the following index of performance of population $i \in \{\cA,\cB\}$ defined by
\begin{equation}\label{EQ:DefUCBMM}
 \hat{r}_t^{i} +  \sqrt{\frac{2\Big(\sigma_r^2 + \frac{\sigma_\varepsilon^2\log(en^i_t)}{n^i_t}\Big)}{n^i_t}\log\Big(\frac{(4n^i_t)^4}{2\delta}\max\{1,\frac{n^i_t\sigma_r^2}{\sigma_\varepsilon^2}\}\Big)} \end{equation}

Using this index,  UCB-MM  is described in Algorithm \ref{Algo:UCBMM}. We mention here that having random numbers of units with random numbers of occurrences prevents us for deriving the more or less standard concentration inequalities derived for the other algorithm. Indeed, it actually requires to combine several types of different inequalities, which explains the non-standard term in the $\sqrt{\log(\cdot)}$ part of the index.
\begin{algorithm}
\caption{UCB-MM$_{\alpha}$}
\label{Algo:UCBMM}
\hspace*{\algorithmicindent} \textbf{Input:} $\alpha,\delta$
\begin{algorithmic} [1]
  \Repeat { over \texttt{t}}
 \For{\texttt{each population $i \in \{\cA, \cB\}$}}
    \State{$\varepsilon^i_t = \sqrt{\frac{2\Big(\sigma_r^2 + \frac{\sigma_\varepsilon^2\log(en^i_t)}{n^i_t}\Big)}{n^i_t}}$}
    \State{$\qquad\qquad\times\sqrt{\log\Big(\frac{(4n^i_t)^4}{\delta}\max\{1,\frac{n^i_t\sigma_r^2}{\sigma_\varepsilon^2}\}\Big)}$}
    \EndFor
  
  \State{\texttt{Assign next user to population} \par \hskip\algorithmicindent ~~~~~$i_t =\argmax_{i \in \{\cA,\cB\}} \hat{r}^i_t + \alpha \varepsilon^i_t$}
  \State{$i^* = \argmax_{i \in \{\cA,\cB\}} \hat{r}^i_t$}
  \Until{$\hat{r}^{i^*}_t - \varepsilon^{i^*}_t > \hat{r}^j_t + \varepsilon^j_t$ for $j\neq i$}\\
  \Return{$i^*$}
\end{algorithmic}
\end{algorithm}

\begin{theorem}
With probability at least $1-\widetilde{\delta}$, the regret of UCB-MM is bounded at stage $t_d$ as
\begin{align*}R(\tau_d) &\leq \frac{8\sigma^2_r}{\Delta}\log(\frac{1}{\delta})(1+{\scriptscriptstyle \mathcal{O}_\delta}(1))\\&\hspace{1cm}+\frac{\sigma_r^2}{\sigma_\varepsilon^2}\Delta\log\log(\frac{1}{\delta})(1+{\scriptscriptstyle \mathcal{O}_\delta}(1))\, .
\end{align*}
There is no guarantee that $\tau_d$ is uniformly bounded.
\end{theorem}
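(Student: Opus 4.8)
The plan is to reproduce, for the \emph{mean of means} estimator, the iid UCB analysis sketched in Section~\ref{sec:proof_sketch} (and made precise for UCB$_\alpha$ in Theorem~\ref{thm:UCB_alpha_delta}); the stated bound is the plain version ($\alpha=1$), for which the ``pulling'' and ``decision'' indices coincide. The skeleton has three parts: (i) a time-uniform concentration event on which every estimate $\hat{r}^i_t$ stays within its bonus $\varepsilon^i_t$, (ii) correctness of the returned population on that event, and (iii) a bound on the number of units sent to the suboptimal population, which equals the regret divided by $\Delta$ since exactly one unit enters the test per stage. \textbf{The concentration step is the main obstacle} and is the only genuinely new ingredient.

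Concretely, let $\mathcal{E}=\{\forall t\ge 1,\ \forall i\in\{\cA,\cB\}:\ |\hat{r}^i_t-r^i|\le\varepsilon^i_t\}$; the target is $\PP(\mathcal{E})\ge 1-\widetilde{\delta}$. I would decompose $\hat{r}^i_t-r^i=\frac{1}{n^i_t}\sum_{u\in i(t)}(r^i_u-r^i)+\frac{1}{n^i_t}\sum_{u\in i(t)}\frac{1}{t-\mathcal{T}_u+1}\sum_{s=\mathcal{T}_u}^{t}\varepsilon_{u,s}$. The first, between-unit, sum is an average of $n^i_t$ i.i.d.\ $\sigma_r^2$-subGaussian variables, with variance proxy $\sigma_r^2/n^i_t$. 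The second, within-unit, sum has variance proxy $\frac{1}{(n^i_t)^2}\sum_{u\in i(t)}\frac{\sigma_\varepsilon^2}{t-\mathcal{T}_u+1}\le\frac{\sigma_\varepsilon^2}{(n^i_t)^2}\sum_{k=1}^{n^i_t}\frac1k\le\frac{\sigma_\varepsilon^2\log(e n^i_t)}{(n^i_t)^2}$, because the integers $t-\mathcal{T}_u+1$ over $u\in i(t)$ are $n^i_t$ distinct positive integers; this is precisely the origin of the correction $\sigma_\varepsilon^2\log(e n^i_t)/n^i_t$ inside the index. Summing the two proxies gives $\frac{1}{n^i_t}\big(\sigma_r^2+\frac{\sigma_\varepsilon^2\log(e n^i_t)}{n^i_t}\big)$, and a subGaussian tail bound then reproduces the index up to the $\sqrt{\log(\cdot)}$ confidence term. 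Making this uniform in $t$ is delicate: I would peel geometrically over the random number of units $n^i_t$, and for each unit invoke a time-uniform maximal inequality for its running noise average together with a union bound over its random number of outcomes; combining these layers forces the polynomial prefactor $(4n^i_t)^4$, while a case split according to whether the inter-unit variance $\sigma_r^2$ or the within-unit contribution dominates produces the $\max\{1,n^i_t\sigma_r^2/\sigma_\varepsilon^2\}$ factor. Standard bandit concentration fails here because both the number of units and their numbers of outcomes are produced by an adaptive policy, which is exactly what this nonstandard index is built to absorb.

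The rest is routine. On $\mathcal{E}$, if UCB-MM stops at $\tau_d$ and returns $i^*$, then $r^{i^*}\ge\hat{r}^{i^*}_{\tau_d}-\varepsilon^{i^*}_{\tau_d}>\hat{r}^{j}_{\tau_d}+\varepsilon^{j}_{\tau_d}\ge r^{j}$, so the returned population is optimal. For the regret, assume $\cA$ is optimal, so $R(\tau_d)=\Delta\,n^\cB_{\tau_d}$. Whenever $\cB$ receives a unit at a stage $t\le\tau_d$, the sampling rule gives $\hat{r}^\cB_t+\varepsilon^\cB_t\ge\hat{r}^\cA_t+\varepsilon^\cA_t$; on $\mathcal{E}$, $\hat{r}^\cA_t+\varepsilon^\cA_t\ge r^\cA$ and $\hat{r}^\cB_t\le r^\cB+\varepsilon^\cB_t$, hence $2\varepsilon^\cB_t\ge\Delta$. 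Applying this at the last stage at which $\cB$ is sampled, with $n:=n^\cB_{\tau_d}$, yields (up to an off-by-one absorbed into a $(1+{\scriptscriptstyle \mathcal{O}_\delta}(1))$ factor) the implicit inequality $\Delta^2\le\frac{8\big(\sigma_r^2+\sigma_\varepsilon^2\log(en)/n\big)}{n}\log\!\Big(\frac{(4n)^4}{2\delta}\max\{1,\tfrac{n\sigma_r^2}{\sigma_\varepsilon^2}\}\Big)$. A first crude resolution shows $n$ is at most of order $\sigma_r^2\log(1/\delta)/\Delta^2$, so $\log n$ is negligible in front of $\log(1/\delta)$ and the confidence term is $\log(1/\delta)(1+{\scriptscriptstyle \mathcal{O}_\delta}(1))$; substituting this estimate of $n$ back into $\sigma_\varepsilon^2\log(en)/n$ turns that term into a $\log\log(1/\delta)$-order correction carrying the variance ratio announced in the statement. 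Multiplying the resulting bound on $n$ by $\Delta$ gives the claimed regret bound. Finally, $\tau_d$ need not be uniformly bounded for the same reason as plain UCB in the iid case: with $\alpha=1$ the pulling intervals of width $\alpha\varepsilon$ and the decision intervals of width $\varepsilon$ are the same, so on some sample paths the separation condition $\hat{r}^{i^*}_t-\varepsilon^{i^*}_t>\hat{r}^j_t+\varepsilon^j_t$ is never met.
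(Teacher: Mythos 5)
Your proposal is correct and follows essentially the same route as the paper: a mean-of-means concentration bound whose noise term comes from the harmonic sum $\sum_{u}1/(t-\mathcal{T}_u+1)\le\log(en)$ over distinct arrival offsets (with adaptivity handled by conditioning on the arrival decisions, a union bound over the range of $s$ producing the $\max\{1,n\sigma_r^2/\sigma_\varepsilon^2\}$ and polynomial prefactors), correctness on that event, the UCB inequality $2\varepsilon^\cB_t\ge\Delta$ at the last allocation to $\cB$, and the two-stage resolution of the resulting implicit equation (the paper's $\gamma=4$ case). One small remark: carrying your back-substitution through actually yields a $\tfrac{\sigma_\varepsilon^2}{\sigma_r^2}\log\log(\tfrac{1}{\delta})$ correction, as in the paper's appendix lemma, so the ratio $\tfrac{\sigma_r^2}{\sigma_\varepsilon^2}$ in the theorem statement you are matching appears to be inverted rather than something your argument should reproduce.
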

%\begin{proof}
%Detailed proof in Appendix \ref{appendix_UCB_MM}.
%\end{proof}
We emphasize here that the dependency of the total regret with respect to the noise variance $\sigma_\varepsilon^2$ is negligible compared to its dependency with respect to the variance of the unit performance $\sigma_r^2$. Indeed, regret has a  $\log \frac{1}{\delta}$ factor in front of $\sigma_r^2$ (a term which is unavoidable, even without extra noise, i.e., if $\sigma_\varepsilon^2=0$). On the other hand, the multiplicative factor of $\sigma_\varepsilon^2$ is only  double logarithmic, in  $\log\log \frac{1}{\delta}$. Moreover, the additional number of units required to find the best population is, asymptotically, independent of $\Delta$,  the proximity measure of the two populations.

In the index definition of UCB-MM (Equation \eqref{EQ:DefUCBMM}), letting $\sigma_\varepsilon^2$ goes to 0 gives a void bound (the index is basically always $+\infty$). This is an artefact of the proof needed for having only an extra $\log\log(\cdot)$ term. It is also possible to use the following alternative error term for UCB-MM
$$ \sqrt{\frac{2\sigma_r^2}{n^i_t}\log\Big(\frac{9\log^2(n_t^i)}{\delta}\Big)}+ \sqrt{\frac{2\sigma_\varepsilon^2\log(en^i_t)}{(n^i_t)^2}\log\Big(\frac{(4n^i_t)^4}{\delta}\Big)}
$$
This error term converges, as $\sigma_\varepsilon^2$ goes to 0, to the usual error term of UCB. Unfortunately, the regret dependency in $\sigma_\varepsilon^2$ deteriorates as it  scales with 
$$
\Big(\frac{8\sigma^2_r}{\Delta}\log \frac{1}{\delta} + \frac{\sigma_r^2}{\sigma_\varepsilon^2} \Delta \sqrt{\log \frac{1}{\delta}}\Big)(1+{\scriptscriptstyle \mathcal{O}_\delta}(1))
$$ 
thus with a $\sqrt{\log(\cdot)}$  extra term instead of a $\log\log(\cdot)$ one.

\subsubsection{Interpolating between regret minimization and best arm identification}

As in the iid case, it is possible, with unit, to define UCB-MM$_{\alpha}$ to interpolate between UCB-MM and ETC-MM by multiplying the error term of UCB-MM by a factor $\alpha \in [1,+\infty]$.
\begin{theorem}
With probability at least $1-\widetilde{\delta}$, the regret of UCB-MM$_{\alpha}$ is bounded at stage $T$ as
\begin{align*}R(\tau_d) &\leq \Big(\frac{8\sigma^2_r}{\Delta}c_\alpha+ \Delta\Big)\log(\frac{1}{\delta})(1+{\scriptscriptstyle \mathcal{O}_\delta}(1))
\\&\hspace{1cm}+\frac{\sigma_r^2}{\sigma_\varepsilon^2}\Delta\log\log(\frac{1}{\delta})(1+{\scriptscriptstyle \mathcal{O}_\delta}(1))\, .
\end{align*}
Moreover, the time of decision is upper-bounded by
\begin{align*}
\tau_d \leq \: & \frac{\alpha^2+1}{(\alpha-1)^2} \left(\frac{16\sigma_r^2}{\Delta}c_\alpha+ 1 \right) \log\big(\frac{1}{\delta}\big)\Big(1+{\scriptscriptstyle \mathcal{O}_\delta}(1)\Big)  \\
&\hspace{1cm}+2\frac{\sigma_r^2}{\sigma_\varepsilon^2}\Delta\log\log(\frac{1}{\delta})(1+{\scriptscriptstyle \mathcal{O}_\delta}(1))  \: .
\end{align*}
\end{theorem}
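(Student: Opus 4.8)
The plan is to transfer the argument behind Theorem~\ref{thm:UCB_alpha_delta} (and its proof sketch in Section~\ref{sec:proof_sketch}) to the unit setting, the only genuinely new ingredient being the replacement of the subgaussian concentration of empirical means by a concentration bound adapted to the \emph{mean of means} estimator $\hat{r}_t^i$. So the first and central step is to establish the good event: with probability at least $1-\widetilde{\delta}$, for every stage $t$ and every population $i\in\{\cA,\cB\}$, $\hat{r}_t^i - \varepsilon_t^i \le r^i \le \hat{r}_t^i + \varepsilon_t^i$, with $\varepsilon_t^i$ as in Equation~\eqref{EQ:DefUCBMM}. I would prove this by a two-level peeling. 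Conditionally on the units and their arrival times, each per-unit error $\frac{1}{t-\mathcal{T}_u+1}\sum_{s}\varepsilon_{u,s}$ is controlled uniformly over $t$ by a standard subgaussian-maximal peeling, yielding a $\sigma_\varepsilon^2/(t-\mathcal{T}_u+1)$ variance; averaging these over the $n_t^i$ units and peeling again over $n_t^i$ for the $\sigma_r^2/n_t^i$ part gives the claim. The interaction of the two ``clocks'' $n_t^i$ and $t-\mathcal{T}_u+1$ — which differ from unit to unit because the allocation is adaptive — is exactly what forces the additive $\sigma_\varepsilon^2\log(en_t^i)/n_t^i$ correction to the variance proxy and the unusual $\max\{1,n_t^i\sigma_r^2/\sigma_\varepsilon^2\}$ factor inside the logarithm, and what must be organised so that the two union bounds do not compound into a worse-than-$\log\log(1/\delta)$ dependence in the $\sigma_\varepsilon^2$ term. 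This is essentially the concentration lemma already underlying the UCB-MM theorem, which I would reuse.

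Everything downstream is then the iid bookkeeping with the new $\varepsilon_t^i$. On the good event, correctness is immediate: if the algorithm stops returning $i^*$ with $\hat{r}^{i^*}-\varepsilon^{i^*}>\hat{r}^{j}+\varepsilon^{j}$, then $r^{i^*}\ge\hat{r}^{i^*}-\varepsilon^{i^*}>\hat{r}^{j}+\varepsilon^{j}\ge r^{j}$, so $i^*$ is optimal. Assuming $\cA$ is best, I bound $n^\cB$ as in a UCB analysis: whenever a unit is sent to $\cB$ before the decision time, its index dominates, $\hat{r}^\cB+\alpha\varepsilon^\cB\ge\hat{r}^\cA+\alpha\varepsilon^\cA\ge r^\cA$, while $\hat{r}^\cB\le r^\cB+\varepsilon^\cB$, hence $(\alpha+1)\varepsilon^\cB\ge\Delta$. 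Inverting the decreasing map $n\mapsto\varepsilon^\cB$ (i.e. solving $\varepsilon^\cB\le\Delta/(\alpha+1)$) gives $n^\cB \le \frac{2\sigma_r^2(\alpha+1)^2}{\Delta^2}\log(\frac1\delta)(1+{\scriptscriptstyle \mathcal{O}_\delta}(1)) + \frac{\sigma_r^2}{\sigma_\varepsilon^2}\log\log(\frac1\delta)(1+{\scriptscriptstyle \mathcal{O}_\delta}(1))$, the second term coming from the $\sigma_\varepsilon^2$ part of the variance proxy together with the $\max\{\cdot\}$ term in the log. Since only units allocated to $\cB$ incur regret, the regret at the decision time is $\Delta\cdot n^\cB$ up to the ${\scriptscriptstyle \mathcal{O}_\delta}(1)$ and ``$+\Delta$'' lower-order corrections, which yields the $\frac{(\alpha+1)^2}{4}$ branch of the regret bound; a second estimate valid for large $\alpha$ gives the $\frac{4\alpha^2}{(\alpha-1)^2}$ branch, and the minimum of the two is $c_\alpha$ (with $c_1=1$, $c_\infty=4$).

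For the decision time I would reproduce the ``inflated exploration'' argument of Section~\ref{sec:proof_sketch}. If a unit is sent to $\cA$ before a decision, its index beats $\cB$'s, $\hat{r}^\cA+\alpha\varepsilon^\cA\ge\hat{r}^\cB+\alpha\varepsilon^\cB$, yet the stopping rule has not fired, $\hat{r}^\cA-\varepsilon^\cA\le\hat{r}^\cB+\varepsilon^\cB$; subtracting the two gives $(\alpha-1)\varepsilon^\cB\le(\alpha+1)\varepsilon^\cA$, so $\varepsilon^\cA\ge\frac{\alpha-1}{\alpha+1}\varepsilon^\cB$, and since $\varepsilon^\cB$ evaluated at the largest allowed value of $n^\cB$ is $\approx\Delta/(\alpha+1)$, we obtain $\varepsilon^\cA\gtrsim\frac{\alpha-1}{(\alpha+1)^2}\Delta$. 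Inverting $n\mapsto\varepsilon^\cA$ as before bounds $n^\cA$ by $\frac{(\alpha+1)^2}{(\alpha-1)^2}$ times (essentially) the bound on $n^\cB$, plus the ceiling ``$+1$'' and the $\sigma_\varepsilon^2/\sigma_r^2$ correction. Summing $\tau_d=n^\cA+n^\cB$ and running the same algebra as in the proof of Theorem~\ref{thm:UCB_alpha_delta} produces the stated $\frac{\alpha^2+1}{(\alpha-1)^2}\big(\tfrac{16\sigma_r^2}{\Delta^2}c_\alpha+1\big)\log(\frac1\delta)$ leading term and the $2\frac{\sigma_r^2}{\sigma_\varepsilon^2}\Delta\log\log(\frac1\delta)$ remainder; as in the iid case, the two branches of $c_\alpha$ are two slightly different ways of chaining these inequalities, optimal respectively for small and large $\alpha$.

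The main obstacle is the uniform concentration of the mean-of-means estimator in the first step; the rest is a careful but routine repetition of the iid UCB$_\alpha$ analysis. The delicate point is that the numbers of observations $t-\mathcal{T}_u+1$ are unit-dependent and coupled to the adaptive allocation, so one cannot treat $\hat{r}_t^i$ as the mean of $n_t^i$ times a constant number of subgaussian samples; the peeling must be arranged so that the $\sigma_r^2$ and $\sigma_\varepsilon^2$ contributions separate cleanly, which is precisely why the index in Equation~\eqref{EQ:DefUCBMM} has its particular, non-standard form.
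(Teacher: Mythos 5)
Your proposal is correct and follows essentially the same route as the paper: the paper's proof invokes the mean-of-means concentration proposition (proved by exactly the kind of conditioning-on-arrival-times plus two-regime/peeling argument you sketch), then applies verbatim the generic pull-counting lemma from the iid UCB$_{\alpha}$ analysis with the new error term, and finally inverts $n\mapsto\varepsilon_n$ via a dedicated lemma to extract the $\log(1/\delta)$ leading term and the $\log\log(1/\delta)$ correction coming from the $\sigma_\varepsilon^2$ part of the variance proxy. The only discrepancies are inherited typos in the paper's own statement (e.g.\ $\Delta$ vs.\ $\Delta^2$ in the decision-time bound and the orientation of the $\sigma_r^2/\sigma_\varepsilon^2$ ratio), not gaps in your argument.
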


The proof of this result proceeds as in the iid case (see the sketch of proof in section \ref{sec:proof_sketch}), assuming that we can provide a suitable concentration inequality to bound the deviation of the mean-of-means estimator. The main difference is the concentration arguments used. The analysis is adaptive in that it gives a bound on the decision time for any model, as long as we are able to provide concentration inequalities for the population means. The concentration inequality is used to obtain a bound on the number of allocations to the sub-optimal population, then the inflated confidence intervals mechanically provide a bound on the number of pulls of the best population with respect to the sub-optimal one.

As in the iid case, we have stated our results for $K=2$ populations, but it's straightforward to generalize them for $K>2$ different populations.

\textbf{Practical remark.}
In practice, attributing users dynamically to populations could be hard to handle in production (for instance, the population of the user must be stored to interact with him when he is coming back on the platform..). This is why we also provide another anytime bound in appendix in a simpler setting where we assume that all the units are already present at the beginning of the test, thus allowing to allocate them to populations using a simple hash on their identifiers. Based on this bound, the practitioner can stop the test as early as possible such that the decision is statically sufficient. However, this bound can not help to do a tradeoff between regret minimization and best arm identification. 

On the other hand, we assume that the reward and noise were subGaussian random variable, with known variance (proxy) $\sigma_r^2$ and $\sigma_\varepsilon^2$. Our results and techniques can be generalized to the case where the random variables $r_t^i$ and $\varepsilon_{n,t}$ are bounded (say, in $[0,1]$) with unknown variance. One just need to use empirical Bernstein concentration inequalities \cite{Maurer2009}.

% !TEX root = main_final.tex

\section{Experiments}
\label{expe}
On a simple iid setting, we show the performance proved in Section \ref{SectionBothWorlds} on Figure \ref{fig:2_arms_UCB_vs_ETC}. There are two Gaussian arms with same variance 1 and means 0 and 1. The two graphs on the figure show the decision time and the regret at the time of decision of several algorithms for a range of values of $\log(1/\delta)$. The algorithms shown are ETC, four instances of UCB$_{\alpha}$ for alpha in $(1.5, 2, 4, 32)$ and ETC', the variant of ETC to which UCB$_{\alpha}$ tends to when $\alpha\to\infty$. We highlight a few conclusions from these plots, which are all in agreement with the theoretical results.
\begin{itemize}
\item The algorithm with lowest decision time is ETC, the algorithm with lowest regret is UCB$_\alpha$ with small $\alpha$.
\item For $\alpha\geq 4$, UCB$_\alpha$ has lower regret and higher decision time than ETC', but is worse than ETC on both criteria.
\item For $\alpha$ equal to 1.5 or 2, UCB$_\alpha$ has lower regret and higher decision time than ETC.
\end{itemize}
UCB$_\alpha$ is seen empirically to realize a trade-off between its two limiting algorithms UCB and ETC', and there is an interval for $\alpha$ in which UCB$_\alpha$ is a trade-off between UCB and ETC. The numerical relations between the bounds can also be verified: the regret of UCB$_{1.5}$ is almost twice smaller than the regret of ETC, which is twice smaller than the regret of ETC'.

We then show on Figure \ref{fig:emp_results} empirical results on the unit setting presented in Section \ref{extension_Non_IID}. At each time step, a user arrives and then generate a reward according to $r_u$ for every time step until the end of the game. In our simulation, $r_{u}$ is sampled from a normal distribution with mean 0 for population $\mathcal{A}$ (respectively with mean 1 for population $\mathcal{B}$)  and variance $\sigma_r^{2}$ equal to 1. The noise $\epsilon$ at each time step is also Gaussian of mean 0 and variance $\sigma_{\epsilon}^2$ equal to 1. The data can be seen as a triangular matrix (cf Fig. \ref{fig:units}). We compare performances between UCB$_{\alpha}$ for different values of $\alpha$ and ETC' in terms of regret and times of decision. We see that we are able to reproduce what we observed in the iid setting in the unit setting. We observe again a factor 4 between the regret of ETC' and UCB. With UCB$_{\alpha}$, we can realize a trade-off between ETC' and UCB, both in terms of regret and decision time. 

\begin{figure}[ht]
\center
\begin{center}%\includegraphics[width=0.45\textwidth, height=0.30\textheight]{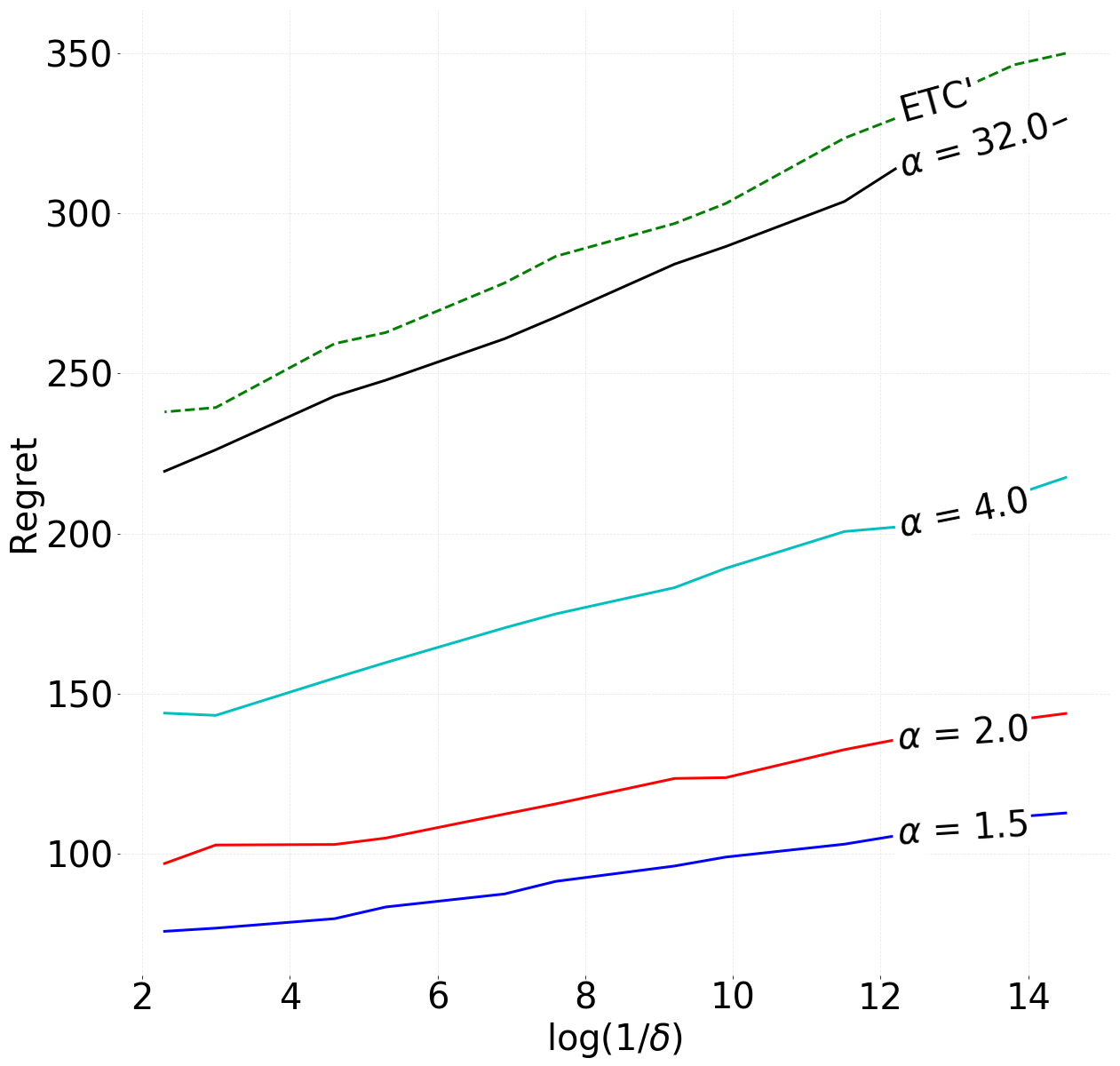} \\
\includegraphics[scale=0.15]{regret_thomas_unit} 
\includegraphics[scale=0.15]{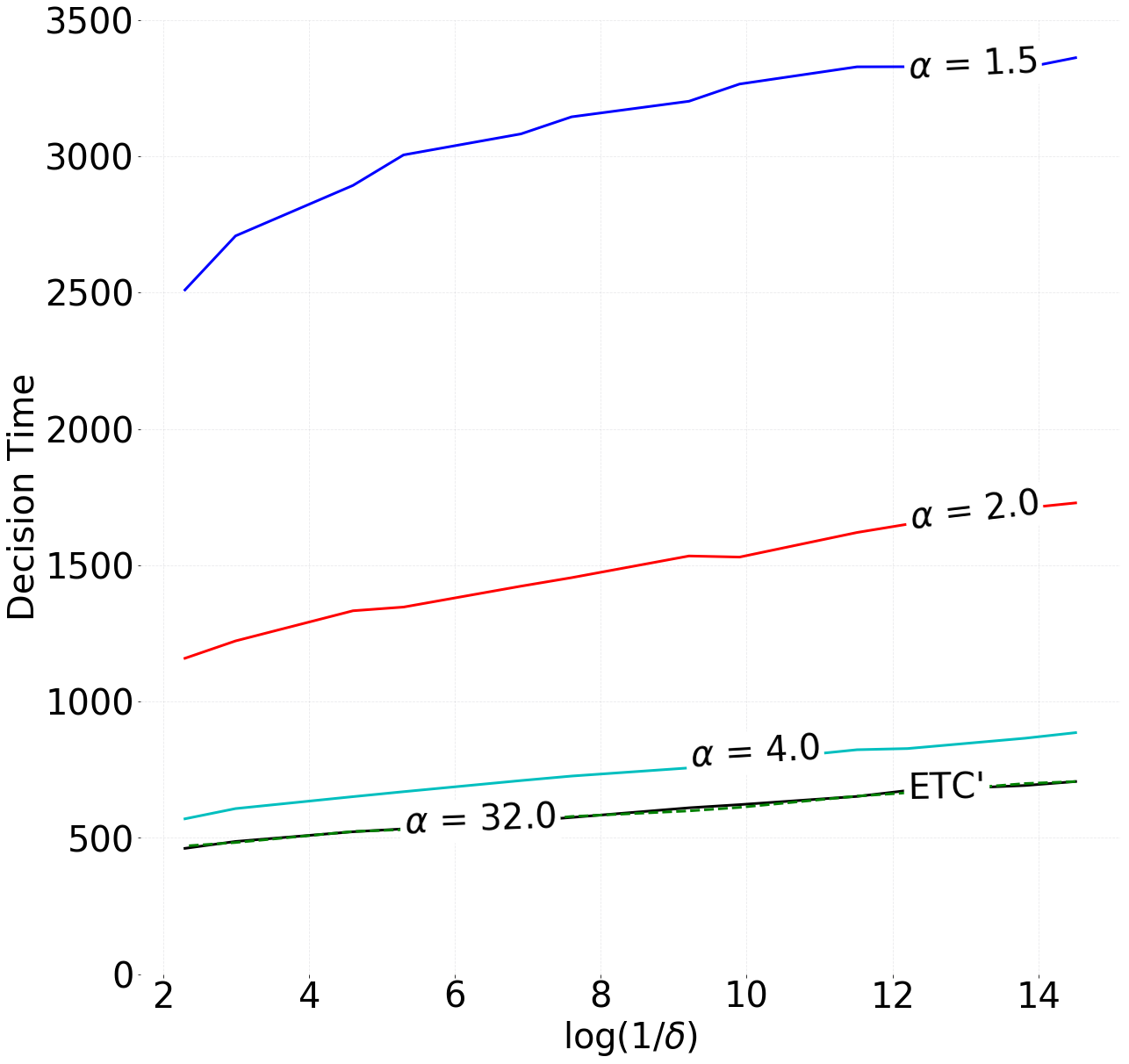} 
\end{center}
\caption{\textbf{First: Regret in the unit setting. Second: Decision time for the unit setting.} The curves are averages of 1000 experiments with two Gaussian arms with means 0 and 1 and $\sigma_r = 1$, $\sigma_\epsilon = 1$.}
\label{fig:emp_results}
\end{figure}

% !TEX root = main_final.tex

\section{Conclusion}
%After having studied A/B test in the fixed confidence setting in order to minimize its cost, we have introduced a new class of UCB algorithms that optimize at the same time both objectives, regret minimization and best arm identification, and that  interpolates between optimal algorithms designed for each case. On a more practical point of view, after testing several values of $\alpha$, we would like to suggest to use $\alpha=2$, i.e., an error term twice as big as needed for regret minimization.

%Finally, we would like to warn practitioners on an intensive use of bandit algorithms for A/B tests. Even if data are collected through time, they are often strongly correlated with the time of arrival and therefore the IID assumption can be violated and bandit algorithms that neglect this aspect could behave quite poorly. Handling these assumptions is left as open question and/or for future research.

We studied A/B tests in the fixed confidence setting from the two perspectives of regret minimization and best arms identification. We introduced a class of algorithms that optimizes at the same time both objectives. It interpolates between optimal algorithms designed for each case. Our study also shed light on an effect often seen in practice: the UCB algorithm not only minimizes regret but also identifies the best arm in finite time, as soon as its exploration term is slightly inflated. This inflation is nearly always present in practice. Indeed, for UCB to be a valid algorithm for a noise and the confidence interval not to be bigger than necessary, it would need to be run with a constant matching exactly the unknown sub-Gaussian norm of the noise.

We extended our study to a non-iid setting by deriving adapted concentration results for the mean-of-means estimator, again obtaining algorithms which interpolate between the two objectives. We would however like to warn practitioners on an intensive use of bandit algorithms for A/B tests. Even if data are collected through time, it can prove to be difficult to define a time of arrival for units not correlated with the data to ensure the iid assumption holds as bandit algorithms that neglect this aspect could behave quite poorly. Handling such problem would require to model the unit stochastic process \textit{conditionally} to the time of arrival which would be a use case for generalizing our results to more complex stochastic processes for unit modelling.

% they are often strongly correlated with the time of arrival and therefore the iid assumption can be violated and bandit algorithms that neglect this aspect could behave quite poorly. Handling these assumptions is left as open question and/or for future research.
 \section{Aknowledgement}
 The fourth author has benefited from the support of the FMJH Program Gaspard Monge in optimization and operations research (supported in part by EDF), from the Labex LMH and from the CNRS through the PEPS program.
\bibliography{main_final}
\bibliographystyle{abbrvnat}

\newpage

\onecolumn

\appendix

% !TEX root = main_final.tex

%\section{Graphical Summary of the Setting}
%
%\section{Intuition on the difference between UCB, ETC and $UCB_{\alpha}$}
%\begin{figure}[h]
%\centering
%\includegraphics[width=0.80\textwidth]{decision_rule}
%\caption{\textbf{Intuition behind the difference between UCB, ETC and $UCB_{\alpha}$}}
%\end{figure}

\section{Concentration Inequalities}\label{Appendix:Concentration}

%We first recall Doob's martingale inequality,
%\begin{theorem}[Doob's Inequality]
%Let $X_t$ be a submartingale, then for any constant $C$,
%$$
%\lP\left(\sup_{t \in [T]} X_t \geq C\right) \leq \frac{\sup_{t \in %[T]} \lE |X_t|}{C}
%$$
%\label{thm:doob}
%\end{theorem}

We provide here required results of anytime bounds that hold in the i.i.d. setting for the interested reader. %Note that only the anytime Bernstein one from Th. \ref{blip} is used in the paper in the proof of Th. \ref{thm:max_bernstein_dynamic}. The others are provided to give an introduction of the proof technique in simpler settings. Especially, the two theorems on anytime Hoeffding concentration (Lemma \ref{lem:hoeffding_interval} and Lemma \ref{lem:anytime_hoeffding}) should provide a good idea of the main techniques used across all the other proofs in a simple setting.

\paragraph{Hoeffding on intervals}
\begin{lemma}
Let $Z_t$ be a $\sigma^2$-subGaussian martingale difference sequence  then for every $\delta >0$ and every integers $T_1 \leq  T_2  \in \lN$
$$
\mathbb{P} \Big\{ \exists t\in[T_1;T_2], \overline{Z}_t \geq \sqrt{\frac{2\sigma^2}{t}  \log\left(\frac{1}{\delta}\right) \phi\left(\frac{T_2}{T_1}\right)}\Big\} \leq \delta
$$
where the mapping $\phi(\cdot)$ is defined by
$$
\phi(x) = \frac{1+x+2\sqrt{x}}{4\sqrt{x}}$$  
 and it holds that $1- \frac{(x-1)^2}{16} \leq \frac{1}{\phi(x)} \leq 1$.
\label{lem:hoeffding_interval}
\end{lemma}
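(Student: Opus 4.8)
The plan is to obtain the uniform bound from a single application of a maximal inequality for a nonnegative supermartingale, with the only free parameter calibrated to the geometric mean $t^\star=\sqrt{T_1T_2}$ of the two endpoints of the interval.

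First I would set up the exponential supermartingale. Let $(\mathcal F_t)$ be a filtration making $(Z_t)$ a martingale difference sequence, write $S_t=\sum_{s=1}^t Z_s$, and fix $\lambda>0$. Conditional $\sigma^2$-subGaussianity gives $\EE[e^{\lambda Z_t}\mid\mathcal F_{t-1}]\le e^{\lambda^2\sigma^2/2}$, so $M_t:=\exp\!\big(\lambda S_t-t\lambda^2\sigma^2/2\big)$ is a nonnegative supermartingale with $M_0=1$. Ville's inequality (equivalently Doob's maximal inequality for nonnegative supermartingales) then yields $\PP\{\exists\, t\le T_2:\ M_t\ge 1/\delta\}\le\delta$, that is,
\[
\PP\Big\{\exists\, t\le T_2:\ S_t\ \ge\ \tfrac{\log(1/\delta)}{\lambda}+\tfrac{\lambda\sigma^2 t}{2}\Big\}\ \le\ \delta .
\]

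Next I would choose $\lambda=\sqrt{2\log(1/\delta)/(\sigma^2 t^\star)}$ with $t^\star=\sqrt{T_1T_2}$, which rewrites the right-hand threshold as $\sqrt{\tfrac12\sigma^2\log(1/\delta)}\,\big(\sqrt{t^\star}+t/\sqrt{t^\star}\big)$. The remaining task is to check that, for every $t\in[T_1,T_2]$, this is at most $t\sqrt{\tfrac{2\sigma^2}{t}\log(1/\delta)\,\phi(T_2/T_1)}=\sqrt{2\sigma^2 t\log(1/\delta)\,\phi(T_2/T_1)}$. Squaring and simplifying, this is equivalent to $\tfrac14\big(t^\star/t+2+t/t^\star\big)\le\phi(T_2/T_1)$; since $t\mapsto t^\star/t+t/t^\star$ is convex it is maximised on $[T_1,T_2]$ at $t\in\{T_1,T_2\}$, where it equals $\sqrt x+1/\sqrt x$ with $x=T_2/T_1$, and $\tfrac14(\sqrt x+1/\sqrt x+2)=(1+x+2\sqrt x)/(4\sqrt x)=\phi(x)$ exactly. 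Dividing the high-probability bound on $S_t$ by $t$ then gives the claimed deviation bound. For the elementary estimate on $1/\phi$, I would substitute $y=\sqrt x\ge1$, so that $1/\phi(x)=4y/(1+y)^2$ and $1-1/\phi(x)=\big(\tfrac{y-1}{y+1}\big)^2\ge0$ (hence $1/\phi\le1$), and the inequality $1-1/\phi(x)\le(x-1)^2/16=(y-1)^2(y+1)^2/16$ reduces, after cancelling $(y-1)^2$, to $(y+1)^4\ge16$, which holds because $y\ge1$.

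The one step requiring care is the calibration of $\lambda$: a union bound over a geometric grid of scales inside $[T_1,T_2]$ would lose a $\log$ factor, whereas a single $\lambda$ tuned to $t^\star=\sqrt{T_1T_2}$ works because the overshoot $\tfrac14\big(t^\star/t+t/t^\star\big)$ incurred away from $t^\star$ is controlled uniformly on $[T_1,T_2]$ by its endpoint value $\tfrac14(\sqrt x+1/\sqrt x)$ — and this is precisely what forces the exact shape of $\phi$. Everything else is the routine Chernoff/supermartingale computation.
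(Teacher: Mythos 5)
Your proof is correct and follows essentially the same route as the paper's: the same exponential supermartingale with a maximal (Ville/Markov) inequality, the same choice of $\lambda$ tuned to $\sqrt{T_1T_2}$, and the same verification that the resulting linear-in-$t$ threshold is dominated on $[T_1,T_2]$ by $\sqrt{2\sigma^2 t\log(1/\delta)\phi(T_2/T_1)}$ (the paper phrases this via a concave envelope $\beta\sqrt{t}$ matching at the endpoints, you via convexity of $t\mapsto t^\star/t+t/t^\star$ — the same computation). Your explicit verification of the bounds on $1/\phi$, which the paper only asserts, is a correct added detail.
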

\begin{proof}
For any $\lambda > 0$, it holds that 
\begin{align*}
\lE\left[e^{\lambda t\overline{Z}_t}\middle| \cF_{t-1}\right]
     = \lE\left[e^{\lambda Z_t} \middle| \cF_{t-1}\right] e^{\lambda (t-1)\overline{Z}_{t-1}} \leq e^{\frac{\lambda^2 \sigma^2}{2}} e^{\lambda (t-1)\overline{Z}_{t-1}}\end{align*}
    since $Z_t$ is $\sigma^2$-subGaussian. 
Thus $X_t = e^{\lambda t\overline{Z}_t - \frac{\lambda^2\sigma^2 t}{2}}$ is a supermartingale and we derive from Markov's inequality that:
\begin{align}
\lP\left(\exists t \in [T_1,T_2], \lambda t\overline{Z}_t \geq \frac{\lambda^2\sigma^2}{2}t + \varepsilon \right)
\leq
e^{-\varepsilon}
\label{eq:hoeffding_interval_proof_1}
 \end{align}
As $t \mapsto \sqrt{t}$ is concave on $[T_1,T_2]$, we can find $\beta$ such that $\beta \sqrt{t} \geq \frac{\lambda^2\sigma^2}{2}t + \varepsilon$ with equality in $T_1$ and $T_2$. From these two equalities, we choose $\lambda$ and $\beta$ as function of $\varepsilon$:
\begin{align*}
    \lambda = \sqrt{\frac{2\varepsilon}{\sigma^2\sqrt{T_1 T_2}}} ~~~~~~~~~ \beta = \varepsilon \left(\frac{\sqrt{T_1}+\sqrt{T_2}}{\sqrt{T_1 T_2}}\right)
\end{align*}
Combining this with \eqref{eq:hoeffding_interval_proof_1}, we finally obtain:
\begin{align*}
   & \lP\left(\exists t \in [T_1,T_2],
    \lambda t\overline{Z}_t \geq \beta \sqrt{t} \right)
    =
    \lP\left(\!
    \exists t \in [T_1,T_2],
    \overline{Z}_t \geq \sqrt{\frac{\sigma^2\varepsilon}{t}\frac{(\sqrt{T_1}+\sqrt{T_2})^2}{4\sqrt{T_1 T_2}}} 
    \right)
    \leq
    e^{-\varepsilon},
\end{align*}
this yields the result.
\end{proof}

\paragraph{Anytime Hoeffding}

\begin{lemma}\label{lem:anytime_empirical_hoeffding}
Let $Z_t$ be a $\sigma^2$-sub-Gaussian martingale difference sequence. Then, for any $\alpha>0 $ and $\delta>0$ satisfying
$$ 1+\frac{1}{\log(\frac{\log(2)}{\delta})}<\alpha < \frac{\log(\frac{\log(2)}{\delta})}{8},$$
it holds that
\begin{align*}
\mathbb{P} \Big\{ \exists t\in \N, \overline{Z}_t \geq  \sqrt{\frac{2\sigma^2}{t}\log(\frac{\log_2^\alpha(t)}{\delta})} \Big\} \leq c_{\alpha} \delta\sqrt{\log\Big(\frac{\delta}{\log(2)}\Big)} + \delta(\frac{e^{\frac{\alpha}{2}}}{\log(2)}+1) \: ,
\end{align*}
\begin{align*}\text{where }\  c_\alpha &=  \frac{e^{\frac{\alpha}{2}}}{\log(2)} \sqrt{\frac{1}{8\alpha}}\frac{1}{\alpha(1-\frac{\alpha}{2\log(\frac{\log(2)}{\delta}})-1}
 \leq     \frac{e^{\frac{\alpha}{2}}}{\log^\alpha(2)} \sqrt{\frac{1}{8\alpha}}\frac{16/15}{\alpha-16/15},\end{align*}
so that $c_2\approx 1.62$ and $\log_2(\cdot)$ is the natural logarithm in basis 2, with the extra assumption that $\log_2(1)=1$.

If $\alpha, \delta>0$ are such that
$$
1+\frac{1}{\log(\frac{1}{\delta})} \leq \alpha \leq \frac{1}{2}\log(\frac{1}{\delta})
$$
then
\begin{align*}
\mathbb{P} \Big\{ \exists t\in \N, \overline{Z}_t \geq  \sqrt{\frac{2\sigma^2}{t}\log(\frac{\log_2^\alpha(t)}{\delta})} \Big\}\leq c'_{\alpha} \delta \log^{\frac{\alpha}{2}}(\frac{1}{\delta}) + \delta(1+\sqrt{\frac{8\alpha}{\log(1/\delta)}}) \: ,
\end{align*}
$$\text{where }\ c'_{\alpha} =  \Big(\frac{2e}{\alpha}\Big)^{\alpha/2} \zeta\big(\alpha-\frac{\alpha^2}{2\log(1/\delta)}\big) \leq     \Big(\frac{2e}{\alpha}\Big)^{\alpha/2} \zeta\big(\frac{3\alpha}{4}),$$
so that $ c'_2\approx 7.11$. 
\end{lemma}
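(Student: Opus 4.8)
Both displayed bounds follow from the same peeling (stratification) argument applied to the martingale $\overline Z_t$, invoking the interval bound of Lemma~\ref{lem:hoeffding_interval} on a geometric grid whose common ratio is tuned to $\delta$ and $\alpha$. Write $L=\log(1/\delta)$. The plan is to take ratio $\eta=1+\sqrt{8\alpha/L}$ and $\gamma=\log_2\eta$, and to record at the outset three elementary facts that will be used repeatedly: the hypothesis $\alpha\le\tfrac12 L$ forces $\sqrt{8\alpha/L}\le 2$, so $\eta\in(1,3]$; $\log_2(1+x)\ge\tfrac14 x$ for $x\in(0,2]$, so $\gamma\ge\sqrt{\alpha/(2L)}$; and the hypothesis $\alpha\ge 1+\tfrac1L$ is precisely what makes $\alpha-\tfrac{\alpha^2}{2L}>1$.

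First I would decompose $\mathbb{N}=\bigsqcup_{k\ge 0}I_k$ with $I_0=\{t:1\le t<\eta\}$ and $I_k=\{t:\eta^k\le t<\eta^{k+1}\}$ for $k\ge1$, and union-bound over the strata. The stratum $I_0$ contains at most $\eta=1+\sqrt{8\alpha/L}$ integers; for each integer $t$ in it one has $\log_2^{\alpha}(t)\ge 1$ (for $t=1$ by the convention $\log_2 1=1$, for $t\ge 2$ since $\log_2 t\ge 1$), so the boundary at $t$ dominates $\sqrt{(2\sigma^2/t)L}$, and since $\sum_{s\le t}Z_s$ is $t\sigma^2$-sub-Gaussian, $\mathbb{P}\big(\overline Z_t\ge\sqrt{(2\sigma^2/t)L}\big)\le e^{-L}=\delta$. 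A union bound over $I_0$ thus contributes at most $\delta\,(1+\sqrt{8\alpha/L})$, the last summand of each displayed bound.

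For $k\ge1$ I would apply Lemma~\ref{lem:hoeffding_interval} on the integer interval spanned by $I_k$ (its endpoint ratio is at most $\eta$, and $\phi$ is nondecreasing) with per-stratum confidence level $\delta_k$ chosen so that $\log(1/\delta_k)\,\phi(\eta)=\log\!\big((k\gamma)^{\alpha}/\delta\big)$, i.e.\ $\delta_k=\big(\delta/(k\gamma)^{\alpha}\big)^{1/\phi(\eta)}$. Since $\log_2 t\ge k\gamma$ throughout $I_k$, the boundary $\sqrt{(2\sigma^2/t)\log(\log_2^{\alpha}(t)/\delta)}$ is at least the threshold that Lemma~\ref{lem:hoeffding_interval} delivers at level $\delta_k$, so $\mathbb{P}\big(\exists t\in I_k:\overline Z_t\ge\text{boundary}\big)\le\delta_k$. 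Summing, $\sum_{k\ge1}\delta_k=\delta^{1/\phi(\eta)}\,\gamma^{-\alpha/\phi(\eta)}\,\zeta\!\big(\alpha/\phi(\eta)\big)$, which converges by the condition on $\alpha$.

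Finally I would estimate the three factors using $1/\phi(\eta)\ge 1-(\eta-1)^2/16=1-\alpha/(2L)$, the quantitative content of Lemma~\ref{lem:hoeffding_interval}: $\delta^{1/\phi(\eta)}\le\delta^{\,1-\alpha/(2L)}=\delta\,\delta^{-\alpha/(2L)}=\delta\,e^{\alpha/2}$; $\gamma^{-\alpha/\phi(\eta)}\le\max(1,\gamma^{-\alpha})\le(2L/\alpha)^{\alpha/2}$ from $\gamma\ge\sqrt{\alpha/(2L)}$ and $2L\ge\alpha$; and $\zeta(\alpha/\phi(\eta))\le\zeta(\alpha-\alpha^2/(2L))$ since $\zeta$ is decreasing and $\alpha/\phi(\eta)\ge\alpha-\alpha^2/(2L)$. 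Multiplying yields $\sum_{k\ge1}\delta_k\le\delta\,e^{\alpha/2}(2L/\alpha)^{\alpha/2}\zeta(\alpha-\alpha^2/(2L))=c'_{\alpha}\,\delta\,\log^{\alpha/2}(1/\delta)$, which together with the $I_0$ term is the second display; the simplification $c'_{\alpha}\le(\tfrac{2e}{\alpha})^{\alpha/2}\zeta(\tfrac{3\alpha}{4})$ uses $\alpha-\alpha^2/(2L)\ge\tfrac34\alpha$, again from $\alpha\le\tfrac12 L$. The first (base-$2$) display is obtained by the same scheme, only bounding the tail series by $1+(s-1)^{-1}$ rather than by $\zeta(s)$ and retaining $\gamma^{-1}\le\sqrt{2L/\alpha}$ as an explicit prefactor, which produces the $\sqrt{\log(\cdot)}$ factor and the constants $\tfrac1{\alpha(1-\cdots)-1}$, $\sqrt{1/(8\alpha)}$, $e^{\alpha/2}/\log 2$ appearing there. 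The main obstacle is the tuning of $\eta$: it must be small enough that $\phi(\eta)$ stays below $(1-\alpha/(2L))^{-1}$ — keeping the $\delta$-deflation down to a factor $e^{\alpha/2}$ and the series exponent above $1$ — yet large enough that $I_0$ is short and that $\gamma=\log_2\eta$ is not too small, and checking that $\eta=1+\sqrt{8\alpha/\log(1/\delta)}$ meets all of these at once, through $\log_2(1+x)\ge x/4$ and $\alpha-\alpha^2/(2L)>1$, is the delicate part.
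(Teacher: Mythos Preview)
Your proposal is correct and follows essentially the same approach as the paper's own proof: a geometric peeling with ratio $1+\sqrt{8\alpha/\log(1/\delta)}$, application of Lemma~\ref{lem:hoeffding_interval} on each stratum via the inequality $1/\phi(\eta)\ge 1-(\eta-1)^2/16$, and a pointwise union bound over the first block. Your notation is swapped relative to the paper (your $\eta$ is the paper's $\gamma$, your $\gamma=\log_2\eta$ is the paper's $\log_2\gamma$), and you present the second display in full while sketching the first as a variant, whereas the paper treats them in the opposite order; for the first display the paper also uses the slightly different tuning $\eta^2=8\alpha/\log(\log(2)/\delta)$ and starts the tail sum at $m=\lfloor 1/\log_2\gamma\rfloor$ rather than $m=1$, which you should make explicit, but otherwise the arguments coincide.
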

\begin{proof}
We are going to use the fact that with probability at least $1-\delta$,  for all $s \in [T_1,T_2]$,
$$
s\Big(\overline{Z}_s - \EE[Z]\Big) \leq \sqrt{2\sigma^2 s\phi(\frac{T_2}{T_1})\log(\frac{1}{\delta})}\: .
$$
Define $\varepsilon_t=  \sqrt{\frac{2\sigma^2}{t}\log(\frac{\log_2^\alpha(t)}{\delta})} $ so that with $\gamma=1+\eta>1$,

$$
\PP\Big\{ \exists t \in \N, \overline{Z}_t \geq \varepsilon_t\Big\} \leq \sum_{m=0}^{ \infty}\mathbb{P} \Big\{ \exists t\in [\gamma^m, \gamma^{m+1}], \overline{Z}_t  \geq \varepsilon_t\Big\}
$$
The case $m=0$ (corresponding to $t=1 $) will be handled separatly at the cost of an extra $\delta$ term. Note that
\begin{align*}
& \sum_{m= \lfloor \frac{1}{\log_2(\gamma)}\rfloor}\mathbb{P} \Big\{ \exists t\in [\gamma^m, \gamma^{m+1}], \overline{Z}_t  \geq \sqrt{\frac{2\sigma^2}{t}\log(\frac{\log_2^\alpha(\gamma^m)}{\delta})} \Big\}\\
&\leq \sum_{m= \lfloor \frac{1}{\log_2(\gamma)}\rfloor}\mathbb{P} \Big\{ \exists t\in [\gamma^m, \gamma^{m+1}], \overline{Z}_t  \geq  \sqrt{\frac{2\sigma^2}{t}\log(\frac{\log_2^\alpha(\gamma^m)}{\delta})} \sqrt{\phi(\gamma)(1-\frac{\eta^2}{16})}\Big\}\\
&=    \sum_{m= \lfloor \frac{1}{\log_2(\gamma)}\rfloor}\mathbb{P} \Big\{ \exists t\in [\gamma^m, \gamma^{m+1}], \overline{Z}_t  \geq
\sqrt{\frac{2\sigma^2}{t} \phi(\frac{\gamma^{m+1}}{\gamma^m}) \log \left(\left(\frac{\log_2^{\alpha}(\gamma^m)}{\delta}\right)^{1-\frac{\eta^2}{16}}\right)} \Big\} \: .
\end{align*}
We can now apply Lemma~\ref{lem:hoeffding_interval}. And this gives, assuming $\gamma \leq 2$ (i.e., $\eta <1$) for the moment, 
\begin{align*}
 \PP \Big\{ \exists t \in \N, \overline{Z}_t \geq \varepsilon_t\Big\}
&\leq \sum_{m= \lfloor \frac{1}{\log_2(\gamma)}\rfloor} \Big(\frac{\delta}{ \log_2^\alpha(\gamma^{m})}\Big)^{1-\frac{\eta^2}{16}} + \delta\\
&=    \Big(\frac{\delta}{\log_2^\alpha(\gamma)}\Big)^{1-\frac{\eta^2}{16}}\sum_{m= \lfloor \frac{1}{\log_2(\gamma)}\rfloor} \frac{1}{m^{\alpha(1-\frac{\eta^2}{16})}}+ \delta\\
& \leq (\frac{\delta}{\log^\alpha(2)})^{1-\frac{\eta^2}{16}}\Big(1+\frac{1}{\alpha(1-\frac{\eta^2}{16})-1}\frac{1}{\log_2(\gamma)}\Big()+\delta\Big)\\
& \leq (\frac{\delta}{\log^\alpha(2)})^{1-\frac{\eta^2}{16}}\Big(1+\frac{1}{\alpha(1-\frac{\eta^2}{16})-1}\frac{1}{\eta}\Big)+\delta
\end{align*}

%where $\zeta(\cdot)$ is the Riemann function (we recall that $\zeta(s)\leq \frac{s}{s-1}$). It remains to optimize the last quantity with respect to $\eta$.

The choice of $\eta^2 = 8 \alpha /\log(\log(2)/\delta)$, which ensure that $\eta < 1$ as long as $\alpha < \frac{1}{8}\log(\frac{\log(2)}{\delta})$ gives
\begin{align*}
 \PP \Big\{ \exists t \in \N, \overline{Z}_t \geq \varepsilon_t\Big\}
&\leq \frac{\delta}{\log^\alpha(2)} e^{\frac{\alpha}{2}}\Big(\sqrt{\frac{\log(\frac{\log(2)}{\delta})}{8\alpha}}\frac{1}{\alpha(1-\frac{\alpha}{2\log(\frac{\log(2)}{\delta}})-1}+1\Big)+\delta\\
& \leq \delta\sqrt{\log\big(\frac{\log(2)}{\delta}\big)}\frac{1}{\log^\alpha(2)} e^{\frac{\alpha}{2}}\sqrt{\frac{1}{8\alpha}}\frac{16/15}{\alpha-16/15}+\delta\Big(\frac{e^{\frac{\alpha}{2}}}{\log^\alpha(2)}+1\Big)
\end{align*}

% and restrict ourselves to $\delta < 0.08$ and $\alpha$ such that 
%$$
%1+\frac{1}{\log(\frac{1}{\delta})}<\alpha <\frac{\log(\frac{1}{\delta})}{2}
%$$
%so that $\alpha(1-\frac{\eta^2}{16})\geq \max\{1,\frac{3\alpha}{4})$ and $\eta \leq 2$.  Using the fact that $\log(1+\eta)\geq \eta/2$, we get
We now consider the case where $\gamma$ might be bigger than  $2$, but let us assume for now that $\gamma <5$ (i.e., $\eta \leq 4$) and the exact same argument with the choice of $\eta^2 = 8 \alpha /\log(1/\delta)$ gives

\begin{align*}
\PP\Big\{ \exists t \in \N, \overline{Z}_t \geq \varepsilon_t\Big\} &\leq \sum_{m=1}^{ \infty} \Big(\frac{\delta}{ \log_2^\alpha(\gamma^{m})}\Big)^{1-\frac{\eta^2}{16}} + \gamma\delta\\
&\leq \Big(\frac{\delta}{\log_2^\alpha(1+\eta)}\Big)^{1-\frac{\eta^2}{16}} \zeta(\alpha(1-\frac{\eta^2}{16}))+ \gamma\delta\\
&\leq \delta^{1-\frac{\eta^2}{16}}\Big(\frac{4}{\eta}\Big)^{\alpha} \zeta(\alpha(1-\frac{\eta^2}{16}))+ \gamma\delta\\
&=    \delta e^{\frac{\alpha}{2}} \Big(\frac{2\log(1/\delta)}{\alpha}\Big)^{\alpha/2} \zeta\big(\alpha-\frac{\alpha^2}{2\log(1/\delta)}\big)+\Big(1+\sqrt{\frac{8\alpha}{\log(1/\delta)}}\Big)\delta\\
&=    \delta \log^{\frac{\alpha}{2}}\big(\frac{1}{\delta}\big)  \Big(\frac{2e}{\alpha}\Big)^{\alpha/2} \zeta\big(\alpha-\frac{\alpha^2}{2\log(1/\delta)}\big)+\Big(1+\sqrt{\frac{8\alpha}{\log(1/\delta)}}\Big)\delta
\end{align*}
\end{proof}

\begin{corollary}
Let $Z_t$ be a $\sigma^2$-sub-Gaussian martingale difference sequence. Then for $\delta>0$ small enough
\begin{align*}
\mathbb{P} \Big\{ \exists t\in \N, \overline{Z}_t \geq  \sqrt{\frac{2\sigma^2}{t}\log(\frac{\log^2(t)}{\delta})} \Big\}
\leq \widetilde{\delta} \: ,
\end{align*}
where $\widetilde{\delta}$ is defined by either
$$
\widetilde{\delta}=c_2 \frac{{\delta}}{\log^2(2)}\sqrt{\log(\frac{\log^3(2)}{{\delta}})}+\frac{{\delta}}{\log^2(2)}(\frac{e}{\log(2)}+1) \ ,
$$
or, depending on the range of $\delta$, by, 
$$
\widetilde{\delta}=c'_2\frac{{\delta}}{\log^2(2)} \log(\frac{\log^2(2)}{{\delta}})+5\frac{{\delta}}{\log^2(2)}.
$$
In the first case, $\widetilde{\delta}$ is of the order of $\delta\sqrt{\log(\frac{1}{\delta})}$ and, in the second case, of the order of $\delta\log(\frac{1}{\delta})$.
\end{corollary}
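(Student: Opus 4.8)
The plan is to derive the corollary as the instance $\alpha = 2$ of Lemma~\ref{lem:anytime_empirical_hoeffding}, after a change of logarithm base. The only genuinely new ingredient is the substitution that reconciles the natural logarithm $\log^2(t)$ appearing here with the base-$2$ logarithm $\log_2^\alpha(t)$ used in the lemma. First I would write $\log_2(t) = \log(t)/\log(2)$, so that for any $\delta' > 0$, $\log_2^2(t)/\delta' = \log^2(t)/\bigl(\delta' \log^2(2)\bigr)$; hence the event $\bigl\{\exists t \in \N,\ \overline{Z}_t \geq \sqrt{(2\sigma^2/t)\log(\log^2(t)/\delta)}\bigr\}$ coincides exactly with the event in the lemma evaluated at $\alpha = 2$ and at the confidence level $\delta'$ defined by $\delta' \log^2(2) = \delta$, i.e. $\delta' = \delta/\log^2(2)$ (the $t=1$ term being handled by the lemma's convention $\log_2(1) = 1$). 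Applying the lemma with this $\delta'$ and $\alpha = 2$ --- for which $e^{\alpha/2} = e$, $\sqrt{8\alpha} = 4$, $c_2 \approx 1.62$ and $c'_2 \approx 7.11$ --- then bounds the probability, in each of the two regimes, by the corresponding right-hand side with $\delta$ replaced by $\delta/\log^2(2)$.

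Next I would simplify the two resulting expressions. In the first regime, $\log\bigl(\log(2)/\delta'\bigr) = \log\bigl(\log^3(2)/\delta\bigr)$, which gives directly $\widetilde{\delta} = c_2 \tfrac{\delta}{\log^2(2)} \sqrt{\log(\log^3(2)/\delta)} + \tfrac{\delta}{\log^2(2)}\bigl(\tfrac{e}{\log(2)} + 1\bigr)$. In the second regime, $\log(1/\delta') = \log\bigl(\log^2(2)/\delta\bigr)$, and I would absorb the second additive term using $1 + \sqrt{16/\log(1/\delta')} \leq 5$, which holds as soon as $\log(1/\delta') \geq 1$, to obtain $\widetilde{\delta} = c'_2 \tfrac{\delta}{\log^2(2)} \log(\log^2(2)/\delta) + 5\tfrac{\delta}{\log^2(2)}$. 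The asymptotic orders stated at the end then follow by inspection: $\delta/\log^2(2) = \Theta(\delta)$ while $\sqrt{\log(\log^3(2)/\delta)} = \Theta(\sqrt{\log(1/\delta)})$ and $\log(\log^2(2)/\delta) = \Theta(\log(1/\delta))$, giving rates $\delta\sqrt{\log(1/\delta)}$ and $\delta\log(1/\delta)$ respectively.

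Finally I would pin down the meaning of ``$\delta$ small enough'' by translating the admissibility hypotheses of the lemma through the substitution $\delta' = \delta/\log^2(2)$. For the first bound the condition $1 + 1/\log(\log(2)/\delta') < 2 < \tfrac18 \log(\log(2)/\delta')$ becomes $\log(\log^3(2)/\delta) > 16$; for the second, $1 + 1/\log(1/\delta') \leq 2 \leq \tfrac12 \log(1/\delta')$ becomes $\log(\log^2(2)/\delta) \geq 4$. Both hold for all sufficiently small $\delta$, and the remark ``depending on the range of $\delta$'' just records that for extremely small $\delta$ the first bound is sharper (its $\sqrt{\log(1/\delta)}$ factor beats $\log(1/\delta)$), whereas the second is cleaner. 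I do not expect a real obstacle here: the one place to be careful is the base change, since an error there would rescale $\delta$ by the wrong power of $\log(2)$ and corrupt the $\log^2(2)$ and $\log^3(2)$ constants in the final expressions, so I would verify that substitution before filling in the rest.
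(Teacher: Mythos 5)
Your proposal is correct and follows exactly the route the paper intends: the corollary is the $\alpha=2$ instance of Lemma~\ref{lem:anytime_empirical_hoeffding} after the base change $\log_2^2(t)=\log^2(t)/\log^2(2)$, applied at confidence level $\delta'=\delta/\log^2(2)$, which is precisely how the constants $\log^2(2)$ and $\log^3(2)$ and the simplification $1+\sqrt{16/\log(1/\delta')}\leq 5$ arise. The paper gives no separate proof of the corollary, and your derivation, including the translated admissibility conditions on $\delta$, fills in exactly the intended computation.
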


The following concentration inequality will be useful for the non-iid case. In that framework, at some stage $t$, the total number of users in the population is random, denoted by $n_t$. We recall that $r_n$ denotes the mean of the $n$-th user of the population and $\varepsilon_{n,s}$  is the random white noise for that user after he is in the population for $s$ stages. In the remaining, we assume that the expectation of $r_u$ is equal to $r$ and that this random variable is $\sigma_r^2$-subGaussian, On the other hand, the expectation of $\varepsilon$ is naturally $0$ and this random variable is $\sigma_\varepsilon^2$-subGaussian. An algorithm is therefore a sampling policy $\mathcal{A}$  that indicates after seeing the first $n$ values of $r_u$ plus some  empirical average noise $\overline{\varepsilon}_{u,t}$ % (there might not be $t$ terms in that average) 
 at time $t \in \N$ whether to add a new user or not. We denote by $\mathcal{A}_t \in \{0,1\}$ the decision to include a new user or not at stage $t$. We denote by $\mathcal{T}_n \in \N$ the time where the $n$-th user is added, by $\tau_n=\mathcal{T}_{n+1}-\mathcal{T}_{n}$ the number of stages with exactly $n$ users and by $\tau_{m:n} = \sum_{s=m}^{n-1} \tau_s$ the number of stages between the arrival of the $m$-th user and the $n$-th one. We also denote by $n+t$ the number of user at stage $t \in \N$,

\begin{proposition}
For any algorithm and $\delta >0$, it holds
\begin{align*}
\mathds{P}\Big\{\exists t \in \N, \frac{1}{n_t} \sum_{u=1}^{n_t} r_u + \overline{\varepsilon}_{u,t-\mathcal{T}_u+1} 
\leq r - \sqrt{\frac{2\Big(\sigma_r^2 + \frac{\sigma_\varepsilon^2\log(en_t)}{n_t}\Big)}{n_t}\log\Big(\frac{(4n_t)^4}{{6\delta}}\max\{1,\frac{n_t\sigma_r^2}{\sigma_\varepsilon^2}\}\Big) } \Big\} \leq \widetilde{\delta}
\end{align*}
\end{proposition}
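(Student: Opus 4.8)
The statement is a time-uniform (anytime) lower deviation bound for the mean-of-means estimator $\hat r_t^{\cA}=\frac{1}{n_t}\sum_{u=1}^{n_t}\big(r_u+\overline\varepsilon_{u,t-\cT_u+1}\big)$ when both the number of units $n_t$ and the per-unit number of observations $t-\cT_u+1$ are random and chosen adaptively. The plan is to decompose the deviation into a \emph{unit-level} part $\frac{1}{n_t}\sum_{u=1}^{n_t}(r_u-r)$ and a \emph{noise} part $\frac{1}{n_t}\sum_{u=1}^{n_t}\overline\varepsilon_{u,t-\cT_u+1}$, control each on a peeling grid, and then recombine. For the unit part, $r_u-r$ is an i.i.d.\ $\sigma_r^2$-subGaussian sequence, so after $n$ units the average is $\sigma_r^2/n$-subGaussian; since $n_t$ is a stopping time adapted to the filtration, an anytime Hoeffding bound of the type in Lemma~\ref{lem:anytime_empirical_hoeffding} (or the simple peeling in Lemma~\ref{lem:hoeffding_interval}) gives $\frac{1}{n_t}\sum_{u\le n_t}(r_u-r)\ge -\sqrt{\tfrac{2\sigma_r^2}{n_t}\log(\tfrac{\cdot}{\delta})}$ uniformly in $t$ with probability $1-O(\delta)$, and this is where the $\sigma_r^2/n_t$ term and part of the $\log((4n_t)^4/\cdot)$ factor come from.

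The harder, genuinely non-standard part is the noise term. The difficulty is that $\overline\varepsilon_{u,s}=\frac1s\sum_{\ell=1}^s\varepsilon_{u,\ell}$ is a per-unit average over a \emph{random} number $s=t-\cT_u+1$ of increments, and these horizons differ across units and are determined by the adaptive allocation, so one cannot simply invoke a single martingale concentration inequality in $t$. The plan is: first, for a \emph{fixed} horizon profile, note $\frac{1}{n}\sum_{u\le n}\overline\varepsilon_{u,s_u}$ is a sum of independent zero-mean subGaussian variables with proxy $\frac{\sigma_\varepsilon^2}{n^2}\sum_u \frac1{s_u}$; since every $s_u\ge 1$ this is at most $\frac{\sigma_\varepsilon^2}{n}$, giving a crude $\sqrt{\sigma_\varepsilon^2/n}$ bound, but we want the sharper $\sqrt{\sigma_\varepsilon^2\log(en)/n^2}$ scaling, which requires exploiting that most units have been present for many steps. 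I would handle this by a \emph{double} union bound / peeling: peel over $n_t$ on a geometric grid as before, and inside each slice peel over the possible values of the horizons (or, more cleanly, apply a maximal inequality over $s$ for each unit simultaneously, i.e.\ the "Hoeffding on intervals" Lemma~\ref{lem:hoeffding_interval} applied to each $\varepsilon_{u,\cdot}$), paying a $\log$ factor per unit that aggregates to the $\log(en_t)$ numerator and contributes the $(4n_t)^4$ and the $\max\{1,n_t\sigma_r^2/\sigma_\varepsilon^2\}$ terms inside the logarithm. The $\max\{1,n_t\sigma_r^2/\sigma_\varepsilon^2\}$ is exactly the bookkeeping device that lets one write both contributions under a single square root: it rebalances the confidence budget between the $\sigma_r^2/n_t$ term and the $\sigma_\varepsilon^2\log(en_t)/n_t^2$ term so that the combined variance proxy $\sigma_r^2+\frac{\sigma_\varepsilon^2\log(en_t)}{n_t}$ appears cleanly.

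Finally, I would combine the two pieces: on the intersection of the two good events (each of probability $1-O(\delta)$, with the constants split so the total is $\le\widetilde\delta$), use $\sqrt{a}+\sqrt{b}\le\sqrt{2(a+b)}$ to merge the unit-deviation bound $\sqrt{2\sigma_r^2\log(\cdot)/n_t}$ and the noise bound $\sqrt{2\sigma_\varepsilon^2\log(en_t)\log(\cdot)/n_t^2}$ into the single expression $\sqrt{\tfrac{2(\sigma_r^2+\sigma_\varepsilon^2\log(en_t)/n_t)}{n_t}\log(\tfrac{(4n_t)^4}{6\delta}\max\{1,\tfrac{n_t\sigma_r^2}{\sigma_\varepsilon^2}\})}$, absorbing the factor-$2$ and the various grid constants into the numerical constants $4$, $4$ (the exponent), and $6$ in the denominator. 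The main obstacle is the noise term: getting the $\log(en_t)/n_t$ improvement over the trivial $1/n_t$ bound while the horizons are adaptive, random, and unit-dependent — this is precisely the "combine several types of different inequalities" step the authors flag, and it is what forces the unusual form of the index in Equation~\eqref{EQ:DefUCBMM}. I expect the union bound over units (a factor growing like a power of $n_t$, tamed by the peeling in $t$) to be the delicate accounting, with everything else being routine subGaussian manipulation.
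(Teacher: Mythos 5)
Your decomposition and the harmonic-sum intuition are on target: you correctly identify that for a fixed horizon profile the noise average $\frac{1}{n}\sum_{u\le n}\overline{\varepsilon}_{u,s_u}$ is subGaussian with proxy $\frac{\sigma_\varepsilon^2}{n^2}\sum_u s_u^{-1}$, and the $\log(en)$ indeed comes from $s_u\ge n-u+1$, which gives $\sum_u s_u^{-1}\le\log(en)$. The gap is in the mechanism you propose to make this uniform in time. Applying a maximal inequality to each unit separately and summing the resulting per-unit bounds gives $\frac{1}{n}\sum_u\sqrt{2\sigma_\varepsilon^2\log(\cdot)/s_u}$, and with $s_u\ge n-u+1$ this is of order $\sqrt{\sigma_\varepsilon^2\log(\cdot)/n}$ --- a factor of order $\sqrt{n}$ worse than the claimed $\sqrt{\sigma_\varepsilon^2\log(en)\log(\cdot)/n^2}$. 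The triangle inequality on per-unit deviations destroys exactly the cross-unit cancellation that the harmonic-sum variance proxy encodes, so this route cannot yield the stated bound in the regime where the noise term matters. The alternative you mention, a union bound over horizon profiles, is also not controlled as stated: for fixed $n$ the profile is parametrized by the stage index $s$ elapsed since the $n$-th arrival, and a union over all $s\in\N$ of fixed-$s$ Chernoff bounds diverges.

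The paper resolves both issues with a two-regime argument that your plan is missing. For $s\le 6n^2\max\{1,n\sigma_r^2/\sigma_\varepsilon^2\}$ it keeps the entire sum (unit part and noise part together) inside a single Chernoff bound for each fixed $s$: conditioning on the arrival times makes the moment generating function factorize across units, the per-unit proxies $\sigma_\varepsilon^2/s_u$ add up to at most $\sigma_\varepsilon^2\log(en)$, and the finite union over $s$ costs exactly the $\max\{1,n\sigma_r^2/\sigma_\varepsilon^2\}$ factor and part of the $(4n)^4$ inside the logarithm --- so that factor is the cardinality of a union bound over stages, not a confidence-budget rebalancing device. For $s$ beyond that threshold the paper does decompose as you suggest and uses per-unit anytime bounds; there the $\sqrt{n}$ loss is harmless because $s$ is so large that the noise contribution is dominated by the $\sigma_r^2/n$ term, and the two bounds are merged via $\sqrt{a}+\sqrt{\lambda b}\le\sqrt{a+b}$ for small $\lambda$ rather than your $\sqrt{a}+\sqrt{b}\le\sqrt{2(a+b)}$, which would lose a further factor of $2$ against the stated constant.
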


\begin{proof}We rewrite the statement of the proposition and notice that we just need to prove that, for any $n \in \N$, 
\begin{equation*}\mathds{P}\Big\{ \exists 1 \leq s \leq \tau_n, \overline{r}_n+ \frac{\overline{\varepsilon}_{1,s+\tau_{1:n}}+\ldots+\overline{\varepsilon}_{n,s}}{n} \leq r- \sqrt{\frac{2\Big(\sigma_r^2 + \frac{\sigma_\varepsilon^2\log(en)}{n}\Big)}{n}\log\Big(\frac{36n^4}{{\delta}}\max\{1,\frac{n\sigma_r^2}{\sigma_\varepsilon^2}\}\Big) } \Big\} \leq   \frac{1}{3}\frac{\widetilde{\delta}}{n^{3/2}}\ .\end{equation*}
The exponent $3/2$ will come from the fact that $\widetilde{\delta}$ is of the order of $\delta \log \frac{1}{\delta}$ (and not $\delta$). We will even prove the following
\begin{equation*}\mathds{P}\Big\{ \exists 1 \leq s < \infty, \overline{r}_n+ \frac{\overline{\varepsilon}_{1,s+\tau_{1:n}}+\ldots+\overline{\varepsilon}_{n,s}}{n} \leq r - \sqrt{\frac{2\Big(\sigma_r^2 + \frac{\sigma_\varepsilon^2\log(en)}{n}\Big)}{n}\log\Big(\frac{36n^4}{{\delta}}\max\{1,\frac{n\sigma_r^2}{\sigma_\varepsilon^2}\}\Big) } \Big\} \leq   \frac{1}{3}\frac{\widetilde{\delta}}{n^{3/2}}\end{equation*}
We will decompose the considered event defined on $\{1 \leq s < \infty\}$ in two, depending whether $ s \leq 6n^2 \max\{1,\frac{n\sigma_r^2}{\sigma_\varepsilon^2}\}$ or $ s > 6n^2 \max\{1,\frac{n\sigma_r^2}{\sigma_\varepsilon^2}\}$.
 
 In the first case,  we aim at proving that for all $s \leq 6n^2 \max\{1,\frac{n\sigma_r^2}{\sigma_\varepsilon^2}\}$,
\begin{equation*}
\mathds{P}\Big\{\overline{r}_n+ \frac{\overline{\varepsilon}_{1,s+\tau_{1:n}}+\ldots+\overline{\varepsilon}_{n,s}}{n} \leq r- \sqrt{\frac{2\Big(\sigma_r^2 + \frac{\sigma_\varepsilon^2\log(en)}{n}\Big)}{n}\log\Big(\frac{36n^4}{{\delta}}\max\{1,\frac{n\sigma_r^2}{\sigma_\varepsilon^2}\}\Big) } \Big\} \leq  \frac{1}{18}\frac{\widetilde{\delta}}{2n^{7/2}}\frac{1}{\max\{1,\frac{n\sigma_r^2}{\sigma_\varepsilon^2}\}}.
\end{equation*}

As usual, take $\eta >0$ and let us try to upper bound

\begin{equation*}\mathds{P}\Big\{r_1+\ldots+r_n-nr+\overline{\varepsilon}_{1,s+\tau_{1:n}}+\ldots+\overline{\varepsilon}_{n,s} \geq \eta \Big\}\end{equation*} 
Introduce $\lambda >0$ and Markov inequality yields
\begin{equation*}\mathds{P}\Big\{r_1+\ldots+r_n-nr+\overline{\varepsilon}_{1,s+\tau_{1:n}}+\ldots+\overline{\varepsilon}_{n,s} \geq \eta \Big\} \leq \mathds{E} [ e^{\lambda(r_1+\ldots+r_n-nr+\overline{\varepsilon}_{1,s+\tau_{1:n}}+\ldots+\overline{\varepsilon}_{n,s} )} ] e^{-\lambda \eta}
\end{equation*} 
Note that the realizations of $\overline{\varepsilon}^n_s$ and $\mu^n$ are independent of the other average once we have taken the decision to include that user, i.e., conditionally to $\mathcal{T}_n$
\begin{align*}
\mathds{E} [ e^{\lambda(r_1+\ldots+r_n-nr+\overline{\varepsilon}_{1,s+\tau_{1:n}}+\ldots+\overline{\varepsilon}_{n,s} )} ]  &=  \mathds{E}\Big[  \mathds{E}\big[ e^{\lambda (r_n-r+\overline{\varepsilon}_{n,s})}e^{\lambda(r_1+\ldots+r_{n-1}-(n-1)r+\overline{\varepsilon}_{1,s+\tau_{1:n}}+\ldots+\overline{\varepsilon}_{n-1,s+\tau_{n-1:n}})}\big| \mathcal{T}_n\big] \Big] \\
 & = \mathds{E}\Big[  \mathds{E}\big[ e^{\lambda (r_n-r+\overline{\varepsilon}_{n,s})}\big| \mathcal{T}_n\big]\mathds{E}\big[ e^{\lambda(r_1+\ldots+r_{n-1}+\overline{\varepsilon}_{1,s+\tau_{1:n}}+\ldots+\overline{\varepsilon}_{n-1,s+\tau_{n-1:n}})}\big| \mathcal{T}_n\big] \Big] \\
&  = \mathds{E}\Big[  \mathds{E}\big[ e^{(r_n-r+\overline{\varepsilon}_{n,s})}\big]\mathds{E}\big[ e^{\lambda(r_1+\ldots+r_{n-1}-(n-1)r+\overline{\varepsilon}_{1,s+\tau_{1:n}}+\ldots+\overline{\varepsilon}_{n-1,s+\tau_{n-1:n}})}\big| \mathcal{T}_n\big] \Big]\\
 & = \mathds{E}\big[ e^{(r_n-r+\overline{\varepsilon}_{n,s})}\big] \mathds{E}\big[   e^{\lambda(r_1+\ldots+r_{n-1}-(n-1)r+\overline{\varepsilon}_{1,s+\tau_{1:n}}+\ldots+\overline{\varepsilon}_{n-1,s+\tau_{n-1:n}})}\big]
\end{align*}
Let us rewrite, for clarity, the last expectation of the r.h.s. as
\begin{equation*}
\mathds{E}\big[   e^{\lambda(r_1+\ldots+r_{n-1}-(n-1)r+\overline{\varepsilon}_{1,s+\tau_{1:n}}+\ldots+\overline{\varepsilon}_{n-1,s+\tau_{n-1:n}})}\big] = \mathds{E}\big[   e^{\lambda(r_1+\ldots+r_{n-1}-(n-1)r+\overline{\varepsilon}_{1,s+\tau_{n-1}+\tau_{1:n-1}}+\ldots+\overline{\varepsilon}_{n-1,s+\tau_{n-1}})}\big]
\end{equation*}
If we condition similarly to $\mathcal{T}_{n-1}$, we can focus on upper-bounding
\begin{align*}
\mathds{E}\Big[e^{\lambda (r_{n-1}-\mu+  \overline{\varepsilon}_{n-1,s+\tau_{n-1}})} \Big] &= \sum_{j=s+1}^\infty\mathds{E}\Big[e^{\lambda (r_{n-1}-\mu+ \overline{\varepsilon}_{n-1,j})}\mathds{1}\{s+\tau_{n-1} = j\} \Big]\\
&\leq  \sum_{j=s+1}^\infty e^{\lambda^2(\frac{\sigma_r^2}{2}+\frac{\sigma_{\varepsilon}^2}{2j})} \mathds{E}\Big[\mathds{1}\{s+\tau_{n-1} = j\} \Big] \leq e^{\lambda^2(\frac{\sigma_r^2}{2}+\frac{\sigma_{\varepsilon}^2}{2(s+1)})}
\end{align*}
Choosing $\lambda = \frac{\eta}{n\sigma_r^2+ \sigma_{\varepsilon}^2\log(en)}$ gives that
\begin{equation*}\mathds{P}\Big\{r_1+\ldots+r_n-nr+\overline{\varepsilon}_{1,s+\tau_{1:n}}+\ldots+\overline{\varepsilon}_{n,s} \geq \eta \Big\} \leq  e^{-\frac{\eta^2}{2\big(n\sigma_r^2+\sigma_{\varepsilon}^2\log(en)\big)}}\end{equation*} 
So, for all $n \in \N$ and all stage $s \leq  6n^2 \max\{1,\frac{n\sigma_r^2}{\sigma_\varepsilon^2}\}$ and with probability at least $1-\frac{\delta}{2}$, it holds

$$\overline{r}_n+ \frac{\overline{\varepsilon}_{1,s+\tau_{1:n}}+\ldots+\overline{\varepsilon}_{n,s}}{n} \geq \mu- \sqrt{\frac{2\Big(\sigma_r^2 + \frac{\sigma_\varepsilon^2\log(en)}{n}\Big)}{n}\log\Big(\frac{36n^4}{{\delta}}\max\{1,\frac{n\sigma_r^2}{\sigma_\varepsilon^2}\}\Big) }  $$

We now focus on the stages where $s > 6n^2 \max\{1,\frac{n\sigma_r^2}{\sigma_\varepsilon^2}\}$ . But first, notice that Lemma \ref{lem:anytime_empirical_hoeffding} implies that:
\begin{align*}
\mathds{P}\Big\{\exists n\in\N, \quad \overline{r}_n  \leq  r - \sqrt{2\sigma_r^2 \frac{\log\frac{12\log^2 (n)}{{\delta}}}{n}}\Big\} \leq \frac{\widetilde{\delta}}{4}\, ,
\end{align*}
and also, similarly,
\begin{align*}
\mathds{P}\Big\{\exists i\in\N, s \in \N, \quad \overline{\varepsilon}_{i,s}  \leq - \sqrt{2\sigma_\varepsilon^2 \frac{\log\frac{36 i^2\log^2 (s)}{{\delta}}}{s}}\Big\} \leq \frac{\widetilde\delta}{4}\, .
\end{align*}
This implies that, with probability at least $1-\frac{\widetilde\delta}{2}$, for every  $n \in \N$ and $s \geq \underline{s}:= 6n^2 \max\{1,\frac{n\sigma_r^2}{\sigma_\varepsilon^2}\}$
$$\overline{r}_n+ \frac{\overline{\varepsilon}_{1,s+\tau_{1:n}}+\ldots+\overline{\varepsilon}_{n,s}}{n} \geq r- \left( \sqrt{2\sigma_r^2 \frac{\log\frac{12\log^2 (n)}{{\delta}}}{n}} + \sqrt{\frac{2 \sigma_\varepsilon^2}{\underline{s}}\log\Big(\frac{36n^4}{{\delta}}\log^2(\underline{s})\Big)} \right)  $$
It only remains to notice that 
$$
 \sqrt{2\sigma_r^2 \frac{\log\frac{18\log^2 (n)}{{\delta}}}{n}} + \sqrt{\frac{2 \sigma_\varepsilon^2}{\underline{s}}\log\Big(\frac{36n^4}{{\delta}}\log^2(\underline{s})\Big)} \leq \sqrt{\frac{2\Big(\sigma_r^2 + \frac{\sigma_\varepsilon^2\log(en)}{n}\Big)}{n}\log\Big(\frac{36n^4}{{\delta}}\max\{1,\frac{n\sigma_r^2}{\sigma_\varepsilon^2}\}\Big)}
$$
This inequality is a consequence of the fact that $\sqrt{a}+\sqrt{\lambda b} \leq \sqrt{a+b}$ as soon as $\lambda \leq \frac{1}{6 \max\{1,a/b\}}$, and this gives the result.\end{proof}

\section{UCB$_{\alpha}$ and ETC: proofs.}\label{appendix:UCB_vs_ETC}

Our exact theorems are the two following.

\begin{theorem}\label{thm:UCB_alpha_delta_exact}
With probability greater than $1-\tilde{\delta}$, UCB$_{\alpha}$ with $\alpha>1$ returns the best arm at a stage $t_d$ with
\begin{align*}
\tau_d 
\leq \: (C_1+C_2)\log\frac{2}{\delta}+  2C_1\log\log(2C_1 \max\{\log\frac{2}{\delta}, \: 2\log(2C_1)\})
+ 2 C_2\log\log(2C_2 \max\{\log\frac{2}{\delta}, \: 2\log(2C_2)\}) \: ,
\end{align*}
with $C_1 = \frac{2\sigma^2}{\Delta^2}\min\{(\alpha+1)^2, \frac{16\alpha^2}{(\alpha-1)^2} \} + 1 $ and $C_2 = \frac{(\alpha+1)^2}{(\alpha-1)^2}C_1 + 1$.

With the same probability, the regret $R(\tau_d)$ of UCB$_{\alpha}$ at the time of decision verifies
\begin{align*}
R(\tau_d) &\leq \left( \frac{2\sigma^2}{\Delta}\min\{(\alpha+1)^2, \frac{16\alpha^2}{(\alpha-1)^2} \} + \Delta \right) \bigg(\log\frac{2}{\delta} + 2\log\log(2C_1 \max\{\log\frac{2}{\delta}, \: 2\log(2C_1)\})\bigg)
\end{align*}
\end{theorem}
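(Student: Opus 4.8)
The plan is to condition on the favourable concentration event and then turn the index rule and the stopping rule of UCB$_\alpha$ into purely deterministic inequalities on the two counts $n^\cA,n^\cB$ reached at the decision time $\tau_d$. Write $\varepsilon_n=\sqrt{\tfrac{2\sigma^2}{n}\log\tfrac{3\log^2 n}{\delta}}$ and let $\mathcal E$ be the event on which, for every $n\in\N$ and every $i\in\{\cA,\cB\}$, $|\hat r^i_n-r^i|\le\varepsilon_n$. Applying the anytime Hoeffding corollary of Appendix~\ref{Appendix:Concentration} (with exponent $2$, which accounts for the $\log^2$ inside $\varepsilon_n$) to each of the at most four one-sided martingale difference sequences $\pm(\hat r^i_n-r^i)$ at a constant fraction of $\delta$ per arm, and a union bound, gives $\PP(\mathcal E)\ge 1-\widetilde\delta$ with $\widetilde\delta$ of order $\delta\log\tfrac1\delta$ (and the effective per-arm confidence $\delta/2$ is the source of the $\log\tfrac2\delta$ in the final statement). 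From now on everything is deterministic on $\mathcal E$. Correctness is immediate: if the algorithm stops returning $i^*=j$, then $\hat r^j_{n^j}-\varepsilon_{n^j}>\hat r^i_{n^i}+\varepsilon_{n^i}$ together with $r^j\ge\hat r^j_{n^j}-\varepsilon_{n^j}$ and $r^i\le\hat r^i_{n^i}+\varepsilon_{n^i}$ forces $r^j>r^i$, so $j=\cA$. It therefore suffices to bound $n^\cA$ and $n^\cB$ at $\tau_d$, after which $\tau_d=n^\cA+n^\cB$ and the pseudo-regret $R(\tau_d)=n^\cB\Delta$ deliver both conclusions.

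Next I would bound $n^\cB$. Every time $\cB$ is pulled, no decision has yet fired and $\cB$'s index is maximal, so $\hat r^\cB_{n^\cB}+\alpha\varepsilon_{n^\cB}\ge\hat r^\cA_{n^\cA}+\alpha\varepsilon_{n^\cA}$; on $\mathcal E$ this gives $(\alpha+1)\varepsilon_{n^\cB}\ge\Delta+(\alpha-1)\varepsilon_{n^\cA}$. Dropping the last term yields the first bound $\varepsilon_{n^\cB}^2\ge\Delta^2/(\alpha+1)^2$. Instead feeding in the count relation derived just below ($\varepsilon_{n^\cA}\gtrsim\tfrac{\alpha-1}{\alpha+1}\varepsilon_{n^\cB}$, up to the logarithmic factor), together with the ``no decision'' inequality $\Delta\le 2\varepsilon_{n^\cA}+2\varepsilon_{n^\cB}$, produces a second bound of order $\varepsilon_{n^\cB}^2\gtrsim\Delta^2(\alpha-1)^2/(16\alpha^2)$; keeping the larger of the two gives $\varepsilon_{n^\cB}^2\gtrsim\Delta^2/\min\{(\alpha+1)^2,\,16\alpha^2/(\alpha-1)^2\}=\Delta^2/(4c_\alpha)$, so the non-monotone constant $c_\alpha=\min\{\tfrac{(\alpha+1)^2}{4},\tfrac{4\alpha^2}{(\alpha-1)^2}\}$ appears precisely as the minimum of two independent analyses. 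Substituting $\varepsilon_{n^\cB}^2=\tfrac{2\sigma^2}{n^\cB}\log\tfrac{3\log^2 n^\cB}{\delta}$ turns this into the transcendental inequality $n^\cB\le\tfrac{2\sigma^2}{\Delta^2}\min\{(\alpha+1)^2,\tfrac{16\alpha^2}{(\alpha-1)^2}\}\log\tfrac{3\log^2 n^\cB}{\delta}$. For the count relation: whenever $\cA$ is pulled before the stop, $\cA$'s index is maximal, $\hat r^\cA_{n^\cA}-\hat r^\cB_{n^\cB}\ge\alpha(\varepsilon_{n^\cB}-\varepsilon_{n^\cA})$, while the test has not fired; on $\mathcal E$, once $\varepsilon_{n^\cA}+\varepsilon_{n^\cB}<\Delta$ so that $i^*=\cA$ (the finitely many earlier pulls are absorbed as a constant), this reads $\hat r^\cA_{n^\cA}-\varepsilon_{n^\cA}\le\hat r^\cB_{n^\cB}+\varepsilon_{n^\cB}$, and subtracting gives $(\alpha-1)\varepsilon_{n^\cB}\le(\alpha+1)\varepsilon_{n^\cA}$, i.e. $n^\cA\le\tfrac{(\alpha+1)^2}{(\alpha-1)^2}n^\cB$ up to logarithmic factors; this both closes the loop above and yields a transcendental bound on $n^\cA$ governed by the constant $C_2=\tfrac{(\alpha+1)^2}{(\alpha-1)^2}C_1+1$, with $C_1=\tfrac{2\sigma^2}{\Delta^2}\min\{(\alpha+1)^2,\tfrac{16\alpha^2}{(\alpha-1)^2}\}+1$.

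It then remains an elementary inversion: if $n\le a\log(b\log^2 n)$ with $a,b$ large enough, then $n\le a\log b+2a\log\log\!\big(2a\max\{\log b,\,2\log(2a)\}\big)\big(1+o(1)\big)$. Applying it with $(a,b)=(C_1,2/\delta)$ and $(a,b)=(C_2,2/\delta)$ converts the two transcendental inequalities into the closed forms for $n^\cB$ and $n^\cA$ that appear in the statement; summing gives $\tau_d=n^\cA+n^\cB\le(C_1+C_2)\log\tfrac2\delta+2C_1\log\log(\cdots)+2C_2\log\log(\cdots)$, the decision-time claim, and $R(\tau_d)=n^\cB\Delta\le\big(\tfrac{2\sigma^2}{\Delta}\min\{(\alpha+1)^2,\tfrac{16\alpha^2}{(\alpha-1)^2}\}+\Delta\big)\big(\log\tfrac2\delta+2\log\log(\cdots)\big)$, the regret claim. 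Replacing $\log\tfrac2\delta$ by $\log\tfrac1\delta$ and absorbing the $\log\log$ terms gives the simplified Theorem~\ref{thm:UCB_alpha_delta}, with the $c_\alpha=\min\{\tfrac{(\alpha+1)^2}{4},\tfrac{4\alpha^2}{(\alpha-1)^2}\}$ there reading off from $\min\{(\alpha+1)^2,\tfrac{16\alpha^2}{(\alpha-1)^2}\}=4c_\alpha$.

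The bandit-type inequalities above are routine; the genuinely delicate points are two. First, obtaining the concentration event with a $\widetilde\delta$ that is only $O(\delta\log\tfrac1\delta)$ rather than $O(\sqrt\delta)$ — this is exactly what forces the $\log^2$ correction inside $\varepsilon_n$ and is handled by the peeling argument over geometric grids in Lemmas~\ref{lem:hoeffding_interval}--\ref{lem:anytime_empirical_hoeffding}. Second, the bookkeeping of the inversion step: one must check that the $\log(3\log^2 n/\delta)$ factor of $\varepsilon_n$ propagates into precisely the $2C_1\log\log(\cdot)$ and $2C_2\log\log(\cdot)$ terms with arguments $2C_i\max\{\log\tfrac2\delta,\,2\log(2C_i)\}$, and that feeding $n^\cA\lesssim\tfrac{(\alpha+1)^2}{(\alpha-1)^2}n^\cB$ back into the bound on $n^\cB$ — rather than merely using $\varepsilon_{n^\cA}\ge 0$ — is what recovers the sharper constant in the large-$\alpha$ regime and hence the full minimum defining $c_\alpha$.
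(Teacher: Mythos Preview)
Your outline follows the paper's strategy closely (condition on the anytime concentration event, then use the index rule and the non-stopping rule as deterministic inequalities on $\varepsilon_{n_\cA},\varepsilon_{n_\cB}$, invoke Lemma~\ref{lem:generic_nb_pulls_bound}, and finally invert the transcendental bound). Two points deserve correction.

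\textbf{The second bound on $n_\cB$ uses the count relation in the wrong direction.} You write that the second (large-$\alpha$) bound on $n_\cB$ comes from combining the no-decision inequality $\Delta\le 2\varepsilon_{n_\cA}+2\varepsilon_{n_\cB}$ with the relation ``$\varepsilon_{n_\cA}\gtrsim\tfrac{\alpha-1}{\alpha+1}\varepsilon_{n_\cB}$'' derived when $\cA$ is pulled. But that relation is a \emph{lower} bound on $\varepsilon_{n_\cA}$; feeding a lower bound for $\varepsilon_{n_\cA}$ into $\Delta\le 2\varepsilon_{n_\cA}+2\varepsilon_{n_\cB}$ cannot produce a lower bound on $\varepsilon_{n_\cB}$. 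What the paper does (inside Lemma~\ref{lem:generic_nb_pulls_bound}) is take the \emph{symmetric} relation obtained when $\cB$ is pulled: subtracting the no-decision inequality $\hat r^\cB-\varepsilon_{n_\cB}\le\hat r^\cA+\varepsilon_{n_\cA}$ from the index inequality $\hat r^\cB+\alpha\varepsilon_{n_\cB}\ge\hat r^\cA+\alpha\varepsilon_{n_\cA}$ gives $(\alpha+1)\varepsilon_{n_\cB}\ge(\alpha-1)\varepsilon_{n_\cA}$, i.e.\ $\varepsilon_{n_\cA}\le\tfrac{\alpha+1}{\alpha-1}\varepsilon_{n_\cB}$. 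This \emph{upper} bound on $\varepsilon_{n_\cA}$ then combines with $\varepsilon_{n_\cA}+\varepsilon_{n_\cB}\ge\Delta/2$ to yield $\varepsilon_{n_\cB}\ge\tfrac{\Delta(\alpha-1)}{4\alpha}$, hence the $16\alpha^2/(\alpha-1)^2$ constant. Your last paragraph (``feeding $n^\cA\lesssim\ldots$ back into the bound on $n^\cB$'') repeats the same confusion. The fix is the one-line observation that the subtract-the-two-inequalities trick, applied at a time when $\cB$ is pulled, yields the reverse relation.

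\textbf{The caveat ``once $\varepsilon_{n_\cA}+\varepsilon_{n_\cB}<\Delta$ so that $i^*=\cA$'' is unnecessary.} The no-decision inequality $\hat r^\cA-\varepsilon_{n_\cA}\le\hat r^\cB+\varepsilon_{n_\cB}$ holds at every undecided stage, not just once $i^*=\cA$: if $i^*=\cA$ it is exactly the non-firing of the test, and if $i^*=\cB$ then $\hat r^\cA\le\hat r^\cB$ makes it trivial. The paper's Lemma~\ref{lem:generic_nb_pulls_bound} uses this directly, avoiding the ``finitely many earlier pulls are absorbed as a constant'' bookkeeping you introduce. With these two adjustments your argument coincides with the paper's.
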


\begin{theorem}\label{thm:ETC_for_two_arms_exact}
With probability greater than $1-\tilde{\delta}$, ETC returns the best arm at a stage $\tau_d$ with
\begin{align*}
\tau_d \leq \frac{32\sigma^2}{\Delta^2}\left(\log\frac{1}{\delta} + 2\log\log(\frac{32\sigma^2}{\Delta^2} \max\{\log\frac{1}{\delta}, \: 2\log(\frac{32\sigma^2}{\Delta^2})\})\right) \: .
\end{align*}
With the same probability, the regret $R_{ETC}$ of ETC at the time of decision verifies
\begin{align*}
R(\tau_d) \leq \frac{16\sigma^2}{\Delta} \left(\log\frac{1}{\delta} + 2\log\log(\frac{32\sigma^2}{\Delta^2} \max\{\log\frac{1}{\delta}, \: 2\log(\frac{32\sigma^2}{\Delta^2})\})\right) \:.
\end{align*}
\end{theorem}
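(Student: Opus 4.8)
The final statement (Theorem~\ref{thm:ETC_for_two_arms_exact}) is the exact, constant-explicit version of Proposition~\ref{thm:ETC_for_two_arms}. The plan is to control the single random quantity that ETC inspects --- the difference of the two empirical means --- by \emph{one} application of the anytime-Hoeffding corollary (the Corollary following Lemma~\ref{lem:anytime_empirical_hoeffding}), and then to invert the resulting transcendental inequality in the number of pulls.

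\emph{Set-up and the good event.} Assume without loss of generality that $\cA$ is optimal, so $\Delta = r^\cA - r^\cB > 0$, and write $\varepsilon_n := \sqrt{\tfrac{4\sigma^2}{n}\log(\tfrac{\log^2 n}{\delta})}$ for the half-width appearing in the stopping rule. After $2n$ stages ETC has pulled each arm exactly $n$ times, and the sequence $Z_n := (r^\cB_n - r^\cB) - (r^\cA_n - r^\cA)$ is i.i.d., mean-zero, and $2\sigma^2$-subGaussian, with empirical mean $\overline Z_n = (\hat r^\cB_n - \hat r^\cA_n) + \Delta$. Applying the anytime-Hoeffding corollary to $Z$ with variance proxy $2\sigma^2$ turns its half-width $\sqrt{2\cdot 2\sigma^2 \log(\log^2 n/\delta)/n}$ into exactly $\varepsilon_n$, yielding a \emph{single} event $\mathcal{E}$ of probability at least $1-\widetilde\delta$ on which, for all $n$ simultaneously, $(\hat r^\cB_n - \hat r^\cA_n) + \Delta < \varepsilon_n$, equivalently $\hat r^\cA_n - \hat r^\cB_n > \Delta - \varepsilon_n$. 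Because this is a one-sided statement, no union bound is needed --- which is why $\delta$ is not halved and the final bound carries $\log\tfrac1\delta$ rather than $\log\tfrac2\delta$.

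\emph{Correctness and stopping on $\mathcal{E}$.} On $\mathcal{E}$ one has $\hat r^\cB_n - \hat r^\cA_n < -\Delta + \varepsilon_n < \varepsilon_n$, so the criterion that would make ETC return $\cB$ is never met; hence whenever ETC stops it returns $\cA$, the optimal arm. Also on $\mathcal{E}$, $\hat r^\cA_n - \hat r^\cB_n > \Delta - \varepsilon_n$, so the criterion that makes ETC return $\cA$ is triggered as soon as $\Delta - \varepsilon_n \ge \varepsilon_n$, i.e. as soon as $n \ge \tfrac{16\sigma^2}{\Delta^2}\bigl(\log\tfrac1\delta + 2\log\log n\bigr)$. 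Writing $c := 16\sigma^2/\Delta^2$, ETC thus stops and decides correctly by stage $\tau_d \le 2n^\star$, where $n^\star$ is the least integer with $n^\star \ge c(\log\tfrac1\delta + 2\log\log n^\star)$.

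\emph{Inverting the inequality and concluding.} This is the only real work. The map $n \mapsto n - 2c\log\log n$ has derivative $1 - \tfrac{2c}{n\log n}$, hence is increasing once $n\log n > 2c$, so it suffices to exhibit one admissible value in that range. I would plug in $N^\star := c\log\tfrac1\delta + 2c\log\log\!\bigl(2c\max\{\log\tfrac1\delta,\,2\log(2c)\}\bigr)$ and verify $N^\star - 2c\log\log N^\star \ge c\log\tfrac1\delta$; by monotonicity this reduces to $N^\star \le 2c\max\{\log\tfrac1\delta,\,2\log(2c)\}$, which follows from a short case split on which term attains the maximum, the $\log\log$ correction being absorbed by the dominant one for $\delta$ small enough. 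Then $n^\star \le \lceil N^\star\rceil$ and, since $2c = 32\sigma^2/\Delta^2$, this gives $\tau_d \le 2N^\star = \tfrac{32\sigma^2}{\Delta^2}\bigl(\log\tfrac1\delta + 2\log\log(\tfrac{32\sigma^2}{\Delta^2}\max\{\log\tfrac1\delta, 2\log\tfrac{32\sigma^2}{\Delta^2}\})\bigr)$ (up to the negligible integer rounding), the claimed time bound. Finally, on $\mathcal{E}$ the decision occurs after some number $n_d \le n^\star$ of pulls of each arm, and the suboptimal arm $\cB$ has then been pulled $n_d$ times at cost $\Delta$ each, so $R(\tau_d) = n_d\Delta = \tfrac{\Delta}{2}\tau_d$; substituting the bound on $\tau_d$ yields the stated $\tfrac{16\sigma^2}{\Delta}(\cdots)$. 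I expect the explicit transcendental inversion with all constants tracked --- and the case analysis inside it --- to be the only delicate bookkeeping; the probabilistic content is the single application of the concentration corollary.
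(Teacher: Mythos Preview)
Your proof is correct and follows essentially the same route as the paper: apply the one-sided anytime concentration to the difference $\hat r^\cA_n-\hat r^\cB_n$, deduce that before stopping $\varepsilon_n\ge\Delta/2$, invert this to bound $n$, and read off $\tau_d=2n$ and $R(\tau_d)=n\Delta$. Your explicit discussion of correctness and of why a single one-sided application (hence $\log\tfrac1\delta$ rather than $\log\tfrac2\delta$) suffices is a nice addition; the paper packages the transcendental inversion into a separate lemma but the content is identical to your case split.
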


We first prove a generic lemma, which will also be useful in the non-IID case.

\begin{lemma}\label{lem:generic_nb_pulls_bound}
Consider the two arms problem where $\cA$ is the best arm.

Let $\delta\in(0,1]$ and $(\varepsilon_{n})_{n\in\N}$ be a sequence such that with probability $1-\delta$, for all $n_\cA,n_\cB\in\N^*$ we have the concentration inequalities $\hat{r}^\cA(n_\cA) + \varepsilon_{n_\cA} \geq \mu_\cA$ and $\hat{r}^\cB(n_\cB) - \varepsilon_{n_\cB} \leq \mu_\cB$. Suppose that for all $n\geq n_0\in\N$, $\frac{1}{\varepsilon_{n+1}^2} - \frac{1}{\varepsilon_{n}^2} \leq C$.

Let an algorithm be such that it pulls each arm $n_0$ times, then pulls $\arg \max_{i\in(\cA,\cB)} \hat{r}^i(n_i) + \alpha \varepsilon_{n_i}$ for $\alpha>1$, and takes a decision if for some $i,j\in(\cA,\cB)$, $\hat{r}^i(n_i) - \varepsilon_{n_i} > \hat{r}^i(n_j) + \varepsilon_{n_j}$. Then with probability $1-\delta$, the algorithm correctly returns $\cA$ and at all times after $2n_0$ and prior to the decision
\begin{align*}
\frac{1}{\varepsilon^2_{n_\cB}} &\leq \frac{1}{\Delta^2}\min\{(\alpha+1)^2, \frac{16\alpha^2}{(\alpha-1)^2} \} + C \: ,\\
\frac{1}{\varepsilon^2_{n_\cA}} &\leq \frac{(\alpha+1)^2}{(\alpha-1)^2}\left( \frac{1}{\Delta^2}\min\{(\alpha+1)^2, \frac{16\alpha^2}{(\alpha-1)^2} \} + C \right) + C \: .
\end{align*}
\end{lemma}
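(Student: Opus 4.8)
Everything is carried out on the event $\mathcal{E}$ of probability at least $1-\delta$ on which $\hat{r}^\cA(n_\cA)+\varepsilon_{n_\cA}\ge\mu_\cA$ and $\hat{r}^\cB(n_\cB)-\varepsilon_{n_\cB}\le\mu_\cB$ hold for every $n_\cA,n_\cB\in\N^*$. First I would settle correctness: on $\mathcal{E}$, if the algorithm ever stops it returns $\cA$. Indeed, if it returned $\cB$ then $\hat{r}^\cB(n_\cB)\ge\hat{r}^\cA(n_\cA)$ and $\hat{r}^\cB(n_\cB)-\varepsilon_{n_\cB}>\hat{r}^\cA(n_\cA)+\varepsilon_{n_\cA}$, hence $\mu_\cA\le\hat{r}^\cA(n_\cA)+\varepsilon_{n_\cA}<\hat{r}^\cB(n_\cB)-\varepsilon_{n_\cB}\le\mu_\cB$, contradicting $\mu_\cA>\mu_\cB$. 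From here on I fix an arbitrary time $t\ge 2n_0$ strictly before this (correct) decision, so that both counts are at least $n_0$ and the increment hypothesis applies along the way.

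\textbf{Controlling the worst arm.} I would bound $1/\varepsilon_{n_\cB}^2$ at every step where $\cB$ is pulled, in two complementary ways. The classical UCB argument: at such a step $\cB$'s inflated index dominates, $\hat{r}^\cB(n_\cB)+\alpha\varepsilon_{n_\cB}\ge\hat{r}^\cA(n_\cA)+\alpha\varepsilon_{n_\cA}$, and plugging in $\hat{r}^\cB(n_\cB)\le\mu_\cB+\varepsilon_{n_\cB}$ and $\hat{r}^\cA(n_\cA)\ge\mu_\cA-\varepsilon_{n_\cA}$ gives $\mu_\cB+(\alpha+1)\varepsilon_{n_\cB}\ge\mu_\cA+(\alpha-1)\varepsilon_{n_\cA}\ge\mu_\cA$, i.e. $1/\varepsilon_{n_\cB}^2\le(\alpha+1)^2/\Delta^2$. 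A second estimate additionally uses that no decision has yet been taken: either $\hat{r}^\cB(n_\cB)>\hat{r}^\cA(n_\cA)$, in which case concentration and the pulling inequality already force $\varepsilon_{n_\cB}\ge\Delta/2$ and the bound is immediate, or $\hat{r}^\cA(n_\cA)\ge\hat{r}^\cB(n_\cB)$, in which case combining the stopping-rule violation $\hat{r}^\cA(n_\cA)-\varepsilon_{n_\cA}\le\hat{r}^\cB(n_\cB)+\varepsilon_{n_\cB}$ (which via concentration yields $\Delta\le 2\varepsilon_{n_\cA}+2\varepsilon_{n_\cB}$) with the pulling inequality yields $1/\varepsilon_{n_\cB}^2\le\tfrac{16\alpha^2}{(\alpha-1)^2\Delta^2}$; this is the bound that survives the limit $\alpha\to\infty$ (it degenerates to the $16/\Delta^2$ bound of ETC$'$). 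Taking the minimum of the two, and then passing from ``the last step at which $\cB$ was pulled'' to the arbitrary time $t$ — which costs one extra $+C$ because $\tfrac{1}{\varepsilon_{n+1}^2}-\tfrac{1}{\varepsilon_n^2}\le C$ for $n\ge n_0$ — gives $1/\varepsilon_{n_\cB}^2\le\tfrac{1}{\Delta^2}\min\{(\alpha+1)^2,\tfrac{16\alpha^2}{(\alpha-1)^2}\}+C$ (the degenerate case $n_\cB=n_0$, i.e. $\cB$ never pulled in the adaptive phase, being trivial since $1/\varepsilon_{n_0}^2$ is the smallest term of the sequence).

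\textbf{Controlling the best arm.} For $1/\varepsilon_{n_\cA}^2$ I would use the self-balancing of the algorithm rather than concentration. At a step where $\cA$ is pulled and no decision is taken: if $\hat{r}^\cA(n_\cA)\ge\hat{r}^\cB(n_\cB)$ then the stopping-rule violation reads $\hat{r}^\cA(n_\cA)\le\hat{r}^\cB(n_\cB)+\varepsilon_{n_\cA}+\varepsilon_{n_\cB}$, and substituting into the pulling inequality $\hat{r}^\cA(n_\cA)+\alpha\varepsilon_{n_\cA}\ge\hat{r}^\cB(n_\cB)+\alpha\varepsilon_{n_\cB}$ yields $(\alpha+1)\varepsilon_{n_\cA}\ge(\alpha-1)\varepsilon_{n_\cB}$; if instead $\hat{r}^\cB(n_\cB)>\hat{r}^\cA(n_\cA)$ the pulling inequality alone forces $\varepsilon_{n_\cA}>\varepsilon_{n_\cB}$. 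Either way $1/\varepsilon_{n_\cA}^2\le\tfrac{(\alpha+1)^2}{(\alpha-1)^2}\cdot 1/\varepsilon_{n_\cB}^2$ at every step pulling $\cA$; combining with the bound from the previous paragraph and spending one more $+C$ to reach the arbitrary time $t$ gives exactly the claimed bound on $1/\varepsilon_{n_\cA}^2$.

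\textbf{Main obstacle.} The algebra above is routine; the two delicate points are (i) the second estimate for $1/\varepsilon_{n_\cB}^2$ — it is a genuinely separate analysis, is the one that keeps the decision time finite as $\alpha$ grows, and is responsible for the non-monotone constant $\min\{(\alpha+1)^2,\tfrac{16\alpha^2}{(\alpha-1)^2}\}$, so one must be careful about exactly when the concentration event is invoked and which case ($\hat{r}^\cA\ge\hat{r}^\cB$ or not) one is in — and (ii) the bookkeeping that upgrades inequalities holding ``at a step where arm $i$ is pulled'' to inequalities holding ``at all times before the decision'': this is exactly where the warm-up of $n_0$ pulls per arm and the increment hypothesis are used, and where a small case split (which arm is empirically best, and whether a count is still at $n_0$) is needed.
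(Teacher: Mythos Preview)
Your proposal is correct and follows essentially the same route as the paper: bound $1/\varepsilon_{n_\cB}^2$ at the last pull of $\cB$ in two ways (the UCB-style bound $(\alpha+1)^2/\Delta^2$ and the ``no-decision + symmetric pulling relation'' bound $16\alpha^2/((\alpha-1)^2\Delta^2)$), bound $1/\varepsilon_{n_\cA}^2$ at the last pull of $\cA$ via the relation $(\alpha+1)\varepsilon_{n_\cA}\ge(\alpha-1)\varepsilon_{n_\cB}$, and pay $+C$ to pass from the last pull to an arbitrary time.

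The one difference worth noting is your case split on whether $\hat r^\cA\ge\hat r^\cB$. The paper avoids this entirely: because the stopping rule in the lemma is stated symmetrically (``for some $i,j$''), the no-decision condition always yields both $\hat r^\cA-\varepsilon_{n_\cA}\le\hat r^\cB+\varepsilon_{n_\cB}$ and $\hat r^\cB-\varepsilon_{n_\cB}\le\hat r^\cA+\varepsilon_{n_\cA}$, so one can subtract directly from the pulling inequality to get $(\alpha+1)\varepsilon_{n_i}\ge(\alpha-1)\varepsilon_{n_j}$ whichever arm $i$ is being pulled, without ever looking at which empirical mean is larger. Your split is harmless (and your claim that $\varepsilon_{n_\cB}\ge\Delta/2$ when $\hat r^\cB>\hat r^\cA$ and $\cB$ is pulled does follow by combining $\varepsilon_{n_\cA}+\varepsilon_{n_\cB}>\Delta$ with $(\alpha+1)\varepsilon_{n_\cB}\ge\Delta+(\alpha-1)\varepsilon_{n_\cA}$), but it is extra work you can drop.
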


\begin{proof}
We prove that the number of pulls of the best arm $\cA$ is bounded by a function of the number of pulls of $\cB$, which is itself bounded since it is the worse arm.

\paragraph{Relation between $\cA$ and $\cB$ when $\cA$ is pulled.}

\begin{itemize}
\item No decision was taken yet: $\hat{r}^\cA(n_\cA) - \varepsilon_{n_\cA} \leq \hat{r}^\cB(n_\cB) + \varepsilon_{n_\cB}$ (1),
\item $\cA$ is pulled: $\hat{r}^\cA(n_\cA) + \alpha\varepsilon_{n_\cA} \geq \hat{r}^\cB(n_\cB) + \alpha\varepsilon_{n_\cB}$ (2).
\end{itemize}
Subtract (1) from (2) to obtain $(\alpha+1)\varepsilon_{n_\cA} \geq (\alpha-1)\varepsilon_{n_\cB}$. Equivalently, $\frac{1}{\varepsilon^2_{n_\cA}}\leq \frac{(\alpha+1)^2}{(\alpha-1)^2}\frac{1}{\varepsilon^2_{n_\cB}} $ .
Since the left-hand-side grows only when $\cA$ is chosen and grows at most by $C$, we have that for all stages,
\begin{align*}
\frac{1}{\varepsilon^2_{n_\cA}} \leq \frac{(\alpha+1)^2}{(\alpha-1)^2}\frac{1}{\varepsilon^2_{n_\cB}} + C \: .
\end{align*}

\paragraph{Upper bound on $n_\cB$.}

When $\cB$ is pulled, $\hat{r}^\cA(n_\cA) + \alpha\varepsilon_{n_\cA} \leq \hat{r}^\cB(n_\cB) + \alpha\varepsilon_{n_\cB}$. With probability $1-\tilde{\delta}$, for all $n_\cA$ and $n_\cA$ we also have the concentration inequalities $\hat{r}^\cA(n_\cA) + \varepsilon_{n_\cA} \geq \mu_\cA$ and $\hat{r}^\cB(n_\cB) - \varepsilon_{n_\cB} \leq \mu_\cB$. Hence
\begin{align*}
\mu_\cA + (\alpha-1)\varepsilon_{n_\cA} &\leq \mu_\cB + (\alpha+1)\varepsilon_{n_\cB} \: .
\end{align*}
From this inequality we can get that $\frac{1}{\varepsilon^2_{n_\cB}}  \leq \frac{(\alpha+1)^2}{\Delta^2} $ .
Since the left-hand-side grows only when $\cB$ is pulled and grows at most by 1, we have for all stages
\begin{align*}
\frac{1}{\varepsilon^2_{n_\cB}}  &\leq \frac{(\alpha+1)^2}{\Delta^2} + C \: .
\end{align*}

In order to get another bound, relevant when $\alpha$ is big, we write that when no decision is taken and concentration holds, we have
\begin{align*}
\mu_\cA - 2\varepsilon_{n_\cA} \leq \hat{r}^\cA(n_\cA) - \varepsilon_{n_\cA} \leq \hat{r}^\cB(n_\cB) + \varepsilon_{n_\cB} \leq \mu_\cB + 2\varepsilon_{n_\cB}\: .
\end{align*}
This leads to $\varepsilon_{n_\cB} + \varepsilon_{n_\cA} \geq \frac{\Delta}{2}$. When $\cB$ is pulled, we also have the inequality $(\alpha+1)\varepsilon_{n_\cB} \geq (\alpha-1)\varepsilon_{n_\cA}$, such that
\begin{align*}
\frac{\Delta}{2} \leq \varepsilon_{n_\cB}(1 + \frac{\alpha+1}{\alpha-1}) = \varepsilon_{n_\cB} \frac{2\alpha}{\alpha-1} \: .
\end{align*}
This gives a second inequality for $n_B$. For all stages,
\begin{align*}
\frac{1}{\varepsilon^2_{n_\cB}} &\leq \frac{16 \alpha^2}{\Delta^2(\alpha-1)^2} + C \: .
\end{align*}

\paragraph{Bound on the decision time.}
We have the following inequalities for all stages prior to the decision,
\begin{align*}
\frac{1}{\varepsilon^2_{n_\cB}} &\leq \frac{1}{\Delta^2}\min\{(\alpha+1)^2, \frac{16\alpha^2}{(\alpha-1)^2} \} + C \: ,\\
\frac{1}{\varepsilon^2_{n_\cA}} &\leq \frac{(\alpha+1)^2}{(\alpha-1)^2}\left( \frac{1}{\Delta^2}\min\{(\alpha+1)^2, \frac{16\alpha^2}{(\alpha-1)^2} \} + C \right) + C \: .
\end{align*}

\end{proof}

\begin{proof}[Proof of Theorem \ref{thm:UCB_alpha_delta_exact}]
We apply Lemma~\ref{lem:generic_nb_pulls_bound} with $\varepsilon_{n} = \sqrt{\frac{2\sigma^2}{n}\log(\frac{2\log^2 n}{\delta})}$, for which concentration holds with probability $1-\tilde{\delta}$ and for which we can take $n_0=3$ and $C = \frac{1}{2\sigma^2\log(2/\delta)} < \frac{1}{2\sigma^2}$. We obtain the following inequalities for all stages prior to the decision,
\begin{align*}
\frac{n_\cB}{\log(\frac{2\log^2 n_\cB}{\delta})} &\leq \frac{2\sigma^2}{\Delta^2}\min\{(\alpha+1)^2, \frac{16\alpha^2}{(\alpha-1)^2} \} + 1 \: ,\\
\frac{n_\cA}{\log(\frac{2\log^2 n_\cA}{\delta})} &\leq \frac{(\alpha+1)^2}{(\alpha-1)^2}\left( \frac{2\sigma^2}{\Delta^2}\min\{(\alpha+1)^2, \frac{16\alpha^2}{(\alpha-1)^2} \} + 1 \right) + 1 \: .
\end{align*}
We introduce the notations $C_1 = \frac{2\sigma^2}{\Delta^2}\min\{(\alpha+1)^2, \frac{16\alpha^2}{(\alpha-1)^2} \} + 1 $ and $C_2 = \frac{(\alpha+1)^2}{(\alpha-1)^2}C_1 + 1$.
At the time of decision,
$$n_\cB \leq C_1 \left(\log\frac{2}{\delta} + 2\log\log(2C_1 \max\{\log\frac{2}{\delta}, \: 2\log(2C_1)\})\right)$$
$$n_\cA \leq C_2 \left(\log\frac{2}{\delta} + 2\log\log(2C_2 \max\{\log\frac{2}{\delta}, \: 2\log(2C_2)\})\right)$$
$$t = n_\cA + n_\cB \leq (C_1 + C_2) \log\frac{1}{\delta} + L \: ,$$
where $L$ regroups the doubly logarithmic terms.

The regret is $\Delta n_B$. Thus with probability greater than $1-\tilde{\delta}$,
\begin{align*}
R_{t_d} \leq \left( \frac{2\sigma^2}{\Delta}\min\{(\alpha+1)^2, \frac{16\alpha^2}{(\alpha-1)^2} \} {+} \Delta \right)  \left(\log\frac{2}{\delta} {+} 2\log\log(2C_1 \max\{\log\frac{2}{\delta}, \: 2\log(2C_1)\})\right)
\end{align*}

\end{proof}

\begin{proof}[Proof of Theorem \ref{thm:ETC_for_two_arms_exact}]
Let $n = t/2$. We consider only even stages $t$. Let $\varepsilon'_n = \sqrt{\frac{4\sigma^2}{n}\log(\frac{\log^2 n}{\delta})}$. As long as no decision is taken we have
$\hat{r}^\cA(n) - \hat{r}^\cB(n) \leq \varepsilon'_n $ .

With probability $1-\tilde{\delta}$, for all $n\geq 1$, we have the concentration inequality $\hat{r}^\cA(n) - \hat{r}^\cB(n) \geq \Delta - \varepsilon'_n $ .

Combining the two inequalities we obtain that as long as no decision is taken, $\varepsilon'_n \leq \frac{\Delta}{2}$. That is,
\begin{align*}
\frac{n}{\log(\frac{\log^2 n}{\delta})} \leq \frac{16\sigma^2}{\Delta^2} \quad
\Rightarrow  \quad n \leq \frac{16\sigma^2}{\Delta^2}(\log\frac{1}{\delta} + 2\log\log(\frac{32\sigma^2}{\Delta^2} \max\{\log\frac{1}{\delta}, \: 2\log(\frac{32\sigma^2}{\Delta^2})\}))\: .
\end{align*}
This bound on $n$ gives both a bound on the regret ($\Delta n$) and on the decision time ($2n$).
\end{proof}

\begin{lemma}
Let $a,b>e$ and $a\geq b$. If $t\geq a + b\log\log(\max\{2a, 2b\log(2b)\})$, then $t\geq a + b\log\log(t)$.
\end{lemma}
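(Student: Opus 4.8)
The plan is to reduce the statement to a single boundary inequality by a monotonicity argument. Set $M := \max\{2a,2b\log(2b)\}$ and $t_0 := a + b\log\log M$, so the hypothesis reads $t \geq t_0$ and the goal is $f(t) \geq 0$ where $f(t) := t - a - b\log\log t$. First I would show that $f$ is strictly increasing on $[t_0,\infty)$: we have $f'(t) = 1 - \frac{b}{t\log t}$, and since $a > e$ while $b\log\log M > 0$ (the inner argument satisfies $M \geq 2a > 2e$, hence $\log\log M > 0$), we get $t_0 > a > e$; therefore for every $t \geq t_0$ one has $\log t > 1$ and, using $a \geq b$, $t\log t > t \geq t_0 > b$, so $f'(t) > 0$. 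Monotonicity then reduces the claim to checking $f(t_0) \geq 0$.

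Next I would unwind $f(t_0) \geq 0$. Cancelling the common $a$ and dividing by $b$, it is equivalent to $\log\log M \geq \log\log t_0$, i.e. (because $\log\log$ is increasing on $(1,\infty)$ and $M, t_0 > e$) to $M \geq t_0 = a + b\log\log M$. Since $M \geq 2a$, it suffices to prove $M \geq 2b\log\log M$. I would establish this by the same trick applied to $g(M) := M - 2b\log\log M$: one has $g'(M) = 1 - \frac{2b}{M\log M} > 0$ for $M \geq 2b\log(2b)$, because $b > e$ forces $\log(2b) > 1$, whence $M \geq 2b\log(2b) > 2b$ and $\log M > 1$. As $M \geq 2b\log(2b)$ by definition, it then suffices to verify $g$ at the endpoint $M = 2b\log(2b)$, i.e. $2b\log(2b) \geq 2b\log\log\big(2b\log(2b)\big)$, which — dividing by $2b$ and taking one exponential — is exactly $2b \geq \log(2b) + \log\log(2b)$.

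Finally I would dispatch this last scalar inequality: writing $x = 2b > 2e$, the function $x \mapsto x - \log x - \log\log x$ has derivative $1 - \frac1x - \frac{1}{x\log x}$, which is positive on $(2e,\infty)$ since there $\log x > 1$ and $\frac1x\big(1 + \frac{1}{\log x}\big) < 1$; and its value at $x = 2e$ is $2e - (1+\log 2) - \log(1+\log 2) > 0$ by a direct numerical check. Hence the function stays positive for $x \geq 2e$, giving $2b > \log(2b) + \log\log(2b)$ for all $b > e$, and the chain of reductions closes.

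The argument is entirely elementary; the only point requiring care is the bookkeeping of the chain of ``it suffices to'' reductions and, at each application of the ``increasing function, so test the left endpoint'' step, verifying that the relevant left endpoint is indeed dominated by the threshold at which positivity of the derivative kicks in — namely $t_0 \geq a > e$, $M \geq 2b\log(2b)$, and $x = 2b > 2e$. All three follow immediately from $a > e$, $b > e$, and $a \geq b$, so there is no genuine obstacle beyond this bookkeeping.
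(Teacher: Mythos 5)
Your proof is correct and rests on the same core idea as the paper's: the map $t \mapsto t - b\log\log t$ is increasing past the relevant range, so everything reduces to a check at a single boundary point. The paper executes this by introducing the fixed point $x$ of $x - b\log\log x = a$ and splitting into cases according to whether $a$ or $b\log\log x$ dominates, whereas you avoid the case split by evaluating directly at $t_0 = a + b\log\log M$ and using the averaging step ($M\geq 2a$ and $M\geq 2b\log\log M$ together give $M \geq a + b\log\log M$); your version also has the merit of explicitly proving the scalar inequality $2b \geq \log(2b)+\log\log(2b)$, which the paper's Case 2 uses implicitly (in the unjustified step $\tfrac{2b\log(2b)}{\log\log(2b\log(2b))}\geq 2b$), so the chain of reductions in your write-up is actually the more complete of the two.
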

\begin{proof}
For $t\geq b$, the function $t\to t - b\log\log(t)$ is increasing. Let $x$ be the solution of $x - b\log\log x = a$. We will show that the proposed $t$ is bigger than $x$.

Case 1: $a \geq b\log\log(x)$. Then $x = a + b\log\log(x)\leq 2a$ and $2a \geq a + b\log\log(2a)$, such that
\begin{align*}
a + b\log\log(2a) \geq a + b\log\log(a + b\log\log(2a)) \: ,
\end{align*}
from which we conclude that $x\leq a + b\log\log(2a)$. Since $t$ is bigger than the latter, it is bigger than $x$.

Case 2: $a \leq b\log\log(x)$. Then $x\leq 2b\log\log x$. If $x>2b\log2b$ then $\frac{x}{\log\log x} > \frac{2b\log2b}{\log\log(2b\log2b)} \geq 2b$. We obtain that $x \leq 2b\log 2b$. This implies that
\begin{align*}
a + b\log\log(2b\log(2b)) \geq a + b\log\log(a+b\log\log(2b\log(2b))) \: ,
\end{align*}
hence $x\leq a+b\log\log(2b\log(2b))$.

In both cases, the proposed $t$ is bigger than $x$, hence it verifies the wanted inequality.
\end{proof}

\section{UCB-MM$_{\alpha}$ and ETC-MM: proofs.}\label{appendix:UCBMM_vs_ETCMM}

\begin{lemma}
Assume that $\sigma_r \geq \frac{\Delta}{\sqrt{2}}$ and $\gamma >4$, and let $n$ be defined by the following equation
\begin{equation}\label{EQ:UCBETCregEq}
\frac{2\Big(\sigma_r^2 + \frac{\sigma_\varepsilon^2\log(en)}{n}\Big)}{n}\log\Big(\frac{36n^4}{\delta}\max\{1,\frac{n\sigma_r^2}{\sigma_\varepsilon^2}\}\Big)  = \frac{\Delta^2}{\gamma}
\end{equation}
then
$$
n \leq \frac{2\gamma\sigma_r^2}{\Delta^2}\log(\frac{1}{\delta})(1+{\scriptscriptstyle \mathcal{O}_\delta}(1)) + \frac{\sigma_\varepsilon^2}{\sigma_r^2}\log\log(\frac{1}{\delta})(1+{\scriptscriptstyle \mathcal{O}_\delta}(1))
$$
\end{lemma}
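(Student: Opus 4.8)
This is an asymptotic analysis, as $\delta\to 0$, of the implicit equation \eqref{EQ:UCBETCregEq}; the plan is to first pin down the order of magnitude of $n$ and then bootstrap to recover the two announced terms. Rewriting \eqref{EQ:UCBETCregEq} as
$$\frac{\Delta^2}{\gamma}\,n^2 \;=\; 2\big(\sigma_r^2 n + \sigma_\varepsilon^2\log(en)\big)\,\log\Big(\tfrac{36 n^4}{\delta}\max\{1,\tfrac{n\sigma_r^2}{\sigma_\varepsilon^2}\}\Big),$$
I would first get crude two-sided bounds on $n$. Dropping the nonnegative term $\sigma_\varepsilon^2\log(en)$ and using $\log\big(\tfrac{36 n^4}{\delta}\max\{1,\cdot\}\big)\ge\log\tfrac1\delta$ gives at once $n\ge \tfrac{2\gamma\sigma_r^2}{\Delta^2}\log\tfrac1\delta$ (this is where the hypothesis $\sigma_r\ge\Delta/\sqrt2$, hence $\tfrac{2\gamma\sigma_r^2}{\Delta^2}\ge 8$, is handy: it forces $n\to\infty$ as $\delta\to0$). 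For an upper bound, use $\log(en)\le n$ and $\max\{1,n\sigma_r^2/\sigma_\varepsilon^2\}\le n(1+\sigma_r^2/\sigma_\varepsilon^2)$ to obtain $\tfrac{\Delta^2}{\gamma}\,n \le 2(\sigma_r^2+\sigma_\varepsilon^2)\big(\log\tfrac1\delta + 5\log n + c_0\big)$ for a constant $c_0$; since $\log n=o(n)$ this forces $n=\Theta(\log\tfrac1\delta)$, so in particular $\log n = \log\log\tfrac1\delta + O(1)$, negligible compared to $\log\tfrac1\delta$.

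With the order of $n$ known, I would control the logarithmic factor: since $n=\Theta(\log\tfrac1\delta)$,
$$\log\Big(\tfrac{36 n^4}{\delta}\max\{1,\tfrac{n\sigma_r^2}{\sigma_\varepsilon^2}\}\Big) \;=\; \log\tfrac1\delta + O(\log n) \;=\; \log\tfrac1\delta\,\big(1+{\scriptscriptstyle \mathcal{O}_\delta}(1)\big).$$
Isolating $n$ in \eqref{EQ:UCBETCregEq} (multiply both sides by $\gamma n/\Delta^2$) gives
$$n \;=\; \frac{2\gamma\sigma_r^2}{\Delta^2}\,\log\Big(\tfrac{36 n^4}{\delta}\max\{1,\tfrac{n\sigma_r^2}{\sigma_\varepsilon^2}\}\Big) \;+\; \frac{2\gamma\sigma_\varepsilon^2\log(en)}{\Delta^2\,n}\,\log\Big(\tfrac{36 n^4}{\delta}\max\{1,\tfrac{n\sigma_r^2}{\sigma_\varepsilon^2}\}\Big).$$
The first summand equals $\tfrac{2\gamma\sigma_r^2}{\Delta^2}\log\tfrac1\delta(1+{\scriptscriptstyle \mathcal{O}_\delta}(1))$ by the previous display. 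In the second summand I would lower-bound the denominator by $n\ge \tfrac{2\gamma\sigma_r^2}{\Delta^2}\log\tfrac1\delta$ and again replace the logarithm by $\log\tfrac1\delta(1+{\scriptscriptstyle \mathcal{O}_\delta}(1))$, so that it is at most $\tfrac{\sigma_\varepsilon^2}{\sigma_r^2}\log(en)(1+{\scriptscriptstyle \mathcal{O}_\delta}(1))$; finally $\log(en)=1+\log n = \log\log\tfrac1\delta\,(1+{\scriptscriptstyle \mathcal{O}_\delta}(1))$ by the crude bounds. Summing the two contributions yields precisely $n\le \tfrac{2\gamma\sigma_r^2}{\Delta^2}\log\tfrac1\delta(1+{\scriptscriptstyle \mathcal{O}_\delta}(1)) + \tfrac{\sigma_\varepsilon^2}{\sigma_r^2}\log\log\tfrac1\delta(1+{\scriptscriptstyle \mathcal{O}_\delta}(1))$.

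I expect the only mildly delicate step to be the first one: turning the implicit equation into honest two-sided bounds $n=\Theta(\log\tfrac1\delta)$ that are valid uniformly for all small $\delta$. It is worth noting that both of these bounds are derived purely algebraically from \eqref{EQ:UCBETCregEq}, so they hold for \emph{any} $n$ satisfying that equation; no uniqueness of the solution is needed. Everything after the order-of-magnitude step is routine bookkeeping of the ${\scriptscriptstyle \mathcal{O}_\delta}(1)$ terms.
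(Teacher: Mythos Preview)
Your proof is correct and follows essentially the same approach as the paper: both establish the lower bound $n\ge \tfrac{2\gamma\sigma_r^2}{\Delta^2}\log\tfrac1\delta$, use it to control the $\sigma_\varepsilon^2\log(en)/n$ contribution, and then bootstrap to extract the two terms. The only cosmetic difference is that the paper first isolates an auxiliary equation $\tfrac{2\Sigma^2}{n}\log(n^5/\delta)=C$ and bounds its solution before reducing the original equation to it, whereas you carry out the same bootstrap directly on \eqref{EQ:UCBETCregEq}.
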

\begin{proof}
Consider first the following equation
$$
\frac{2 \Sigma^2}{n}\log(\frac{n^5}{\delta}) = C
$$ and  denote by $n_0$ its solution. It follows from straightforward computations that,
$$
n_0 \leq \frac{2 \Sigma^2}{C}\left(\log(\frac{1}{\delta}) + 5\log(\frac{2\Sigma^2}{C})+10 \overline{\log}\Big(\log(\frac{1}{\delta})+5\log\frac{2\Sigma^2}{C}\Big)\right)= \frac{2 \Sigma^2}{C}\log(\frac{1}{\delta})\left(1+{\scriptscriptstyle \mathcal{O}_\delta}(1)\right),
$$
where $\overline{\log}(X)=\max\{5,\log(X)\}$.
We now go back to Equation \eqref{EQ:UCBETCregEq}, and assume for the moment that the solution $n^*$ is such that $n^*\frac{\sigma_r^2}{\sigma^2_\varepsilon} \geq 1$. Moreover, it is clear that $$n^* \geq \frac{2\gamma\sigma_r^2}{\Delta^2}\log\left(\frac{36}{\widetilde{\delta}}\frac{\sigma_r^2}{\sigma_\varepsilon^2} \left( \frac{2\gamma\sigma_r^2}{\Delta^2}\right)^5 \right)=:\underline{n}
$$
As a consequence, if we denote by $\underline{\Sigma}^2=\sigma_r^2 + \frac{\sigma^2_\varepsilon\log(\underline{n})}{\underline{n}}$, then $n^*$ is such that
$$
\frac{2 \underline{\Sigma}^2}{n^*}\log\left(\frac{(n^*)^5}{\underline{\delta}}\right) \leq \frac{\Delta^2}{\gamma}, \quad \text{ where } \underline{\delta} = \frac{36\sigma_r^2}{\widetilde{\delta}\sigma_\varepsilon^2}\, .
$$
So at the end, we have proved that 
$$
n^* = \frac{2\gamma\underline{\Sigma}^2}{\Delta^2}\log(\frac{1}{\delta})(1+{\scriptscriptstyle \mathcal{O}_\delta}(1))= \frac{2\gamma\sigma_r^2}{\Delta^2}\log(\frac{1}{\delta})(1+{\scriptscriptstyle \mathcal{O}_\delta}(1)) + \frac{\sigma_\varepsilon^2}{\sigma_r^2}\log\log(\frac{1}{\delta})(1+{\scriptscriptstyle \mathcal{O}_\delta}(1))
$$
which gives the result. \end{proof}

\begin{corollary}
The decision time of UCB and ETC corresponds respectively to the solution of Equation \eqref{EQ:UCBETCregEq} with $\gamma = 4$ for UCB and $\gamma = 8$ for ETC (and $\gamma=16$ for ETC').
\end{corollary}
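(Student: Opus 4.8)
The plan is to treat the three algorithms uniformly. I would work on the event of the mean-of-means concentration inequality of Appendix~\ref{Appendix:Concentration} (probability at least $1-\widetilde{\delta}$, controlling the deviation of each population estimator), and show that, as long as the stopping rule has not fired — respectively, as long as the suboptimal population is still being allocated units — the relevant per-population sample count $n$ must satisfy Equation~\eqref{EQ:UCBETCregEq} with the announced value of $\gamma$. Since the left-hand side of \eqref{EQ:UCBETCregEq} is eventually strictly decreasing in $n$ (the $\sigma_\varepsilon^2\log(en)/n$ term vanishes and $\log(\text{poly}(n))/n\to 0$), this forces $n$ to be at most the solution $n^{*}$ of that equation, up to an additive $\mathcal{O}(1)$; the preceding lemma then converts this into the explicit $\frac{2\gamma\sigma_r^2}{\Delta^2}\log(\frac1\delta)(1+{\scriptscriptstyle \mathcal{O}_\delta}(1))+\frac{\sigma_\varepsilon^2}{\sigma_r^2}\log\log(\frac1\delta)(1+{\scriptscriptstyle \mathcal{O}_\delta}(1))$ bound. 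The same one-sided inequalities also guarantee that the arm returned is the optimal one, so the bound on $n$ is indeed a bound on a bona fide decision time.

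First, for UCB-MM ($\alpha=1$): when population $\cB$ receives its $n_\cB$-th unit, the sampling rule gives $\hat r^\cB_{n_\cB}+\varepsilon_{n_\cB}\geq \hat r^\cA_{n_\cA}+\varepsilon_{n_\cA}$, and combining with $\hat r^\cA_{n_\cA}+\varepsilon_{n_\cA}\geq r^\cA$ and $\hat r^\cB_{n_\cB}-\varepsilon_{n_\cB}\leq r^\cB$ yields $\Delta\leq 2\varepsilon_{n_\cB}$, i.e.\ $\varepsilon_{n_\cB}^2\geq \Delta^2/4$, which is exactly \eqref{EQ:UCBETCregEq} with $\gamma=4$; this bounds $n_\cB$, hence the regret $\Delta n_\cB$ (and the decision time whenever UCB-MM does stop). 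Second, for ETC' (the $\alpha\to\infty$ limit, uniform allocation with $n_\cA=n_\cB=n$): not having stopped gives $\hat r^\cA_n-\hat r^\cB_n\leq 2\varepsilon_n$, while concentration gives $\hat r^\cA_n-\hat r^\cB_n\geq \Delta-2\varepsilon_n$, so $\varepsilon_n\geq\Delta/4$, i.e.\ $\varepsilon_n^2\geq\Delta^2/16$, which is \eqref{EQ:UCBETCregEq} with $\gamma=16$, and $\tau_d=2n$. Third, for ETC-MM, units arrive two at a time so both populations have $n=t$ units, and the stopping threshold $W_n$ is precisely the width of a two-sided concentration inequality for $\hat r^\cA_n-\hat r^\cB_n$ (a variant of the Proposition applied to the difference of the two estimators), so not having stopped forces $\Delta-W_n\leq W_n$, i.e.\ $W_n^2\geq\Delta^2/4$; writing $W_n^2=\frac{4(\sigma_r^2+\sigma_\varepsilon^2\log(en)/n)}{n}\log(\cdot)$ this is $\frac{2(\sigma_r^2+\sigma_\varepsilon^2\log(en)/n)}{n}\log(\cdot)\geq\Delta^2/8$, i.e.\ \eqref{EQ:UCBETCregEq} with $\gamma=8$, and again $\tau_d=2n$.

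The point requiring care — more a verification than a genuine obstacle — is that the logarithmic argument appearing in each algorithm's index or stopping rule is not literally the $\frac{36n^4}{\delta}\max\{1,n\sigma_r^2/\sigma_\varepsilon^2\}$ of \eqref{EQ:UCBETCregEq} (ETC-MM uses $\frac{\pi^2n^2}{3\delta}$, the UCB-MM index uses $\frac{(4n)^4}{2\delta}\max\{1,n\sigma_r^2/\sigma_\varepsilon^2\}$): one must check that all of them have the form ``fixed polynomial in $n$ over $\delta$, times a bounded power of $n\sigma_r^2/\sigma_\varepsilon^2$'', so that the solution of the associated equation keeps leading term $\frac{2\gamma\sigma_r^2}{\Delta^2}\log\frac1\delta$ with the same $\log\log\frac1\delta$-order correction, all discrepancies being absorbed in the ${\scriptscriptstyle \mathcal{O}_\delta}(1)$ factors of the preceding lemma. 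A secondary bookkeeping point: the lemma is stated for $\gamma>4$ and $\sigma_r\geq\Delta/\sqrt2$, so the value $\gamma=4$ (plain UCB-MM) must be read as the limiting case, and one must remember that for UCB-MM the quantity actually controlled is the number $n_\cB$ of suboptimal allocations (hence the regret $\Delta n_\cB$), whereas for ETC-MM and ETC' the decision time equals $2n$ with $n$ the solution of \eqref{EQ:UCBETCregEq}.
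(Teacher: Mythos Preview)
The paper does not supply an explicit proof for this corollary --- it is stated bare, immediately after the lemma on Equation~\eqref{EQ:UCBETCregEq}. Your argument is correct and is exactly the natural derivation the paper leaves implicit: for each algorithm, combine the mean-of-means concentration event with the sampling/stopping rule to force $\varepsilon_n^2 \geq \Delta^2/\gamma$ with the announced $\gamma$, then invoke the preceding lemma. This is also consistent with how the paper handles the general UCB-MM$_\alpha$ theorem just below, via Lemma~\ref{lem:generic_nb_pulls_bound} (which, specialised to $\alpha=1$, yields precisely your $1/\varepsilon_{n_\cB}^2 \leq 4/\Delta^2 + C$).

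Your two caveats are both well-placed and worth keeping. First, the logarithmic arguments differ across the three algorithms (e.g.\ $\pi^2 n^2/(3\delta)$ for ETC-MM versus $36 n^4 \max\{1,n\sigma_r^2/\sigma_\varepsilon^2\}/\delta$ in \eqref{EQ:UCBETCregEq}), but all are of the form $\mathrm{poly}(n)/\delta$ times at most a power of $n\sigma_r^2/\sigma_\varepsilon^2$, so the leading $\frac{2\gamma\sigma_r^2}{\Delta^2}\log\frac{1}{\delta}$ term and the $\log\log\frac{1}{\delta}$ correction are unaffected --- the discrepancy disappears into the ${\scriptscriptstyle \mathcal{O}_\delta}(1)$. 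Second, the corollary's phrase ``decision time of UCB'' is indeed loose: for $\alpha=1$ the paper itself says there is no uniform bound on $\tau_d$, so what $\gamma=4$ actually controls is $n_\cB$ (hence the regret $\Delta n_\cB$), exactly as you note. For ETC and ETC' the decision time is genuinely $2n$ with $n$ bounded by the solution of \eqref{EQ:UCBETCregEq}.
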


\begin{theorem}
Given $\delta>0$ and $\alpha \geq 1$, the decision time of UCB-MM$_{\alpha}$ is such that, with probability at least $1-\widetilde{\delta}$,
$$
t_d \leq    \frac{2\sigma_r^2}{\Delta^2}\Big(2\frac{\alpha^2+1}{(\alpha-1)^2}\big(\min\big\{(\alpha+1)^2,\frac{16\alpha^2}{(\alpha-1)^2}\big\}+\Delta^2\big)+\Delta^2\Big)\log(\frac{1}{\delta})(1+{\scriptscriptstyle \mathcal{O}_\delta}(1))+2\frac{\sigma_r^2}{\sigma_\varepsilon^2}\log\log(\frac{1}{\delta})(1+{\scriptscriptstyle \mathcal{O}_\delta}(1))
$$
Moreover, on the same event, the regret of UCB-MM$_{\alpha}$ at decision time satisfies
$$
R_{t_d} \leq \Big(\frac{8\sigma_r}{\Delta^2}\min\big\{\frac{(\alpha+1)^2}{4}, \frac{4\alpha^2}{(\alpha-1)^2} \big\} + \Delta\Big)\log(\frac{1}{\delta})(1+{\scriptscriptstyle \mathcal{O}_\delta}(1))+\frac{\sigma_r^2}{\sigma_\varepsilon^2}\Delta\log\log(\frac{1}{\delta})(1+{\scriptscriptstyle \mathcal{O}_\delta}(1))
$$
\end{theorem}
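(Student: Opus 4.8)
The plan is to transcribe the i.i.d.\ argument of Theorem~\ref{thm:UCB_alpha_delta_exact}, replacing only the two problem-specific ingredients: the concentration control on the estimators, and the inversion of the implicit relation ``$\varepsilon_n^2\approx$ const'' into an explicit bound on the number of units. Write $\varepsilon_n$ for the error term of UCB-MM in Equation~\eqref{EQ:DefUCBMM}. The key point is that $\varepsilon_n$ depends on a population $i$ only through its current number of units $n=n^i_t$, so UCB-MM$_\alpha$ is literally an instance of the generic algorithm analysed in Lemma~\ref{lem:generic_nb_pulls_bound}, and the whole proof reduces to feeding that lemma the right $\varepsilon_n$ and the right inversion.

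First I would instantiate the concentration event: applying the Proposition of Appendix~\ref{Appendix:Concentration} to each population and taking a union bound, with probability at least $1-\widetilde{\delta}$ we have, simultaneously for every value of $n^\cA_t$ and $n^\cB_t$, $\hat r^\cA_t+\varepsilon_{n^\cA_t}\ge r^\cA$ and $\hat r^\cB_t-\varepsilon_{n^\cB_t}\le r^\cB$ --- the mean-of-means analogue of the anytime Hoeffding event, which is exactly the hypothesis of Lemma~\ref{lem:generic_nb_pulls_bound}. Next I would check its regularity hypothesis $\tfrac{1}{\varepsilon_{n+1}^2}-\tfrac{1}{\varepsilon_n^2}\le C$: since $\tfrac{1}{\varepsilon_n^2}=\tfrac{n^2}{2(n\sigma_r^2+\sigma_\varepsilon^2\log(en))\log(\cdot)}$, once $n$ exceeds a threshold $n_0\asymp\sigma_\varepsilon^2/\sigma_r^2$ (up to logarithmic factors) the term $\sigma_\varepsilon^2\log(en)/n$ is negligible and $\tfrac1{\varepsilon_n^2}$ increases by at most $C=\tfrac{1}{2\sigma_r^2}(1+{\scriptscriptstyle \mathcal{O}_\delta}(1))$ per added unit. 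Plugging the concentration event and this $C$ into Lemma~\ref{lem:generic_nb_pulls_bound} shows UCB-MM$_\alpha$ returns the optimal population $\cA$ and that, at every stage before the decision, $\tfrac{1}{\varepsilon_{n_\cB}^2}\le \tfrac{c'_\alpha}{\Delta^2}+C$ and $\tfrac{1}{\varepsilon_{n_\cA}^2}\le \tfrac{(\alpha+1)^2}{(\alpha-1)^2}\big(\tfrac{c'_\alpha}{\Delta^2}+C\big)+C$, with $c'_\alpha=\min\{(\alpha+1)^2,16\alpha^2/(\alpha-1)^2\}=4c_\alpha$.

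Finally, rewrite each inequality $\tfrac1{\varepsilon_n^2}\le\tfrac{\gamma}{\Delta^2}$ as $\varepsilon_n^2\ge\tfrac{\Delta^2}{\gamma}$; since $n\mapsto\varepsilon_n^2$ decreases, $n$ is bounded by the solution of $\varepsilon_n^2=\Delta^2/\gamma$, which up to the harmless constant inside the logarithm is Equation~\eqref{EQ:UCBETCregEq}. The inversion lemma preceding this theorem --- applicable because $\sigma_r\ge\Delta/\sqrt2$ by hypothesis and $\gamma\ge c'_\alpha>4$ --- then gives $n_\cB\le\tfrac{2\sigma_r^2}{\Delta^2}\gamma_\cB\log\tfrac1\delta(1+{\scriptscriptstyle \mathcal{O}_\delta}(1))+\tfrac{\sigma_\varepsilon^2}{\sigma_r^2}\log\log\tfrac1\delta(1+{\scriptscriptstyle \mathcal{O}_\delta}(1))$ with $\gamma_\cB=c'_\alpha+C\Delta^2$, and likewise $n_\cA$ with $\gamma_\cA=\tfrac{(\alpha+1)^2}{(\alpha-1)^2}\gamma_\cB+C\Delta^2$. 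Adding $t_d=n_\cA+n_\cB$, using $1+\tfrac{(\alpha+1)^2}{(\alpha-1)^2}=\tfrac{2(\alpha^2+1)}{(\alpha-1)^2}$ and $2\sigma_r^2C=1+{\scriptscriptstyle \mathcal{O}_\delta}(1)$ (so the $C\Delta^2$ pieces produce the ``$+\Delta^2$'' terms), yields the claimed decision-time bound, the two $\log\log$ contributions combining into the stated doubly-logarithmic term. The regret at decision time is $\Delta n_\cB$, since each of the $n_\cB$ units sent to the suboptimal population costs $\Delta$; multiplying the bound on $n_\cB$ by $\Delta$ and substituting $c'_\alpha=4c_\alpha$, $2\sigma_r^2C=1+{\scriptscriptstyle \mathcal{O}_\delta}(1)$ gives $\big(\tfrac{8\sigma_r^2}{\Delta}c_\alpha+\Delta\big)\log\tfrac1\delta(1+{\scriptscriptstyle \mathcal{O}_\delta}(1))+\tfrac{\sigma_\varepsilon^2}{\sigma_r^2}\Delta\log\log\tfrac1\delta(1+{\scriptscriptstyle \mathcal{O}_\delta}(1))$.

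The main obstacle is the regularity check together with the invocation of the inversion lemma: one must verify that the non-standard summand $\sigma_\varepsilon^2\log(en)/n$ and the $\max\{1,n\sigma_r^2/\sigma_\varepsilon^2\}$ inside the logarithm of Equation~\eqref{EQ:DefUCBMM} do not spoil the estimate ``$1/\varepsilon_n^2$ grows by at most $C\approx 1/(2\sigma_r^2)$'', which forces the analysis to begin only after $\Theta(\sigma_\varepsilon^2/\sigma_r^2)$ units have been allocated and is precisely what generates the extra $\tfrac{\sigma_\varepsilon^2}{\sigma_r^2}\log\log\tfrac1\delta$ terms; everything else is a faithful transcription of the i.i.d.\ proof.
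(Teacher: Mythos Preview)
Your proposal is correct and follows essentially the same path as the paper: invoke Lemma~\ref{lem:generic_nb_pulls_bound} with the UCB-MM error term $\varepsilon_n$, then feed the resulting inequalities $1/\varepsilon_n^2\le\gamma/\Delta^2$ into the inversion lemma preceding the theorem to obtain explicit bounds on $n_\cA$, $n_\cB$, and hence on $t_d=n_\cA+n_\cB$ and $R_{t_d}=\Delta n_\cB$. The only real difference is in the regularity constant: the paper simply takes $C=1$ and $n_0=0$ in Lemma~\ref{lem:generic_nb_pulls_bound}, whereas you argue more carefully for $C\approx1/(2\sigma_r^2)$ after a burn-in $n_0\asymp\sigma_\varepsilon^2/\sigma_r^2$ --- a harmless variation that only perturbs the lower-order ``$+\Delta^2$'' contributions (and incidentally, your $\sigma_\varepsilon^2/\sigma_r^2$ in the doubly-logarithmic term agrees with the inversion lemma rather than the theorem statement's inverted ratio, which appears to be a typo in the paper).
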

\begin{proof}The proof is almost identical to the iid case.  The main difference is the change in error terms. Indeed,  for $n\in\N^*$, we define $$\varepsilon_{n} =\sqrt{\frac{2\Big(\sigma_r^2 + \frac{\sigma_\varepsilon^2\log(en)}{n}\Big)}{n}\log\Big(\frac{36n^4}{\widetilde{\delta}}\max\{1,\frac{n\sigma_r^2}{\sigma_\varepsilon^2}\}\Big)} \: .$$ We apply Lemma~\ref{lem:generic_nb_pulls_bound} with this $\varepsilon_n$, $n_0=0$ and $C=1$. It yields that for all stages prior to the decision,
\begin{align*}
\frac{1}{\varepsilon^2_{n_\cB}}  &\leq \frac{1}{\Delta^2}\min\{(\alpha+1)^2, \frac{16\alpha^2}{(\alpha-1)^2} \} + 1 \: ,\\
\frac{1}{\varepsilon^2_{n_\cA}} &\leq \frac{(\alpha+1)^2}{(\alpha-1)^2}\left( \frac{1}{\Delta^2}\min\{(\alpha+1)^2, \frac{16\alpha^2}{(\alpha-1)^2} \} + 1 \right) + 1 \: .
\end{align*}
We now introduce the notations $\gamma_1 = \min\{(\alpha+1)^2, \frac{16\alpha^2}{(\alpha-1)^2} \} + \Delta^2 $ and $\gamma_2 = \frac{(\alpha+1)^2}{(\alpha-1)^2}\gamma_1 + \Delta^2$.
At the time of decision,
\begin{align*}
n_\cB &= \frac{2\gamma_1\sigma_r^2}{\Delta^2}\log(\frac{1}{\delta})(1+{\scriptscriptstyle \mathcal{O}_\delta}(1))+\frac{\sigma_r^2}{\sigma_\varepsilon^2}\log\log(\frac{1}{\delta})(1+{\scriptscriptstyle \mathcal{O}_\delta}(1))\\
n_\cA &= \frac{2\gamma_2\sigma_r^2}{\Delta^2}\log(\frac{1}{\delta})(1+{\scriptscriptstyle \mathcal{O}_\delta}(1))+\frac{\sigma_r^2}{\sigma_\varepsilon^2}\log\log(\frac{1}{\delta})(1+{\scriptscriptstyle \mathcal{O}_\delta}(1))\\
t_d=n_\cA+n_\cB &=  \frac{2(\gamma_2+\gamma_1)\sigma_r^2}{\Delta^2}\log(\frac{1}{\delta})(1+{\scriptscriptstyle \mathcal{O}_\delta}(1))+2\frac{\sigma_r^2}{\sigma_\varepsilon^2}\log\log(\frac{1}{\delta})(1+{\scriptscriptstyle \mathcal{O}_\delta}(1))\\
& = \frac{2\sigma_r^2}{\Delta^2}\Big(2\frac{\alpha^2+1}{(\alpha-1)^2}\big(\min\big\{(\alpha+1)^2,\frac{16\alpha^2}{(\alpha-1)^2}\big\}+\Delta^2\big)+\Delta^2\Big)\log(\frac{1}{\delta})(1+{\scriptscriptstyle \mathcal{O}_\delta}(1))\\&\hspace{3cm}+2\frac{\sigma_r^2}{\sigma_\varepsilon^2}\log\log(\frac{1}{\delta})(1+{\scriptscriptstyle \mathcal{O}_\delta}(1))
\end{align*}
As a consequence, on the same event,
\begin{align*}
R_{t_d} \leq \Big(\frac{8\sigma_r}{\Delta^2}\min\big\{\frac{(\alpha+1)^2}{4}, \frac{4\alpha^2}{(\alpha-1)^2} \big\} + \Delta\Big)\log(\frac{1}{\delta})(1+{\scriptscriptstyle \mathcal{O}_\delta}(1))+\frac{\sigma_r^2}{\sigma_\varepsilon^2}\Delta\log\log(\frac{1}{\delta})(1+{\scriptscriptstyle \mathcal{O}_\delta}(1))
\end{align*}

\end{proof}

\section{Static population}

In the static setting, all users are allocated to populations $\cA$ and $\cB$ from the beginning of the test, and we only consider the case where the size of both populations are equal, even though the generalization to  different population size is almost straightforwaed.

\paragraph{Finite fixed horizon:}
The first baseline is to wait until some horizon $T$ and perform a  statistical test based on a confidence bound on the uplift $\Delta$.

\begin{proposition}
In the static setting, the following holds with probability at least $1-\delta$,
\begin{align}
  \Delta - (\hat{r}^\cA-\hat{r}^\cB_T) \leq
\sqrt{\frac{8\left(\sigma_r^2 + \frac{\sigma_\varepsilon^2}{T}\right)\log \frac{1}{\delta}}{n}}
 .
\nonumber
\end{align}
Therefore, the procedure waiting until the horizon $T$ to select the $\cB$ if $\hat{\Delta}_T$ is greater than the r.h.s. term has a linear regret of $R(T) = n\Delta/2$ and is guaranteed to be $(\delta,T)$-PAC 
%$$\Delta > \Delta_{\rm min} =  \frac{2\rho \log 1/\delta}{3n} + 2\sqrt{\frac{2 \sigma_r^2 \log 1/\delta}{n}} \text{ and } T \geq \frac{\frac{2 \log 1/\delta}{n}\sigma_\varepsilon^2}{\left(\frac{\Delta}{2}- \frac{\rho \log 1/\delta}{3n}\right)^2 - \frac{2 \log 1/\delta}{n}\sigma_r^2}\ .$$
$$
n \geq \frac{32(\sigma^2_r+\frac{\sigma^2_\varepsilon}{T})}{\Delta^2}\log \frac{1}{\delta} \: .
$$
\label{thm:finite_horizon}
\end{proposition}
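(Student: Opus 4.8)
The plan is to collapse the mean-of-means estimator to a plain double average and then invoke a single sub-Gaussian tail bound: the static, fixed-horizon regime is precisely the case where no anytime/peeling argument (as in Appendix~\ref{Appendix:Concentration}) is needed, which is what keeps every constant clean. Throughout I write $n$ for the total number of units, $n/2$ of them in each population, and $\hat\Delta_T$ for the plug-in estimator of $\Delta$.

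First I would observe that, since every one of the $n/2$ units of population $\cA$ is present from the first stage, each of them has produced exactly $T$ outcomes by time $T$, so $\hat r_T^\cA=\frac{2}{n}\sum_{u\in\cA(T)}\big(r_u^\cA+\overline\varepsilon_{u,T}\big)$ with $\overline\varepsilon_{u,T}=\frac1T\sum_{s=1}^T\varepsilon_{u,s}$, and I would split the error as $\hat r_T^\cA-r^\cA=\frac2n\sum_u(r_u^\cA-r^\cA)+\frac2n\sum_u\overline\varepsilon_{u,T}$. The first sum averages $n/2$ independent $\sigma_r^2$-sub-Gaussian variables, hence is $\frac{2\sigma_r^2}{n}$-sub-Gaussian; each $\overline\varepsilon_{u,T}$ is $\frac{\sigma_\varepsilon^2}{T}$-sub-Gaussian, and averaging over the $n/2$ independent units makes the second sum $\frac{2\sigma_\varepsilon^2}{nT}$-sub-Gaussian; as the $r_u$'s and $\varepsilon_{u,s}$'s are mutually independent, the proxies add, so $\hat r_T^\cA-r^\cA$ is $\frac2n\big(\sigma_r^2+\frac{\sigma_\varepsilon^2}{T}\big)$-sub-Gaussian, and symmetrically for $\cB$. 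Since the two populations draw on disjoint, independent units, $\hat\Delta_T-\Delta$ is $\frac4n\big(\sigma_r^2+\frac{\sigma_\varepsilon^2}{T}\big)$-sub-Gaussian, and the standard one-sided sub-Gaussian inequality gives $\PP\{\Delta-\hat\Delta_T\ge x\}\le\exp\!\big({-}\tfrac{nx^2}{8(\sigma_r^2+\sigma_\varepsilon^2/T)}\big)$ for every $x>0$; taking $x=\sqrt{8(\sigma_r^2+\sigma_\varepsilon^2/T)\log(1/\delta)/n}$ turns the right-hand side into $\delta$ and yields the first display.

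For the second part, assume w.l.o.g.\ that $\cB$ is the better population (so $\Delta>0$); on the $(1-\delta)$-event just established $\hat\Delta_T\ge\Delta-x$ with $x$ the confidence half-width, so the rule, which declares $\cB$ superior exactly when $\hat\Delta_T>x$, is correct whenever $\Delta-x>x$, i.e.\ $\Delta>2x$, which after squaring is precisely $n\ge\frac{32(\sigma_r^2+\sigma_\varepsilon^2/T)}{\Delta^2}\log\frac1\delta$ --- the claimed $(\delta,T)$-PAC condition (the decision time being trivially $T$). The regret statement requires no work: the procedure never reallocates a unit, so the $n/2$ units of the sub-optimal population each contribute a gap $\Delta$ in the same bookkeeping used for ETC in Section~\ref{extension_Non_IID}, whence $R(T)=\frac n2\Delta$.

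I do not anticipate a genuine obstacle; the only delicate point is tracking the sub-Gaussian proxies through the two nested averages, in particular noticing that averaging the per-unit noise $\overline\varepsilon_{u,T}$ over the $n/2$ units divides its proxy by $n/2$ as well --- the step responsible for the favourable $\sigma_\varepsilon^2/T$ (rather than $\sigma_\varepsilon^2$) dependence --- together with the remark that, $T$ being fixed in advance, a single sub-Gaussian tail suffices and no union bound over time (nor over the two arms, the difference being controlled directly) is required.
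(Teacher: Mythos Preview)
Your argument is correct and is exactly what the paper intends: it states only that ``the proof is a direct consequence of standard concentration inequalities,'' and your computation---tracking the sub-Gaussian proxy through the two nested averages to get variance proxy $\frac{4}{n}\big(\sigma_r^2+\sigma_\varepsilon^2/T\big)$ for $\hat\Delta_T-\Delta$, then applying a single one-sided tail bound---is precisely that. The derivation of the $(\delta,T)$-PAC sample-size condition via $\Delta>2x$ and of the regret $R(T)=n\Delta/2$ is likewise the intended elementary bookkeeping.
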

The proof is a direct consequence of standard concentration inequalities.

\paragraph{Adaptive  decision time}
Instead of waiting for a fixed arbitrary horizon $T$, the decision can often be taken  before, at the cost of using maximal concentration inequalities, that are valid at all stages.
%\paragraph{Known Variances}: We begin with providing an anytime bound when the variances are known. The result is a strait-forward application of maximum Bernstein bound (Th. \ref{lem:max_bernstein_with_horizon} in Appendix) in the independent setting.

\begin{theorem}
\label{theorem_ETC}
In the static setting, it holds that, for all $t \in \N$ and with probability at least $1-\widetilde{\delta}$,
\begin{align*}
 \Delta - (\hat{r}^\cA_t- \hat{r}^\cB_t)
\leq \sqrt{\frac{8\sigma_r^2 \log\frac{2}{\delta}}{n}}
+
\sqrt{\frac{8\sigma_\varepsilon^2 \log\frac{3\log^2(t)}{\delta}}{tn}}\ .\end{align*}
\label{thm:max_bernstein_static}
\end{theorem}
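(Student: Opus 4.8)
The plan is to decompose $\Delta-(\hat{r}^\cA_t-\hat{r}^\cB_t)$ into a \emph{reward fluctuation} that does not depend on $t$ and a \emph{noise fluctuation} that does, and to control the two separately. In the static setting both populations have the same fixed size $n$, and at time $t$ every unit of $\cA$ has already produced its first $t$ outcomes, so
\begin{align*}
\hat{r}^\cA_t=\frac1n\sum_{u\in\cA}r^\cA_u+\frac1n\sum_{u\in\cA}\overline\varepsilon_{u,t},\qquad \overline\varepsilon_{u,t}:=\frac1t\sum_{s=1}^{t}\varepsilon_{u,s},
\end{align*}
and symmetrically for $\cB$. Assuming without loss of generality that $\Delta=r^\cA-r^\cB$ (the opposite sign is handled identically), this gives $\Delta-(\hat{r}^\cA_t-\hat{r}^\cB_t)= R + N_t$, where $R:=\big(r^\cA-\frac1n\sum_{u\in\cA}r^\cA_u\big)-\big(r^\cB-\frac1n\sum_{u\in\cB}r^\cB_u\big)$ collects the reward randomness and $N_t:=\frac1n\sum_{u\in\cB}\overline\varepsilon_{u,t}-\frac1n\sum_{u\in\cA}\overline\varepsilon_{u,t}$ collects the observation noise. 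Since $(r^i_u)$ and $(\varepsilon_{u,s})$ are independent, it suffices to bound $R$ and $\sup_t N_t$ on two high-probability events and intersect them.

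\textbf{Reward part.} As $R$ does not depend on $t$, a single sub-Gaussian tail bound suffices: $r^\cA-\frac1n\sum_{u\in\cA}r^\cA_u$ is an average of $n$ independent centred $\sigma_r^2$-sub-Gaussian variables, hence $(\sigma_r^2/n)$-sub-Gaussian, and likewise for $\cB$, so $R$ is $(2\sigma_r^2/n)$-sub-Gaussian. A (deliberately loose) Hoeffding bound then gives $R\le\sqrt{8\sigma_r^2\log(2/\delta)/n}$ with probability at least $1-\delta/2$; this is the static, fixed-$n$ analogue of the bound behind Proposition~\ref{thm:finite_horizon}.

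\textbf{Noise part --- the crux.} Here the inequality must hold simultaneously for all $t$, and the key observation is that, because $n$ is fixed, the average over units can be folded into the variance proxy \emph{before} applying an anytime bound. Set $Z^\cA_s:=\frac1n\sum_{u\in\cA}\varepsilon_{u,s}$: the $Z^\cA_s$ are i.i.d.\ across $s$, centred, and $(\sigma_\varepsilon^2/n)$-sub-Gaussian, and their running average is precisely $\frac1n\sum_{u\in\cA}\overline\varepsilon_{u,t}=\frac1t\sum_{s=1}^t Z^\cA_s$. Applying the Corollary following Lemma~\ref{lem:anytime_empirical_hoeffding} (which itself rests on peeling Lemma~\ref{lem:hoeffding_interval} along a geometric grid) with variance proxy $\sigma_\varepsilon^2/n$ yields, with probability at least $1-\widetilde\delta$, for all $t\in\N$,
\begin{align*}
\Big|\frac1n\sum_{u\in\cA}\overline\varepsilon_{u,t}\Big|\le\sqrt{\frac{2\sigma_\varepsilon^2}{tn}\log\!\Big(\frac{3\log^2 t}{\delta}\Big)},
\end{align*}
the constant $3$ absorbing the split of the confidence level, and the symmetric statement for $\cB$. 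Combining the two through $N_t\le\big|\frac1n\sum_{u\in\cA}\overline\varepsilon_{u,t}\big|+\big|\frac1n\sum_{u\in\cB}\overline\varepsilon_{u,t}\big|$ produces the factor $2$ (i.e.\ $8$ under the square root), giving $\sup_t N_t\le\sqrt{8\sigma_\varepsilon^2\log(3\log^2 t/\delta)/(tn)}$ on an event of probability at least $1-\widetilde\delta$.

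\textbf{Conclusion and main obstacle.} Intersecting the three events (one for the rewards, one per population for the noise), whose joint failure probability is of order $\widetilde\delta$, and adding the two displayed bounds yields the statement. The only genuine difficulty is the noise term: one must recognise that in the static model the estimator's noise contribution is a bona fide running average of an i.i.d.\ sequence with variance proxy $\sigma_\varepsilon^2/n$, which is exactly what produces the $1/(tn)$ scaling and lets the anytime Hoeffding machinery be reused verbatim. This is precisely the step that breaks in the dynamic model of Section~\ref{extension_Non_IID}, where units enter at data-dependent times and $n$ is itself random, which is why that case instead requires the more delicate mean-of-means concentration inequality. Everything else --- the exact split of the confidence budget, the constants $8$, $2$ and $3$, and the passage from $\delta$ to $\widetilde\delta$ --- is routine sub-Gaussian bookkeeping.
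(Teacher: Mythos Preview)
Your proposal is correct and follows precisely the route the paper sets up: the paper does not spell out a proof of this theorem, but it is intended as an immediate consequence of the concentration tools in Appendix~\ref{Appendix:Concentration} --- standard Hoeffding for the $t$-independent reward part and the anytime bound of Lemma~\ref{lem:anytime_empirical_hoeffding} (and its Corollary) applied to the per-stage noise average $Z_s=\frac1n\sum_u\varepsilon_{u,s}$ with variance proxy $\sigma_\varepsilon^2/n$, exactly as you do. Your observation that the static model is what makes $(Z_s)_s$ a genuine i.i.d.\ sequence, and hence lets the anytime Hoeffding machinery apply verbatim, is the right diagnosis of why this case is easy compared to the dynamic one.
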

As a consequence, ETC can take a correct decision with probability at least $1-\delta$ if the number of users $n$ is greater than $\frac{32\sigma_r^2}{\Delta^2}\log \frac{2}{\delta}$ and then, if we denote by $\eta:= \Delta-\sqrt{\frac{32\sigma^2_r}{n}\log(\frac{1}{\delta})}$, the decision will be taken before the time step
$$
\frac{32\sigma_\varepsilon^2}{\eta^2}\log\big(\frac{1}{\delta}\big)(1+{\scriptscriptstyle \mathcal{O}_\delta}(1)).
$$

\end{document}